\documentclass{article}
\usepackage[preprint]{neurips_2026}
\usepackage[T1]{fontenc}
\usepackage[utf8]{inputenc}

\usepackage{amsmath,amsfonts,bm}

\def\eqref#1{equation~\ref{#1}}
\def\1{\bm{1}}

\DeclareMathAlphabet{\mathsfit}{\encodingdefault}{\sfdefault}{m}{sl}
\SetMathAlphabet{\mathsfit}{bold}{\encodingdefault}{\sfdefault}{bx}{n}

\usepackage{amsmath,amssymb,amsthm}
\usepackage{graphicx}
\usepackage{booktabs}
\usepackage{array}
\usepackage{xcolor}
\usepackage{colortbl}
\usepackage{pifont}
\usepackage{xspace}
\usepackage{caption}
\usepackage{wrapfig}
\usepackage{fvextra}
\usepackage{placeins}
\usepackage{url}
\usepackage{hyperref}

\usepackage{fvextra}
\usepackage{tcolorbox}
\tcbuselibrary{skins,breakable}
\usepackage{needspace}

\definecolor{mrPromptInk}{HTML}{293D36}
\definecolor{mrPromptBorder}{HTML}{BECBC3}
\definecolor{mrPromptPaper}{HTML}{FAFBF9}
\definecolor{mrPromptHeader}{HTML}{E5EFD9}

\newcounter{mrprompt}
\renewcommand{\themrprompt}{\arabic{mrprompt}}
\tcbset{
  mr prompt/.style={
    enhanced,breakable,
    colback=mrPromptPaper,colframe=mrPromptBorder,
    colbacktitle=mrPromptHeader,coltitle=mrPromptInk,
    boxrule=0.45pt,titlerule=0pt,arc=1.2mm,
    left=3mm,right=3mm,top=2mm,bottom=2mm,
    toptitle=2mm,bottomtitle=2mm,
    boxsep=0pt,before skip=8pt,after skip=10pt,
    fonttitle=\normalfont\small\bfseries,
    pad at break*=2mm,lines before break=4,
  },
  casebox/repro/.style={},
  casebox/interface/.style={},
  casebox/search/.style={colbacktitle=blue!7!white},
  casebox/review/.style={colbacktitle=mrPromptHeader},
  casebox/memory/.style={colbacktitle=orange!10!white},
  casebox/clause/.style={colbacktitle=black!6!white},
}

\newenvironment{casebox}[3]{%
  \VerbatimEnvironment
  \refstepcounter{mrprompt}%
  \begin{tcolorbox}[mr prompt,casebox/#1,
    title={Prompt \themrprompt\enspace\textbar\enspace #2},
    label={#3},
    title after break={Prompt \themrprompt\enspace\textbar\enspace continued}]
  \begin{Verbatim}[
    formatcom={\renewcommand{\textemdash}{{\rmfamily\char124}}},
    fontsize=\fontsize{8}{10}\selectfont,
    breaklines=true,breakafter={,/_.},
    breaksymbolleft={},breakindent=1em,
    xleftmargin=0pt,xrightmargin=0pt]%
}{%
  \end{Verbatim}
  \end{tcolorbox}%
}

\newcommand{\mrpromptsource}[1]{%
  \\[2pt]{\normalfont\fontsize{7.2}{9}\selectfont\ttfamily #1}%
}

\hypersetup{
  hidelinks,
  pdftitle={MERID: Multimodal Exploration via Recursive Self-Improvement Agents for Major Depression Analysis},
  pdfauthor={Lei Liu, Zhaokang Liang, Qingcheng Zeng, Chenda Duan, Lu Mi, Zhen Tan, Tianyu Liu}
}

\newtheorem{mrproposition}{Proposition}

\newtheorem{mrassumption}{Assumption}

\theoremstyle{definition}
\newtheorem{mrdefinition}{Definition}

\theoremstyle{plain}

\newcommand{\mrinputsection}[1]{%
  \IfFileExists{#1.tex}
    {\input{#1.tex}}
    {\input{section_folders/#1.tex}}%
}

\DeclareRobustCommand{\method}{%
  {\fontfamily{lmtt}\selectfont\bfseries MERID}\xspace%
}
\title{MERID: Multimodal Exploration via Recursive Self-Improvement Agents for Major Depression Analysis}

\newcommand{\paperauthors}{%
  \parbox{\dimexpr\textwidth-2\tabcolsep\relax}{%
    \centering\normalfont\small
    \setlength{\parskip}{0pt}%

    \mbox{Lei Liu$^{1,*}$} \quad
    \mbox{Zhaokang Liang$^{2,*}$} \quad
    \mbox{Qingcheng Zeng$^{3}$} \quad
    \mbox{Chenda Duan$^{4}$} \\[0.4em]
    \mbox{Lu Mi$^{5}$} \quad
    \mbox{Zhen Tan$^{6}$} \quad
    \mbox{Tianyu Liu$^{1,5, \dagger}$}

    \par\vspace{0.6em}

    {\footnotesize
      \mbox{$^{1}$Yale University} \quad
      \mbox{$^{2}$Zhejiang University} \\[0.2em]
      \mbox{$^{3}$Northwestern University} \quad
      \mbox{$^{4}$University of California, Los Angeles} \\[0.2em]
      \mbox{$^{5}$Tsinghua University} \quad
      \mbox{$^{6}$Stevens Institute of Technology}
      \par
    }

    \vspace{0.4em}
    {\footnotesize $^{*}$Equal contribution.\quad$^{\dagger}$Corresponding author.\par}
  }%
}

\author{\paperauthors}
\date{}

\makeatletter
\let\mrOriginalTopTitleBar\@toptitlebar
\renewcommand{\@toptitlebar}{%
  \hbox to \textwidth{%
    \hfil
    \includegraphics[width=0.96\textwidth]{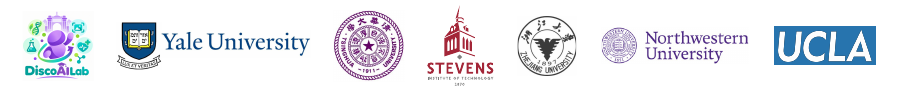}%
    \hfil
  }%
  \vskip 8pt
  \mrOriginalTopTitleBar
}
\makeatother

\begin{document}

\maketitle

\begin{abstract}
  Major depressive disorder (MDD) severely impacts daily activities and quality of life. Detecting MDD involves multimodal data, such as interview recordings and sensor measurements. This is particularly challenging, as these heterogeneous modalities often demand distinct, customized prediction pipelines. Existing efforts to address this challenge have explored both manually engineered multimodal architectures and agent-assisted pipeline development. Despite their progress, it remains challenging to autonomously revise pipelines based on experimental feedback and carry verified improvements forward into subsequent designs. To this end, we propose \underline{M}ultimodal \underline{E}xploration via \underline{R}ecursive Self-\underline{I}mprovement Agents for Major \underline{D}epression Analysis (\method{}). The framework develops depression pipelines through experience-based recursive self-improvement (RSI). Grounded State Construction (GSC) grounds experience by aligning multimodal records with subject-level depression targets. Coupled Pipeline Exploration (CPE) jointly modifies representations, fusion, and predictors to build successor pipelines for classification and severity estimation. Evidence-Guided Evolution (EGE) guides revisions through feedback and verifies gains under uncertainty in small depression cohorts before inheritance. Extensive experiments on depression benchmarks show that \method{} achieves the best results on multiple tasks compared with multimodal and agent-based baselines. Further analysis highlights the value of acoustic and linguistic cues for depression detection. Our code is available at \url{https://github.com/DiscoAILab/MERID.git}
% \url{https://anonymous.4open.science/r/MERID}
\begin{center}
\begin{minipage}{\linewidth}
\centering
\includegraphics[width=\linewidth]{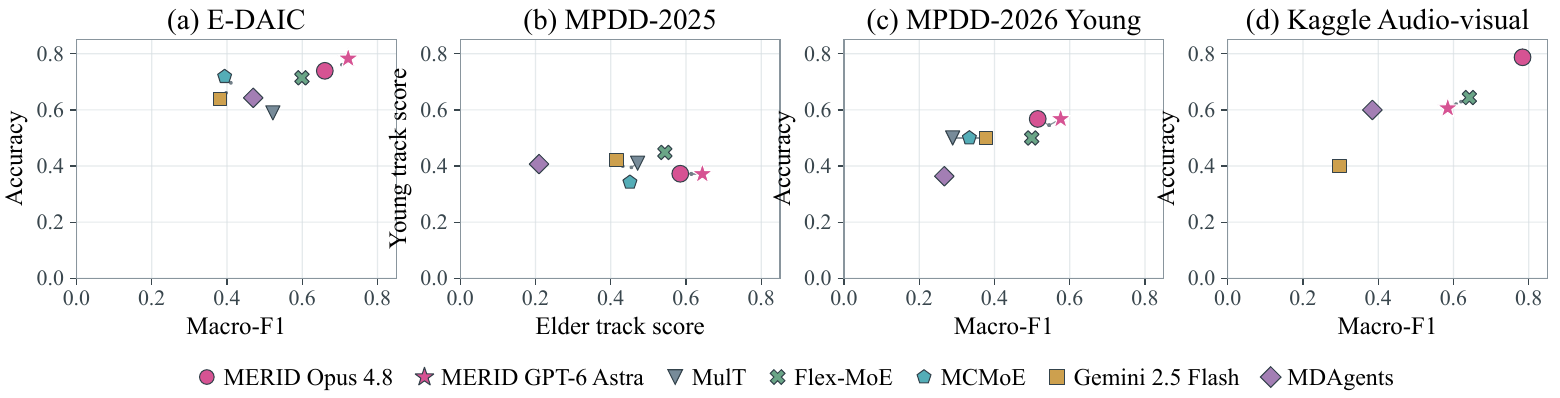}
\captionsetup{hypcap=false,type=figure,position=bottom,skip=3pt}
% \caption{Performance on depression benchmarks and Kaggle emotion proxy task.}
\caption{\method{} achieves leading performance against state-of-the-art methods on multimodal depression benchmarks. Markers show paired scores in each panel.}
\label{fig:metric-profiles}
\label{fig:performance-teaser}
\vspace{-1em}
\end{minipage}
\end{center}

\end{abstract}

\section{Introduction}

Depression, as a representative example of mental health problem, is a complex clinical construct whose observable manifestations are distributed across heterogeneous and often weak behavioral signals~\citep{shaffer2022allostasis}. Accordingly, multimodal detection combines speech, facial behavior, and language, each reflecting distinct aspects of the disorder~\citep{cohn2018multimodal,ringeval2019avec}. Slower facial movements, for instance, can reflect psychomotor slowing, while reduced vocal prosody and longer pauses reveal altered speech behavior~\citep{kacem2018detecting,alpert2001reflections}. However, these modalities differ substantially in temporal resolution and feature structure, requiring prediction pipelines with tailored preprocessing, fusion, and prediction components to map multimodal observations to a diagnostic category or symptom-severity score~\citep{perezrua2019mfas}. Designing such pipelines is inherently iterative: each experiment evaluates a particular combination of representations, fusion strategies, and predictors, and the resulting evidence guides subsequent design refinements~\citep{guo2024dsagent,lu2026aiscientist,trirat2025automlagent}.

\begin{figure}[t]
\centering
\includegraphics[width=0.9\linewidth]{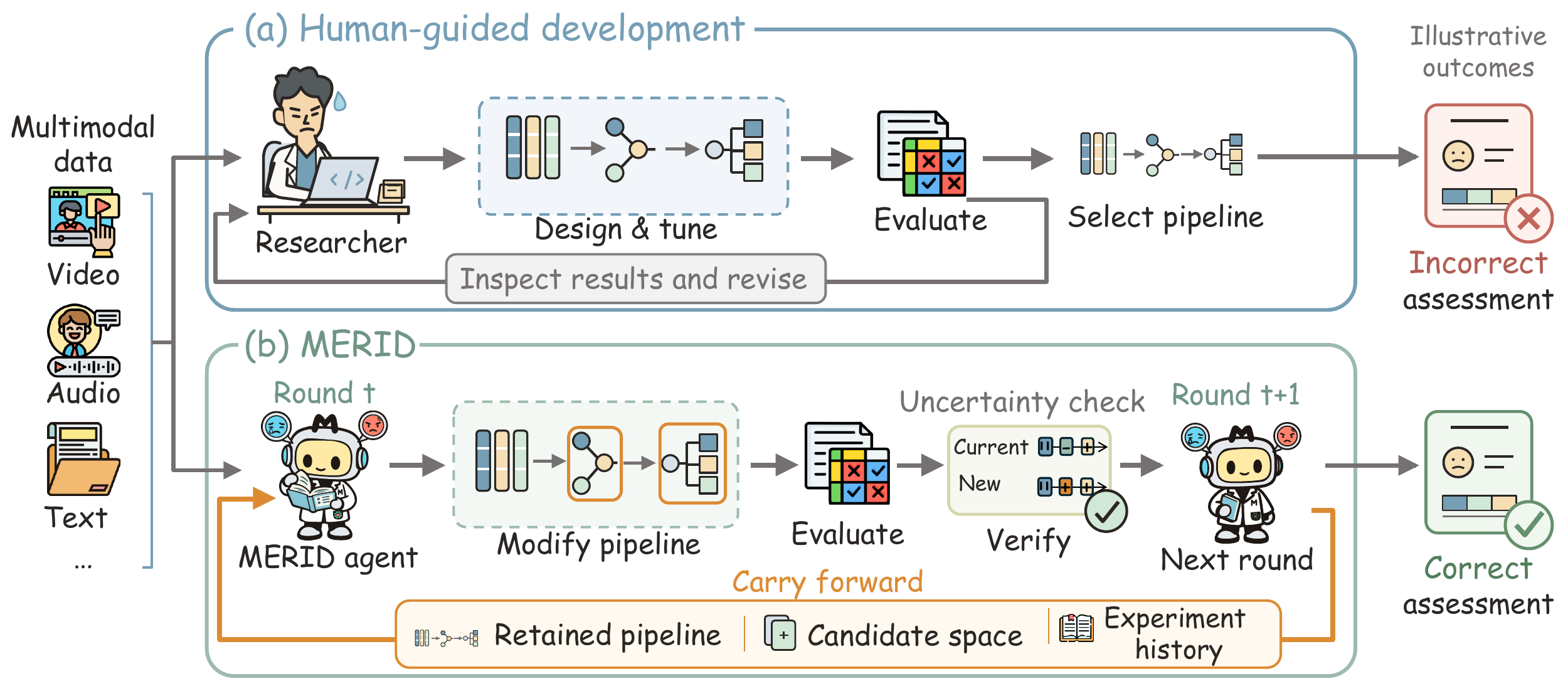}
\caption{Conventional approaches require repeated experiments and manual pipeline tuning for depression detection. \method{} uses experimental feedback to revise pipelines and verify improvements. Retained pipelines and experience guide recursive self-improvement for depression assessment.}
\label{fig:teaser}
\vspace{-2em}
\end{figure}

Existing research relevant to multimodal depression modeling spans three levels of automation. At the model level, most work relies on manually designed multimodal architectures that target specific design choices such as cross-modal interaction~\citep{liu2025depformer}, personality-aware fusion~\citep{shi2025hope}, and temporal aggregation~\citep{liu2025msfats}; \citet{perezrua2019mfas} further extend this by posing the fusion design itself as a neural architecture search problem. At the pipeline level, LLM-based agents automate broader development workflows: \citet{trirat2025automlagent} and \citet{guo2024dsagent} coordinate pipeline construction and iteratively refine solutions through experimental feedback, while \citet{lu2026aiscientist} chain hypothesis generation, implementation, and experimentation into a single automated loop. At the agent level, recent work treats the agent's own design as an optimization target. \citet{hu2025adas} generate new agent implementations from an evolving archive of prior designs, \citet{zhang2025darwingodel} iteratively rewrite the agent's own codebase and validate each modification empirically, and recent work on recursive self-improvement formalizes how verified gains can be inherited across successive improvement rounds~\citep{duan2026rsi,zhu2026rsiagent}.

Despite these advances, turning experimental feedback into reliable recursive pipeline improvement remains challenging in multimodal depression modeling. Executable workflows do not by themselves ensure comparable experiments: in depression data, changes in how recordings are organized can alter subject-target correspondence, distorting candidate comparisons and the feedback used in later revisions~\citep{saeb2017usecase,danylenko2026pitfalls}. Existing search methods cover only part of the revision space, typically optimizing fusion while holding other pipeline stages fixed~\citep{perezrua2019mfas}, even though interactions among stages can determine the value of a proposed change. Furthermore, a higher development score does not necessarily justify retaining a revision, as repeated selection on small depression cohorts can favor sampling fluctuations over genuine improvements~\citep{varoquaux2018crossvalidation,cawley2010overfitting,dwork2015reusable}. Reliable recursive self-improvement therefore requires comparable experience, coordinated pipeline revisions, and uncertainty-aware inheritance decisions. This motivates the central question of this work:
\vspace{-2mm}
\begin{quote}
\itshape
How can experimental experience drive reliable recursive self-improvement for multimodal depression detection?
\end{quote}
\vspace{-2mm}
To answer this question, we propose \underline{M}ultimodal \underline{E}xploration via \underline{R}ecursive Self-\underline{I}mprovement Agents for Major \underline{D}epression Analysis (\method{}). The framework organizes pipeline development as an experience-based RSI loop that combines coupled exploration with uncertainty-aware inheritance. Grounded State Construction (GSC) preserves subject-target-record correspondence across recipes, providing a consistent basis for comparing revisions. Coupled Pipeline Exploration (CPE) treats representation, fusion, and predictor choices as a joint improvement target, evaluating combinations of newly proposed and retained components under a shared protocol. Evidence-Guided Evolution (EGE) uses experimental feedback to guide revisions and an acceptance margin based on subject-level score uncertainty to verify pipeline replacements. Across rounds, the retained pipeline and expanded candidate space are carried forward with updated experience to guide subsequent proposals. Experiments across depression benchmarks show that \method{} achieves the best results on multiple tasks compared with multimodal and agent-based baselines. Our contributions are:

\hangindent=1.1em\hangafter=1\noindent\ding{182}\textbf{ \emph{New Perspective}}: We formulate multimodal depression pipeline development as an experience-based RSI process, using experimental feedback to guide autonomous revisions and verified improvements to inform subsequent designs. The retained pipeline, expanded candidate space, and accumulated experience jointly provide the basis for the next round of exploration.

\hangindent=1.1em\hangafter=1\noindent\ding{183}\textbf{ \emph{Novel Methodology}}: We propose \method{}, integrating correspondence invariance, coupled pipeline exploration, and uncertainty-aware inheritance to evaluate coordinated pipeline modifications under a shared protocol across successive RSI rounds on fixed validation splits. An explicit uncertainty-scaled acceptance margin governs pipeline replacement, and we derive a retention-margin bound for the gap to the best evaluated development score under additive revisions.

\hangindent=1.1em\hangafter=1\noindent\ding{184}\textbf{ \emph{Empirical Validation}}: We validate \method{} on multimodal depression benchmarks, demonstrating leading performance on multiple classification and symptom-severity tasks. Ablation and further studies characterize module contributions, hyperparameter sensitivity, and the cost of recursive improvement. Further analyses link retained pipelines to clinically meaningful depression cues, including facial dynamics associated with psychomotor slowing, supporting \method{}'s evolution toward pipeline designs aligned with clinical understanding of depression.
\vspace{-1em}

% \vspace{-1mm}
\section{Related Work}
\label{sec:related}
\vspace{-3mm}

\textbf{Multimodal Depression Detection.} Multimodal depression detection combines behavioral signals with clinical or personal context~\citep{cohn2018multimodal}. General methods explore cross-modal interactions~\citep{tsai2019multimodal,nagrani2021attention}, quality-aware fusion~\citep{zhang2023qmf,cao2024pdf}, and expert selection~\citep{yun2024flexmoe}; depression-specific designs incorporate temporal fusion~\citep{liu2025msfats}, personality~\citep{shi2025hope}, and symptom semantics~\citep{nerella2026symptom}. Benchmarks provide interviews~\citep{gratch2014distress,ringeval2019avec}, electroencephalography and speech~\citep{cai2022modma}, or personality information~\citep{fu2025mpdd}. Comparable experiments also require consistent subject--target correspondence as recipes change. \method{} preserves this correspondence through Grounded State Construction (GSC), grounding subsequent revisions in comparable feedback across different input representations and fusion strategies.

\textbf{Automated Modeling and Clinical Agents.} Research in this area spans configuration search, agent-driven pipeline development, and case-level clinical assessment. Auto-WEKA jointly selects algorithms and hyperparameters~\citep{thornton2013autoweka}, while MFAS searches multimodal fusion structures~\citep{perezrua2019mfas}. For pipeline generation, DS-Agent reuses historical cases~\citep{guo2024dsagent}, and AutoML-Agent coordinates planning and verification~\citep{trirat2025automlagent}. In clinical applications, ClinPreAI develops postpartum depression predictors through planning, coding, and debugging~\citep{palacios2026clinpreai}, while DepressionAgent revises case-level assessments through multimodal deliberation~\citep{zhu2026depressionagent}. Because representation, fusion, and predictor choices interact, fixed companion components can obscure gains from coordinated changes. \method{} addresses these dependencies through Coupled Pipeline Exploration (CPE), evaluating complete combinations of recipes and predictor configurations under a shared protocol to guide revisions.

\textbf{Recursive Self-Improvement and Experimental Feedback.} Recursive self-improvement (RSI) links experience to inherited changes in the system's improvement process~\citep{duan2026rsi}. Experience retention includes verbal reflections~\citep{shinn2023reflexion}, tertiary memory for depression diagnosis dialogues~\citep{lan2024amc}, and reusable experimental skills~\citep{liu2026labagentcustomizeresearchhubs}. ADAS searches agent designs~\citep{hu2025adas}, the Darwin G\"odel Machine modifies and evaluates its own code~\citep{zhang2025darwingodel}, and the AI Scientist automates research through experimental feedback~\citep{lu2026aiscientist}. In small depression cohorts, sampling fluctuations can inflate scores and steer later revisions toward unstable candidates~\citep{cawley2010overfitting,varoquaux2018crossvalidation}. \method{} therefore uses Evidence-Guided Evolution (EGE) to translate candidate rankings and prediction diagnostics into pipeline revisions and scale inheritance thresholds with subject-level uncertainty.

\vspace{-3mm}
\section{The Proposed \method{}}
\label{sec:method}
\label{sec:method:problem}
\vspace{-3mm}

\textbf{Problem formulation.} We formulate multimodal depression modeling as an experience-based recursive self-improvement (RSI) process~\citep{duan2026rsi}. Each round evaluates pipeline modifications and uses their outcomes to guide the next round. The successor state retains the candidate space, selected pipeline, and accumulated experience. Corpus $\mathcal{D}$ contains $N$ subjects and modality set $\mathcal{M}$. Subject $i$ has observations $X_i^{(m)}$ from available modalities $m\in\mathcal{M}_i\subseteq\mathcal{M}$. The target $y_i$ is a diagnostic category or symptom-severity score. A pipeline $\theta=(r,c)$ combines representation--fusion recipe $r$ with predictor configuration $c$. Before round $t$, system state is $\mathcal{S}_t=(\mathcal{A}_{\mathcal{D}},\mathcal{Q}_t,I_t,\mathcal{L}_t,\mathcal{K}_t)$. It contains checked adapter $\mathcal{A}_{\mathcal{D}}$, candidate space $\mathcal{Q}_t$, incumbent pipeline $I_t$, revision history $\mathcal{L}_t$, and research memory $\mathcal{K}_t$. History records prior outcomes, while memory stores revisable lessons linked to evidence. Experimental experience $\mathcal{E}_t$ guides the proposed modification, its verification, and the construction of the next state:
\begin{equation}
\begin{aligned}
\Delta_t &= \operatorname{Improver}(\mathcal{S}_t,\mathcal{E}_t),\\
(v_t,\mathcal{E}_{t+1}) &= \operatorname{Verifier}(\mathcal{S}_t,\Delta_t;\Pi,\mu),\\
\mathcal{S}_{t+1} &= \operatorname{Inherit}(\mathcal{S}_t,\Delta_t,v_t,\mathcal{E}_{t+1}),
\end{aligned}
\label{eq:selection}
\end{equation}
where $\Delta_t$ is a proposed modification and $v_t$ records validity and replacement decisions. Under the fixed development protocol $\Pi$, the verifier evaluates modifications using task metric $\mu$ and updates experience $\mathcal{E}_{t+1}$. Higher scores are preferred, so root mean squared error (RMSE) is negated for selection. The objective is a high-quality retained pipeline $I_T$ at termination round $T$ within $B$ candidate evaluations, including initialization and failed evaluation attempts. Large language model (LLM) weights, role definitions, and the evaluation protocol remain fixed within a run.

\subsection{Framework Overview}
\label{sec:method:overview}
\begin{figure}[t]
\centering
\includegraphics[width=0.9\linewidth]{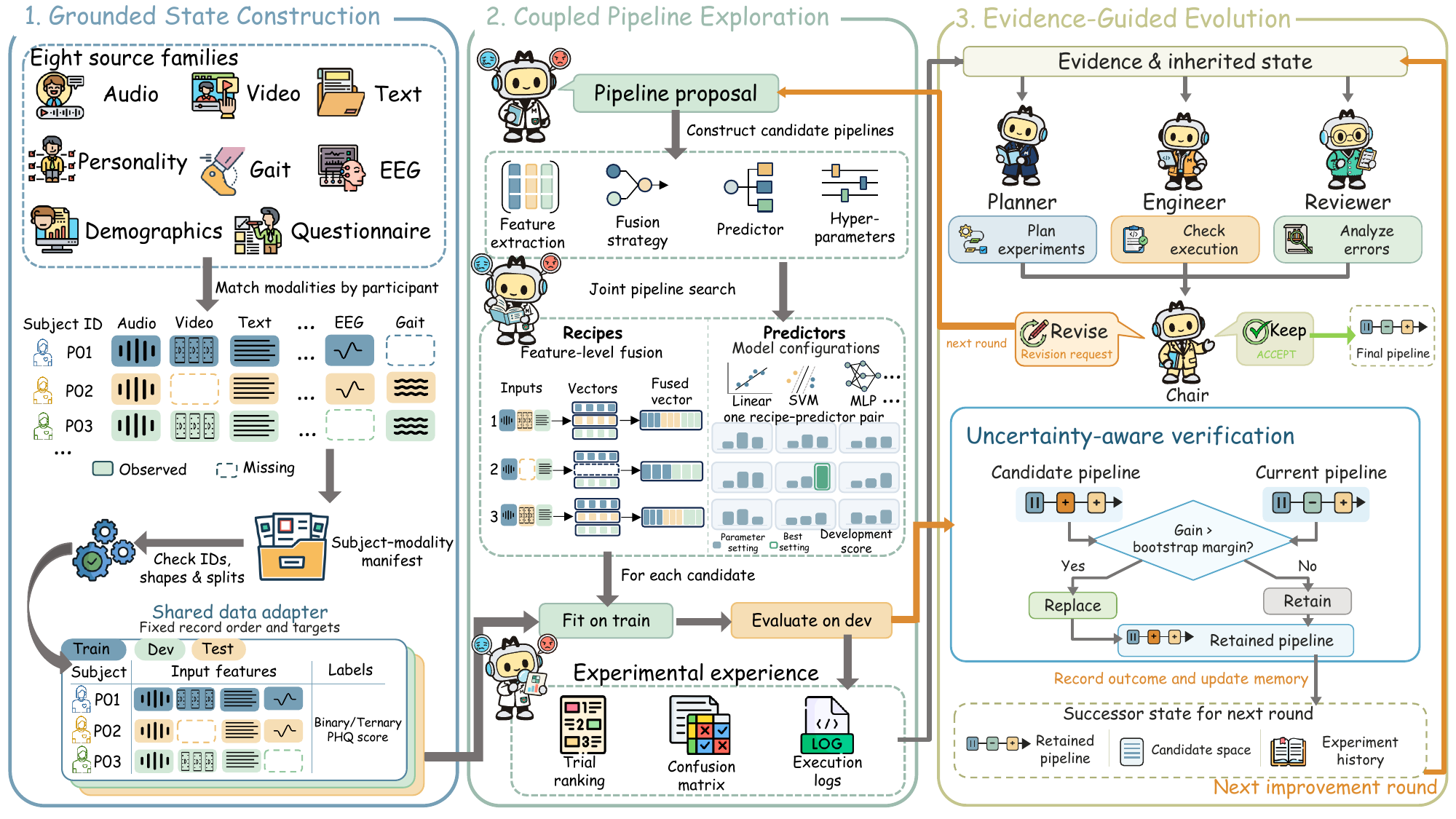}
\caption{The framework of the proposed \method{}. GSC aligns experimental records, and CPE jointly explores pipeline components. EGE uses feedback to guide revisions and verifies gains under uncertainty. The retained pipeline and updated experience guide the next RSI round.}
\label{fig:overview}
\vspace{-1em}
\end{figure}
Multimodal depression modeling raises three issues for RSI. \textbf{\ding{182}~Experience reliability.} Differences in record granularity can distort subject-level comparisons. \textbf{\ding{183}~Coupled modification targets.} Representation, fusion, and predictor choices interact. \textbf{\ding{184}~Retention uncertainty.} Small validation cohorts make apparent gains difficult to verify. \method{} addresses these issues through three modules (Figure~\ref{fig:overview}). \emph{GSC} preserves subject--target correspondence across recipes through a checked adapter. \emph{CPE} evaluates complete pipeline modifications and records comparable outcomes. \emph{EGE} uses these outcomes to propose revisions and verify pipeline replacements under uncertainty. The retained pipeline becomes the next round's reference, while admitted components and experience guide further exploration. This retains evidence from every evaluated pipeline decision.

\subsection{Grounded State Construction (GSC)}
\label{sec:method:adapter}

Depression assessment typically uses subject-level targets, while multimodal inputs contain records at different granularities. Different representation and fusion choices can change which records enter an evaluation or how they are aligned. If this changes their correspondence with the targets, score differences become difficult to attribute to pipeline modifications. The RSI loop uses these outcomes to guide later modifications, so inconsistent comparisons can mislead subsequent revisions. We use GSC to keep each subject's records aligned with the corresponding target while allowing feature representations to change. This provides comparable experimental evidence across RSI rounds.

We implement this alignment through a shared checked data adapter $\mathcal{A}_{\mathcal{D}}$ constructed from the inspected task dataset. Its input specification separates predictive observations from measurements used to define the target. Recipes convert variable-length sequences into fixed-width features. The first valid output establishes the fixed reference ordering and target assignments for each split. Subsequent recipes preserve this reference through the adapter's \texttt{load(recipe)} interface:
\begin{equation}
\begin{aligned}
\mathcal{A}_{\mathcal{D}}(r)
&=\bigl\{(\mathbf{X}_{r}^{(s)},\mathbf{y}^{(s)},\mathbf{u}^{(s)},\mathbf{v}^{(s)})\bigr\}_{s\in\mathcal{V}},\\
\operatorname{Id}\!\left(\mathcal{A}_{\mathcal{D}}(r)\right)
&=\operatorname{Id}\!\left(\mathcal{A}_{\mathcal{D}}(r')\right),
\end{aligned}
\label{eq:adapter-interface}
\end{equation}
where $\mathcal{V}$ contains development data splits indexed by $s$, and $r$ and $r'$ are any two admitted recipes. Matrix $\mathbf{X}_{r}^{(s)}\in\mathbb{R}^{n_s\times p_r}$ contains $n_s$ records with $p_r$ features determined by recipe $r$. The corresponding vectors $\mathbf{y}^{(s)}$, $\mathbf{u}^{(s)}$, and $\mathbf{v}^{(s)}$ contain targets, subject identifiers, and row identifiers, respectively, each with length $n_s$ and aligned with the matrix rows. The operator $\operatorname{Id}$ extracts these ordered vectors for each split. The subject identifiers also determine validation groups. Execution checks verify shapes, finite values, and alignment, while recipe probes check expected feature-width changes to detect ignored configurations. Failed checks trigger repair before the adapter is retained or reused. These checks establish a consistent evaluation basis for subsequent pipeline exploration.

\subsection{Coupled Pipeline Exploration (CPE)}
\label{sec:method:proposal}
We have a consistent basis for comparing pipelines through GSC. However, component effects in multimodal depression detection depend on their specific combination. Changes in representation and fusion alter the input features and affect the predictor's feature selection and regularization. A component with little benefit in one combination may become useful with new components in later RSI rounds. To address this, we introduce CPE to treat the complete pipeline as the RSI improvement target. CPE expands the candidate space by combining new and retained components. It evaluates these combinations under a shared protocol to guide subsequent revisions.

CPE implements exploration by pairing recipes with predictor configurations. A proposer initializes recipe set $\mathcal{R}_0$ and predictor-configuration set $\mathcal{C}_0$ from the inspected data and task. Recipes specify sources, feature operations, aggregation, and fusion. Predictor configurations determine model families, feature selection, and hyperparameters. Retrieved references can inform proposals, and generated classification estimators pass execution checks before admission. The candidate space is $\mathcal{Q}_0=\mathcal{R}_0\times\mathcal{C}_0$. For a subject represented by one aligned record, pipeline $\theta=(r,c)\in\mathcal{Q}_0$ predicts
\begin{equation}
\widehat y_i=h_c\!\left(F_r(\{\mathbf{z}_{i,r}^{(m)}\}_{m\in\mathcal{M}_r})\right),
\label{eq:fusion-levels}
\end{equation}
where $\mathbf{z}_{i,r}^{(m)}$ is subject $i$'s processed vector for modality $m$, $\mathcal{M}_r$ contains the modalities selected by recipe $r$, and $F_r$ is its fusion function. Predictor $h_c$ maps fused features to the predicted category or symptom score $\widehat y_i$. For multi-record tasks, predictions are produced per record, with records grouped by subject during validation. In later RSI rounds, revision requests expand the recipe and predictor sets. CPE pairs new recipes with retained predictors and new predictors with retained recipes, selecting untried pairs within the remaining candidate budget:
\begin{equation}
\begin{aligned}
\widetilde{\mathcal{Q}}_{t+1}
&=(\mathcal{R}_t\cup\Delta\mathcal{R}_t)\times(\mathcal{C}_t\cup\Delta\mathcal{C}_t),\\
\boldsymbol{\Theta}_t&=(\theta_{t,j})_{j=1}^{m_t},
\quad \theta_{t,j}\in\widetilde{\mathcal{Q}}_{t+1}\setminus\mathcal{H}_t,
\quad m_t\leq B-b_t,
\end{aligned}
\label{eq:coupled-expansion}
\end{equation}
where $\widetilde{\mathcal{Q}}_{t+1}$ is the expanded candidate space before evaluation under the fixed protocol begins. Modification $\Delta_t=(\Delta\mathcal{R}_t,\Delta\mathcal{C}_t)$ adds to the current recipe set $\mathcal{R}_t$ and predictor-configuration set $\mathcal{C}_t$, with empty additions for unchanged sets. Set $\mathcal{H}_t$ contains pairs attempted before round $t$, and $b_t$ counts prior candidate evaluations, including initialization and failed attempts. Sequence $\boldsymbol{\Theta}_t$ contains $m_t$ distinct candidates $\theta_{t,j}$ indexed by evaluation order $j$. To make these trials comparable, all candidates share fixed validation subjects and protocol $\Pi=\{(T_k,V_k)\}_{k=1}^{K}$. Here $K$ is the number of evaluation splits, with $k$ indexing the fitting and validation row sets $T_k$ and $V_k$. A designated development split gives $K=1$. Otherwise, fixed subject-disjoint folds within training data keep each subject's records together. The verifier computes development score $\widehat{S}_{\mu}(\theta;\mathcal{D})$ as
\begin{equation}
\widehat{S}_{\mu}(\theta;\mathcal{D})
=\mu\!\left(\bigoplus_{k=1}^{K}\mathbf{y}_{V_k},
\bigoplus_{k=1}^{K}\widehat{\mathbf{y}}_{\theta,V_k}\right),
\label{eq:joint-evaluation}
\end{equation}
where $\mathbf{y}_{V_k}$ contains the targets for validation records $V_k$, and $\widehat{\mathbf{y}}_{\theta,V_k}$ contains their predictions from fitting $\theta$ on $T_k$. The operator $\bigoplus$ concatenates matching targets and predictions across splits before applying metric $\mu$. Reference candidates are freshly fitted under the same protocol. Each trial records its configuration, predictions, score, runtime, and execution errors. Initialization supplies incumbent $I_0$ and experience $\mathcal{E}_0$. In later RSI rounds, search rankings, coverage, and prediction diagnostics form experience $\mathcal{E}_{t+1}$ for EGE to guide retention and further exploration.

\subsection{Evidence-Guided Evolution (EGE)}
\label{sec:method:retro}
We establish a consistent basis for pipeline evaluation through GSC. CPE jointly explores interacting components and provides feedback on complete pipelines. However, these results must guide modifications and inheritance decisions. In multimodal depression detection, similar overall scores can hide different errors across subjects. Limited labeled data make apparent gains uncertain. During RSI, each accepted pipeline becomes the reference for later comparisons, while rejected candidates can provide useful experience. To address these challenges, we introduce EGE to guide modifications using experimental feedback and account for evaluation uncertainty in pipeline replacement. The retained pipeline and updated experience guide the next round~\citep{duan2026rsi}.

Building on CPE's experimental feedback, EGE uses agent review to guide proposals and measured gains to determine pipeline replacement. At round $t$, experimental experience $\mathcal{E}_t$, revision history $\mathcal{L}_t$, and active research memory $\mathcal{K}_t$ guide modifications to incumbent $I_t$. The Chair combines role reports into a decision that guides the proposer. For additive revisions, this process is
\begin{equation}
\begin{aligned}
z_t^a&=\operatorname{Agent}_a(\mathcal{E}_t,\mathcal{I}_t^a,\mathcal{L}_t,\mathcal{K}_t),
\quad a\in\{\mathrm{P},\mathrm{E},\mathrm{R}\},\\
d_t&=\operatorname{Chair}(\{z_t^a\}_a,\mathcal{L}_t,\mathcal{K}_t),\\
\Delta_t&=\operatorname{Proposer}(\chi_{\mathcal{D}},I_t,d_t,\mathcal{K}_t),
\end{aligned}
\label{eq:review-decision}
\end{equation}
where $a$ indexes review roles, $\mathcal{I}_t^a$ is role $a$'s evidence packet, and $z_t^a$ is its report. The Planner ($\mathrm{P}$), Engineer ($\mathrm{E}$), and Reviewer ($\mathrm{R}$) examine search coverage, execution, and subject-level errors, respectively. Their bounded read-only checks use packets programmatically excluding test fields. Context $\chi_{\mathcal{D}}$ supplies the inspected data profile and run constraints. Decision $d_t$ ends review with $\mathrm{ACCEPT}$ or issues a $\mathrm{REVISE}$ request. The proposer generates $\Delta_t$ only for $\mathrm{REVISE}$, using the request and memory to add recipes or predictors. Recipe operations must be supported by the fixed adapter. CPE expands the space through~\eqref{eq:coupled-expansion} and evaluates candidates under the shared protocol. Verified source exclusions invalidate dependent candidates and trigger reselection among valid candidates before further evaluation.\phantomsection\label{sec:method:selection} EGE compares each valid candidate's gain with an acceptance margin based on the incumbent's score variation. $v_{t,j}\in\{0,1\}$ records whether candidate $j$ completes a valid evaluation. Gain $g_{t,j}$ is computed only when $v_{t,j}=1$. Verification applies
\begin{equation}
\begin{aligned}
g_{t,j}&=\widehat{S}_{\mu}(\theta_{t,j};\mathcal{D})-\widehat{S}_{\mu}(I_{t,j-1};\mathcal{D}),\\
\tau_{t,j-1}&=\max\!\left\{\epsilon,\lambda\widehat{\sigma}_{\mathrm{boot}}(I_{t,j-1})\right\},\\
I_{t,j}&=\begin{cases}
\theta_{t,j}, & v_{t,j}=1\ \text{and}\ g_{t,j}>\tau_{t,j-1},\\
I_{t,j-1}, & \text{otherwise}.
\end{cases}
\end{aligned}
\label{eq:method-retention}
\end{equation}
Here $I_{t,j}$ is the incumbent after candidate $j$. Additive revisions start with $I_{t,0}=I_t$. If a source exclusion invalidates the incumbent, the system reselects a valid $I_{t,0}$. The acceptance margin $\tau_{t,j-1}$ uses uncertainty multiplier $\lambda\geq0$, numerical tolerance $\epsilon=10^{-12}$, and bootstrap standard deviation $\widehat{\sigma}_{\mathrm{boot}}$. Resampling keeps each subject's records and saved predictions together, and recomputes it after each replacement. Later candidate comparisons use the updated bootstrap estimate. Under additive revisions with stored scores, Proposition~\ref{prop:mrtheory-margin} bounds the gap between the retained score and the best evaluated score by the largest margin. Each revision request and its outcome are stored in round record $\ell_t$, even when the incumbent remains unchanged. When research-memory learning is enabled, EGE uses this record and updated experience $\mathcal{E}_{t+1}$ to distill conditional lessons for subsequent RSI revisions and reconcile them with research memory~\citep{zhu2026rsiagent}:
\begin{equation}
\begin{aligned}
\mathcal{U}_t&=\operatorname{Distill}(\mathcal{E}_{t+1},\ell_t,\mathcal{K}_t),\\
\mathcal{K}_{t+1}&=\operatorname{Reconcile}(\mathcal{K}_t,\mathcal{U}_t),
\end{aligned}
\label{eq:memory-revision}
\end{equation}
where $\mathcal{U}_t$ contains lesson drafts from round $t$, each specifying conditions, interpretation, a next experiment, and supporting evidence. Reconciliation adds, revises, or retires entries as evidence supports or challenges earlier lessons. Updates must pass structural and evidence-reference checks before changing memory. If these checks fail, the previous bank is preserved. Memory starts at $\mathcal{K}_0=\varnothing$, retains prior versions, and holds at most 16 active entries. The two bounded LLM calls reuse recorded evidence without additional candidate evaluations. This preserves a traceable record of memory changes across rounds for later review. \phantomsection\label{sec:method:provenance} EGE then carries the updated memory into the successor state alongside the retained pipeline, expanded candidate space, and revision history. For additive revisions, the resulting state and next proposal, if the loop continues, are
\begin{equation}
\begin{aligned}
\mathcal{Q}_{t+1}&=\widetilde{\mathcal{Q}}_{t+1},
\qquad \mathcal{L}_{t+1}=\mathcal{L}_t\mathbin{\Vert}\ell_t,\\
\mathcal{S}_{t+1}&=(\mathcal{A}_{\mathcal{D}},\mathcal{Q}_{t+1},I_{t,m_t},\mathcal{L}_{t+1},\mathcal{K}_{t+1}),\\
\Delta_{t+1}&=\operatorname{Improver}(\mathcal{S}_{t+1},\mathcal{E}_{t+1}),
\end{aligned}
\label{eq:state-inheritance}
\end{equation}
where $\Vert$ appends $\ell_t$ to the history sequence initialized as $\mathcal{L}_0=()$, and $I_{t,m_t}$ becomes incumbent $I_{t+1}$ for subsequent verification. Admitted components remain available for new combinations even when the incumbent is unchanged, while revision history and updated memory inform the next proposal. Later proposals can use recorded experimental evidence from candidates that did not replace the incumbent. The loop returns $I_T$ when the Chair ends review or the evaluation or review-round budget is exhausted, with all accepted evidence retained in the final run record for auditing.

\providecolor{mrHeader}{RGB}{232,238,242}
\providecolor{mrOurs}{RGB}{220,239,249}
\providecolor{mrBand}{RGB}{245,247,248}
\section{Experiments}
\label{sec:experiments}
\begingroup
\setlength{\textfloatsep}{8pt plus 2pt minus 2pt}

\begin{table*}[t]
\caption{Performance comparison on depression benchmarks and supplementary tasks, with five-seed means for \method{} using Claude Opus 4.8 and GPT-6 Astra. Best and second-best results are \textbf{bold} and \underline{underlined}. See Appendices~\ref{app:seed_study} and~\ref{app:base_model_astra} for per-seed results.}
\vspace{-1em}
\label{tab:main-results}
\centering
\begingroup
\definecolor{mrMainHeader}{RGB}{229,239,217}
\definecolor{mrMainOurs}{RGB}{255,245,215}
\definecolor{mrMainBand}{RGB}{243,243,243}
\setlength{\aboverulesep}{0pt}
\setlength{\belowrulesep}{0pt}
\fontsize{8.5}{10}\selectfont
\setlength{\tabcolsep}{2.5pt}
\renewcommand{\arraystretch}{0.979125}
\begin{tabular}{>{\raggedright\arraybackslash}p{.22\linewidth}*{6}{>{\centering\arraybackslash}p{\dimexpr(.68\linewidth-16\tabcolsep)/6\relax}}>{\centering\arraybackslash}p{.10\linewidth}}
\toprule
\rowcolor{mrMainHeader}
Dataset & \multicolumn{2}{c}{E-DAIC} & \multicolumn{2}{c}{MPDD-2025} & \shortstack{Mental\\Health} & Kaggle & \\
\cmidrule(lr){2-3}\cmidrule(lr){4-5}\cmidrule(lr){6-6}\cmidrule(lr){7-7}
\rowcolor{mrMainHeader}
Method & \shortstack{Binary\\F1 $\uparrow$} & \shortstack{PHQ-8\\CCC $\uparrow$} & \shortstack{Elder\\Track $\uparrow$} & \shortstack{Young\\Track $\uparrow$} & \shortstack{Status\\F1 $\uparrow$} & \shortstack{A--V\\F1 $\uparrow$} & \shortstack{Avg. rank\\$\downarrow$} \\
\midrule
\rowcolor{mrMainBand}
MulT & 0.5221 & -0.0047 & 0.4549 & 0.3952 & --- & --- & 4.50 {\scriptsize (4)} \\
Flex-MoE & 0.5993 & 0.0135 & 0.5440 & \textbf{0.4483} & 0.9548 & \underline{0.6201} & 2.67 {\scriptsize (6)} \\
\rowcolor{mrMainBand}
MCMoE & 0.4105 & 0.0000 & 0.4511 & 0.3419 & \emph{n/a} & \emph{n/a} & 5.75 {\scriptsize (4)} \\
\midrule
Gemini 2.5 Flash & 0.3978 & -0.0917 & 0.4322 & 0.3995 & 0.0843 & 0.2966 & 5.33 {\scriptsize (6)} \\
\rowcolor{mrMainBand}
MDAgents & 0.4697 & 0.0520 & 0.2096 & \underline{0.4069} & 0.0607 & 0.3845 & 4.33 {\scriptsize (6)} \\
\midrule
\rowcolor{mrMainOurs}
\textbf{MERID (Opus 4.8)} & \underline{0.6768} & \textbf{0.3268} & \underline{0.6137} & 0.3724 & \textbf{0.9655} & \textbf{0.7836} & \textbf{2.00} {\scriptsize (6)} \\
\rowcolor{mrMainOurs}
\textbf{MERID (GPT-6 Astra)} & \textbf{0.7057} & \underline{0.3201} & \textbf{0.6148} & 0.3714 & \underline{0.9565} & 0.6070 & \underline{2.50} {\scriptsize (6)} \\
\midrule[1.2pt]
\rowcolor{mrMainHeader}
Dataset / cohort & \multicolumn{3}{c}{MPDD-2026 / Elder} & \multicolumn{3}{c}{MPDD-2026 / Young} & \\
\cmidrule(lr){2-4}\cmidrule(lr){5-7}
\rowcolor{mrMainHeader}
Method & \shortstack{Binary\\F1 $\uparrow$} & \shortstack{Ternary\\F1 $\uparrow$} & \shortstack{PHQ-9\\CCC $\uparrow$} & \shortstack{Binary\\F1 $\uparrow$} & \shortstack{Ternary\\F1 $\uparrow$} & \shortstack{PHQ-9\\CCC $\uparrow$} & \shortstack{Avg. rank\\$\downarrow$} \\
\midrule
\rowcolor{mrMainBand}
MulT & \textbf{0.5165} & \underline{0.2735} & 0.0000 & 0.3333 & 0.1905 & \underline{0.0760} & 3.83 \\
Flex-MoE & 0.3947 & \underline{0.2735} & 0.0000 & \underline{0.4990} & 0.2083 & 0.0000 & 4.17 \\
\rowcolor{mrMainBand}
MCMoE & 0.4103 & \underline{0.2735} & 0.0000 & 0.3333 & \underline{0.3404} & 0.0003 & 4.00 \\
\midrule
Gemini 2.5 Flash & 0.3002 & 0.1262 & 0.0154 & 0.3333 & 0.1978 & -0.0124 & 5.17 \\
\rowcolor{mrMainBand}
MDAgents & 0.2333 & 0.0988 & -0.0054 & 0.2667 & 0.1429 & -0.1121 & 7.00 \\
\midrule
\rowcolor{mrMainOurs}
\textbf{MERID (Opus 4.8)} & \underline{0.4559} & \textbf{0.2978} & \underline{0.1321} & \textbf{0.5455} & \textbf{0.4327} & \textbf{0.6050} & \textbf{1.58} \\
\rowcolor{mrMainOurs}
\textbf{MERID (GPT-6 Astra)} & 0.4377 & 0.2523 & \textbf{0.1999} & \textbf{0.5455} & \textbf{0.4327} & \textbf{0.6050} & \underline{2.25} \\
\bottomrule
\end{tabular}
\endgroup
% \vspace{1em}d
\end{table*}

\subsection{Experimental Setup}
\label{sec:experiments:setup}

\textbf{Datasets and evaluation.} We evaluate \method{} on E-DAIC~\citep{ringeval2019avec}, MPDD-2025/2026~\citep{fu2025mpdd,fu2026mpddavg}, Mental Health Multimodal tabular data with clinical scales, and Kaggle TESS/RAVDESS acted-emotion tasks~\citep{pichorafuller2020tess,livingstone2018ravdess}. Selection uses development splits or training cross-validation. Metrics are MPDD-2025 track scores, classification Macro-F1, and PHQ concordance correlation coefficient (CCC); protocols appear in Appendix~\ref{app:baseline_setup}. All reported scores follow their corresponding task protocols.

\textbf{Baselines.} We compare learning-based MulT, Flex-MoE, and MCMoE~\citep{tsai2019multimodal,yun2024flexmoe,xu2026mcmoe}, and LLM-based Gemini 2.5 Flash~\citep{google2025gemini25} and MDAgents~\citep{kim2024mdagents} with GPT-5.5~\citep{openai2026gpt55}. Input adaptations and task-specific baselines appear in Appendix~\ref{app:baseline_setup}. All baselines use dataset-specific inputs and protocols described there.

\textbf{Implementation details.} Candidates share task-specific adapters, validation splits, selection metrics, and fixed evaluation protocols within each reported experimental task. Reviews use development rankings, prediction diagnostics, and execution logs. After search, held-out-test predictions aggregate up to five selected candidates per task. Research-memory learning is evaluated in the dedicated studies in Appendix~\ref{app:revision_case}. Appendix~\ref{app:run_details} specifies eligible inputs, budgets, and review settings.

\subsection{Performance Comparison}
\label{sec:experiments:comparison}
\begin{figure}[t]
\centering
\captionsetup{position=bottom,skip=3pt}
\includegraphics[width=0.9\linewidth]{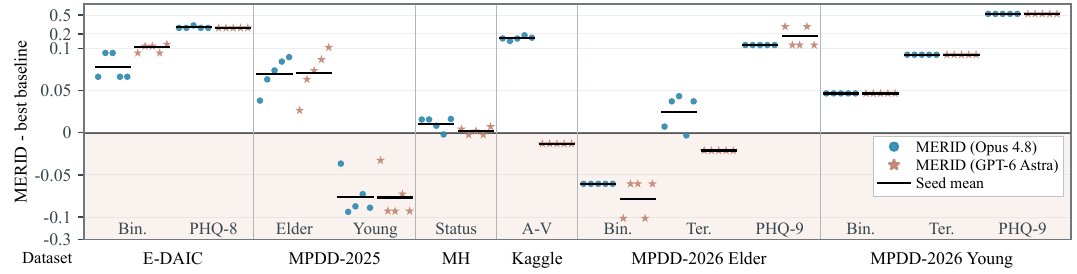}
\caption{Five-seed score differences from the best baseline in Table~\ref{tab:main-results} (markers: seeds; bars: means). Positive values favor \method{}. The axis is linear within $\pm0.1$ and logarithmic beyond.}
\label{fig:seed-robustness}
% \vspace{-1em}
\end{figure}
We compare \method{} using Opus 4.8 and GPT-6 Astra in Table~\ref{tab:main-results}.
\textit{\textbf{Obs.\ding{182} \method{} achieves strong depression detection performance.}} We first observe this advantage on E-DAIC, where binary Macro-F1 averages 0.6768 and 0.7057, respectively, exceeding Flex-MoE's 0.5993. Both configurations also lead binary and ternary classification on MPDD-2026 Young, extending these gains from depression status to severity categories.
\textit{\textbf{Obs.\ding{183} \method{}'s gains extend to symptom-severity estimation.}} Beyond classification, we obtain E-DAIC PHQ-8 CCC means of 0.3268 and 0.3201, versus 0.0520 for the best baseline, MDAgents. Similarly, MPDD-2026 Young PHQ-9 CCC reaches 0.6050 versus MulT's 0.0760. Together, these gains extend \method{}'s prediction to graded symptom burden beyond binary depression status.
\textit{\textbf{Obs.\ding{184} \method{} shows a relative advantage in Elder severity assessment.}} At the cohort level, \method{} (Opus 4.8) leads ternary classification on the MPDD-2026 Elder cohort, while \method{} (GPT-6 Astra) leads PHQ-9 CCC at 0.1999. Across the two configurations, the best binary Macro-F1 is 0.4559, compared with MulT's 0.5165. The ternary and PHQ-9 results highlight \method{}'s strength in grading depression severity and estimating continuous symptom burden.

\subsection{Ablation Studies}
\label{sec:experiments:ablation}

\begin{figure}[t]
\centering
\includegraphics[width=0.9\linewidth]{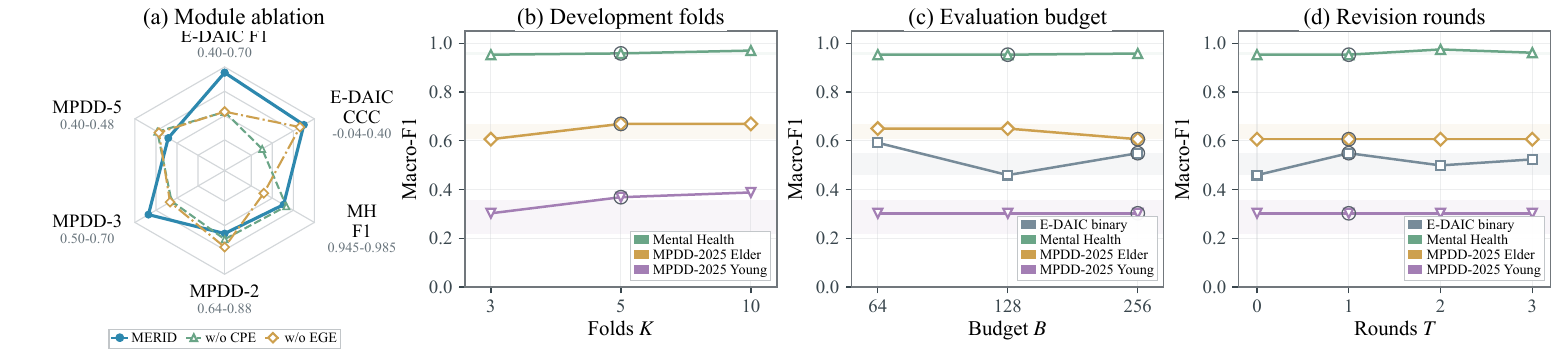}
\vspace{-0.5em}
\caption{Ablation visualization and sensitivity analyses with Opus 4.8; see Appendix~\ref{app:analysis-protocols}.}
\label{fig:ablation-analysis}
\label{fig:combined-analysis}
\par\vspace{0.5em}
\begin{minipage}{\linewidth}
\captionsetup{hypcap=false,type=table,position=bottom,skip=3pt}
\centering
\begingroup
\definecolor{mrAblHeader}{RGB}{229,239,217}
\definecolor{mrAblOurs}{RGB}{255,245,215}
\definecolor{mrAblBand}{RGB}{243,243,243}
\setlength{\aboverulesep}{0pt}
\setlength{\belowrulesep}{0pt}
\setlength{\arrayrulewidth}{.6pt}
\fontsize{7.5}{9}\selectfont
\setlength{\tabcolsep}{1.7pt}
\renewcommand{\arraystretch}{1.08}
\begin{tabular}{*{3}{>{\centering\arraybackslash}p{.04\linewidth}}*{6}{>{\centering\arraybackslash}p{\dimexpr(.77\linewidth-28\tabcolsep-.6pt)/9\relax}}|*{2}{>{\centering\arraybackslash}p{.055\linewidth}}*{3}{>{\centering\arraybackslash}p{\dimexpr(.77\linewidth-28\tabcolsep-.6pt)/9\relax}}}
\toprule
\rowcolor{mrAblHeader}
\multicolumn{9}{c|}{Module ablation} & \multicolumn{5}{c}{GSC construction} \\
\cmidrule(lr){1-9}\cmidrule(lr){10-14}
\rowcolor{mrAblHeader}
\multicolumn{3}{c}{Modules} & \multicolumn{2}{c}{E-DAIC} & MH & \multicolumn{3}{c|}{MPDD-25 Elder} & \multicolumn{2}{c}{GSC} & \multicolumn{2}{c}{E-DAIC} & MH \\
\rowcolor{mrAblHeader}
GSC & CPE & EGE & F1 & CCC & F1 & Binary & Ternary & Quinary & Insp. & Repair & F1 & CCC & F1 \\
\midrule
\rowcolor{mrAblBand}
\ding{51} & \ding{55} & \ding{51} & 0.5151 & 0.0361 & \textbf{0.9672} & \underline{0.7648} & 0.5839 & \textbf{0.4516} & \ding{55} & \ding{51} & \underline{0.6190} & \textbf{0.2771} & \underline{0.9653} \\
\ding{51} & \ding{51} & \ding{55} & \underline{0.5162} & \underline{0.2997} & 0.9530 & \textbf{0.7908} & \underline{0.5887} & \underline{0.4496} & \ding{51} & \ding{55} & 0.5359 & \textbf{0.2771} & 0.9622 \\
\midrule
\rowcolor{mrAblOurs}
\ding{51} & \ding{51} & \ding{51} & \textbf{0.6768} & \textbf{0.3268} & \underline{0.9655} & 0.7462 & \textbf{0.6572} & 0.4377 & \ding{51} & \ding{51} & \textbf{0.6316} & \underline{0.2464} & \textbf{0.9787} \\
\bottomrule
\end{tabular}
\endgroup
\vspace{1mm}
\caption{Ablations with Opus 4.8 module test (left) and GSC development (right); see Appendix~\ref{app:ablation_selection}.}
\label{tab:ablation-studies}
\vspace{-1em}
\end{minipage}
\end{figure}

We evaluate \method{} with Opus 4.8 in Table~\ref{tab:ablation-studies} and Figure~\ref{fig:ablation-analysis}(a). Full \method{} uses an inherited pipeline and seeds 0--4. Both removals start without one, using seeds 0--2. Search costs differ.
\textit{\textbf{Obs.\ding{185} GSC improves depression detection.}} GSC checks alignment and repairs adapter outputs. E-DAIC development Macro-F1 falls from 0.6316 to 0.5359 without repair. Depression labels belong to participants, while speech and facial features can contain multiple records. GSC pairs observations with the target. PHQ-8 CCC rises after either GSC removal.
\textit{\textbf{Obs.\ding{186} CPE improves prediction of depressive symptom burden.}} Joint exploration of representations and predictors achieves E-DAIC PHQ-8 CCC of 0.3268 versus 0.0361 without CPE. PHQ-8 retains differences in symptom burden among participants with the same binary depression label. The gap favors joint search for this graded target. Without CPE, seed SD is larger on seven of nine tasks. Mental Health and average MPDD-2025 Elder binary and quinary scores are higher without CPE.
\textit{\textbf{Obs.\ding{187} EGE benefits PHQ-8 and ternary severity assessment.}} EGE guides revisions using prediction errors and uncertainty. Full \method{} achieves higher PHQ-8 CCC and an average MPDD-2025 Elder ternary score of 0.6572 versus 0.5887 without EGE. These results concern two resolutions of depression assessment: a continuous symptom score and severity categories. Binary and quinary averages favor removal.
\textit{\textbf{Obs.\ding{188} Research memory produces mixed results.}} In a separate experiment, evolving memory yields seven test gains, thirteen ties, and seven losses across 27 combinations of task and seed, adding 7.2 LLM calls per run on average. Memory's contribution varies across tasks and seeds.

\subsection{Further Analysis}
\label{sec:experiments:selection-sensitivity}

\begin{wrapfigure}{r}{0.5\textwidth}
\vspace{-1.5\intextsep}
\centering
\includegraphics[width=\linewidth]{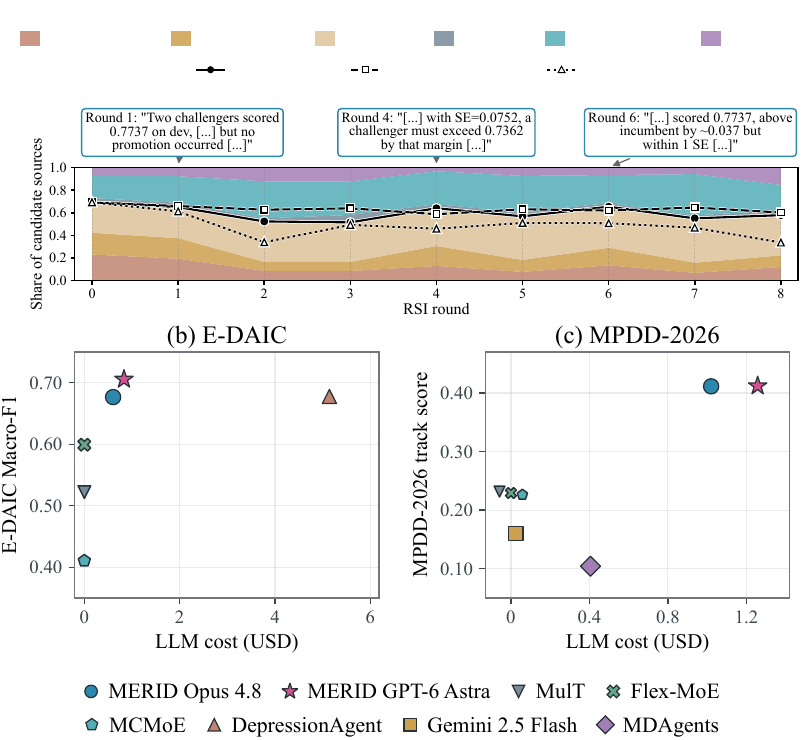}
\caption{RSI source evolution (a) and score--cost comparisons (b,c). \method{} points are five-seed means. Details: Appendices~\ref{app:rsi-eight-round} and~\ref{app:efficiency}.}
\vspace{-2em}
\label{fig:rsi-score-cost}
\label{fig:rsi-round-evolution}
\label{fig:score-cost}
\end{wrapfigure}

\textbf{Sensitivity to Hyperparameter Choices.} We vary validation-fold counts, candidate budgets, and revision limits and report Macro-F1 in Figure~\ref{fig:ablation-analysis}(b--d).
\textit{\textbf{Obs.\ding{189} We find no consistent gain from larger budgets.}} Five validation folds match ten on MPDD-2025 Elder. E-DAIC Macro-F1 is 0.5919 at a budget of 64 candidate evaluations, versus 0.5492 at 256. Its round-limit scan peaks at one revision. MPDD-2025 cohorts have unchanged scores across that scan.

\textbf{Evolution of Multimodal Search Across RSI Rounds.} \textit{\textbf{Obs.\ding{190} We trace changes in search composition across rounds.}} In Figure~\ref{fig:rsi-score-cost}(a), stacked areas show MERID's candidate source-family shares, while lines sum eGeMAPS, OpenFace, and TF--IDF shares for each arm across six E-DAIC runs. Boxes trace one memory entry from binary seed 1, comparing gains with the uncertainty margin and retaining the incumbent despite higher scores. MERID's eGeMAPS, OpenFace, and TF--IDF share dips early and recovers in middle rounds, without a sustained increase through round eight. For depression assessment, RSI compares acoustic, facial, and lexical representations while replacing the incumbent only when a candidate clears the uncertainty margin. Preregistered tests do not establish a consistent shift toward physiological evidence.

\textbf{Comparison of LLM Costs and Performance.} We summarize LLM usage across nine task-level means: median calls per run are 11.6 for full \method{} (the runs of Table~\ref{tab:main-results}), 43 without CPE, and 1 without EGE. Corresponding median costs are \$0.65, \$2.32, and \$0.01. The variant without CPE permits more revision rounds; these costs exclude local computation.
\textit{\textbf{Obs.\ding{191} We compare LLM costs at development and inference.}} Across five seeds, \method{} achieves E-DAIC binary Macro-F1 of 0.6768 at a mean development LLM cost of \$0.60 per run (Figure~\ref{fig:rsi-score-cost}(b)). DepressionAgent's text branch scores 0.6778 at a cohort-inference cost of \$5.14. On MPDD-2026, \method{} has the highest plotted score, 0.4115, with \$1.02 of development LLM usage per seed (Figure~\ref{fig:rsi-score-cost}(c)). Non-LLM methods appear at zero LLM cost; DepressionAgent's cost comes from a separate accounting rerun.

\endgroup

\section{Conclusion}
\label{sec:conclusion}

This paper studies multimodal depression detection and symptom-severity estimation. Autonomously adapting prediction pipelines to heterogeneous multimodal signals and carrying verified improvements into subsequent designs remains challenging. We propose \method{}, which uses recursive self-improvement (RSI) to coordinate pipeline exploration, feedback-guided revision, and verified inheritance. Experiments across depression benchmarks demonstrate strong performance, while case studies link revisions to clinical input constraints and ordered severity targets. RSI thus guides \method{}'s self-evolution toward pipelines tailored to psychiatric assessment.

\subsection*{AI use statement}
We used Claude Opus 4.8 and GPT-6 Astra as the base LLMs for \method{}, and Gemini 2.5 Flash and GPT-5.5 in the baseline experiments. Generative AI tools also assisted with language polishing and grammar correction. The research ideas and methodology were developed by the authors, who take responsibility for the final text, claims, artifacts, and reported results.

\subsection*{Ethics statement}
This study uses existing datasets for methodological research on depression assessment and related prediction tasks. Mental-health data, feature summaries, and sampled frames can contain sensitive or identifying information. Their secondary use and external API processing require appropriate permissions and privacy safeguards. Results on acted-emotion proxy tasks do not establish clinical diagnostic validity. Clinical use would require prospective validation, assessment of demographic disparities, and qualified professional oversight to address risks of missed cases, false alarms, and stigma. Model predictions and LLM-generated rationales should not serve as standalone diagnoses.

\subsubsection*{Reproducibility statement}
We document dataset inputs, splits, metrics, model settings, and search budgets, together with seed-level results and ablations in the results and appendix sections. Run records preserve candidate configurations, development scores, predictions, review decisions, execution failures, and LLM usage. Codes are provided in the abstract section.

\subsection*{Author Contribution}
L.L. designed this study with T.L. and Z.T. L.L. implemented the method. L.L. and Z.L. performed the experiments and analyzed the results. All authors read and approved the final manuscript. T.L. supervised the project.

% Add acknowledgments here when the text is ready.
% \section*{Acknowledgments}

\bibliographystyle{iclr2027_conference}
\bibliography{references}

\clearpage
\appendix

% Use your appendix entry point, with a fallback for the local layout.
\IfFileExists{ablation_appendix.tex}{%
  \input{ablation_appendix.tex}%
}{%
  \clearpage
\section*{Appendix}

This appendix presents case studies, seed variability and supplementary analyses, RSI experiments, efficiency analysis, complete benchmark results, experimental settings, theoretical analysis, and implementation details. The contents below link to each section and subsection.

% Linked contents use the section labels and update after compilation.
\begingroup\hypersetup{hidelinks}
\setlength{\parindent}{0pt}\setlength{\parskip}{1pt}
\newcommand{\mrappendixsection}[1]{\par\addvspace{0.5em}\noindent{\bfseries\hyperref[#1]{\makebox[2.2em][l]{\ref*{#1}}\nameref*{#1}}\hfill\pageref{#1}}\par}
\newcommand{\mrappendixsubsection}[1]{\noindent\hspace*{1.2em}\hyperref[#1]{\makebox[3em][l]{\ref*{#1}}\nameref*{#1}}\nobreak\dotfill\pageref{#1}\par}
\newcommand{\mrappendixsubsubsection}[1]{\noindent\hspace*{2.4em}\hyperref[#1]{\makebox[3.8em][l]{\ref*{#1}}\nameref*{#1}}\nobreak\dotfill\pageref{#1}\par}
\mrappendixsection{app:case_studies}
\mrappendixsection{app:supplementary_analyses}
\mrappendixsubsection{app:seed_study}
\mrappendixsubsection{app:base_model_astra}
\mrappendixsubsection{app:ablation_selection}
\mrappendixsubsection{app:modality_study}
\mrappendixsubsection{app:supplementary_sensitivity}
\mrappendixsubsection{app:selection_deployment}
\mrappendixsubsubsection{app:selection_rules}
\mrappendixsubsubsection{app:deployment_rules}
\mrappendixsubsubsection{app:gate_multiplier}
\mrappendixsubsection{app:search_axes}
\mrappendixsection{app:revision_case}
\mrappendixsubsection{app:rsi_trajectories}
\mrappendixsubsection{app:rsi-eight-round}
\mrappendixsubsubsection{app:rsi_study_design}
\mrappendixsubsubsection{app:rsi_measures}
\mrappendixsubsubsection{app:rsi_test_memory}
\mrappendixsubsubsection{app:rsi_execution_diagnostics}
\mrappendixsection{app:efficiency}
\mrappendixsubsection{app:baseline-cost}
\mrappendixsubsection{app:module_costs}
\mrappendixsubsection{app:budget_effort}
\mrappendixsection{app:full-results}
\mrappendixsection{app:baseline_setup}
\mrappendixsubsection{app:datasets_protocols}
\mrappendixsubsection{app:baseline_implementations}
\mrappendixsubsection{app:shared_controls}
\mrappendixsubsection{app:compute}
\mrappendixsection{sec:mrtheory-appendix}
\mrappendixsubsection{app:rsi_inheritance_process}
\mrappendixsubsection{app:rsi_stored_score_guarantees}
\mrappendixsubsection{app:rsi_population_improvement}
\mrappendixsubsection{app:rsi_theory_implications}
\mrappendixsection{app:method_details}
\mrappendixsubsection{app:method_execution}
\mrappendixsubsection{app:rsi_schedule}
\mrappendixsubsection{app:research_memory}
\mrappendixsubsection{app:rsi_configuration}
\mrappendixsubsection{app:prompts}
\endgroup

% Appendix tables use the main-text palette and type sizes.
\definecolor{mrAppHeader}{RGB}{229,239,217}
\definecolor{mrAppOurs}{RGB}{255,245,215}
\definecolor{mrAppBand}{RGB}{243,243,243}
\captionsetup[table]{font=normalsize,labelfont=normalfont,textfont=normalfont,position=top,skip=3pt}
\newcommand{\mrAppendixTableStyle}{%
  \fontsize{8.5}{10}\selectfont
  \setlength{\tabcolsep}{2.5pt}%
  \renewcommand{\arraystretch}{1.12}%
  \setlength{\aboverulesep}{0pt}%
  \setlength{\belowrulesep}{0pt}%
  \setlength{\arrayrulewidth}{.6pt}%
}
\newcommand{\mrAppendixDenseTableStyle}{%
  \mrAppendixTableStyle
  \fontsize{7.5}{9}\selectfont
  \setlength{\tabcolsep}{2pt}%
  \renewcommand{\arraystretch}{1.08}%
}

\makeatletter
\setlength{\@fptop}{0pt} % Align full-page appendix floats at the top.
\makeatother
\captionsetup[figure]{skip=4pt}
\setlength{\textfloatsep}{12pt plus 2pt minus 2pt}
\setlength{\intextsep}{10pt plus 2pt minus 2pt}
\setlength{\floatsep}{10pt plus 2pt minus 2pt}
\raggedbottom % Keep short appendix pages from stretching paragraph spacing.

\clearpage
\section{Case Studies of Evidence-Guided Evolution}
\label{app:case_studies}

\paragraph{Evidence-guided evolution.}
\phantomsection\label{app:evidence_revision}
Figure~\ref{fig:revision-case} illustrates a revision on MPDD-2025 Elder 1\,s ternary classification. The best candidate scored 0.4569, compared with 0.4178 for the incumbent, leaving the gain below the 0.0470 acceptance margin. The Planner and Engineer checked which features were loaded and whether clinical scales were excluded. Following review, the Chair requested an ordinal model and additional tree-depth settings. Another 64 evaluations raised the best development score to 0.4814, while the recorded test score remained unchanged. The source checks addressed separation of predictive inputs from clinical targets; the ordinal proposal addressed the ordering of severity categories.

\begin{figure}[!htbp]
\centering
\includegraphics[width=\linewidth]{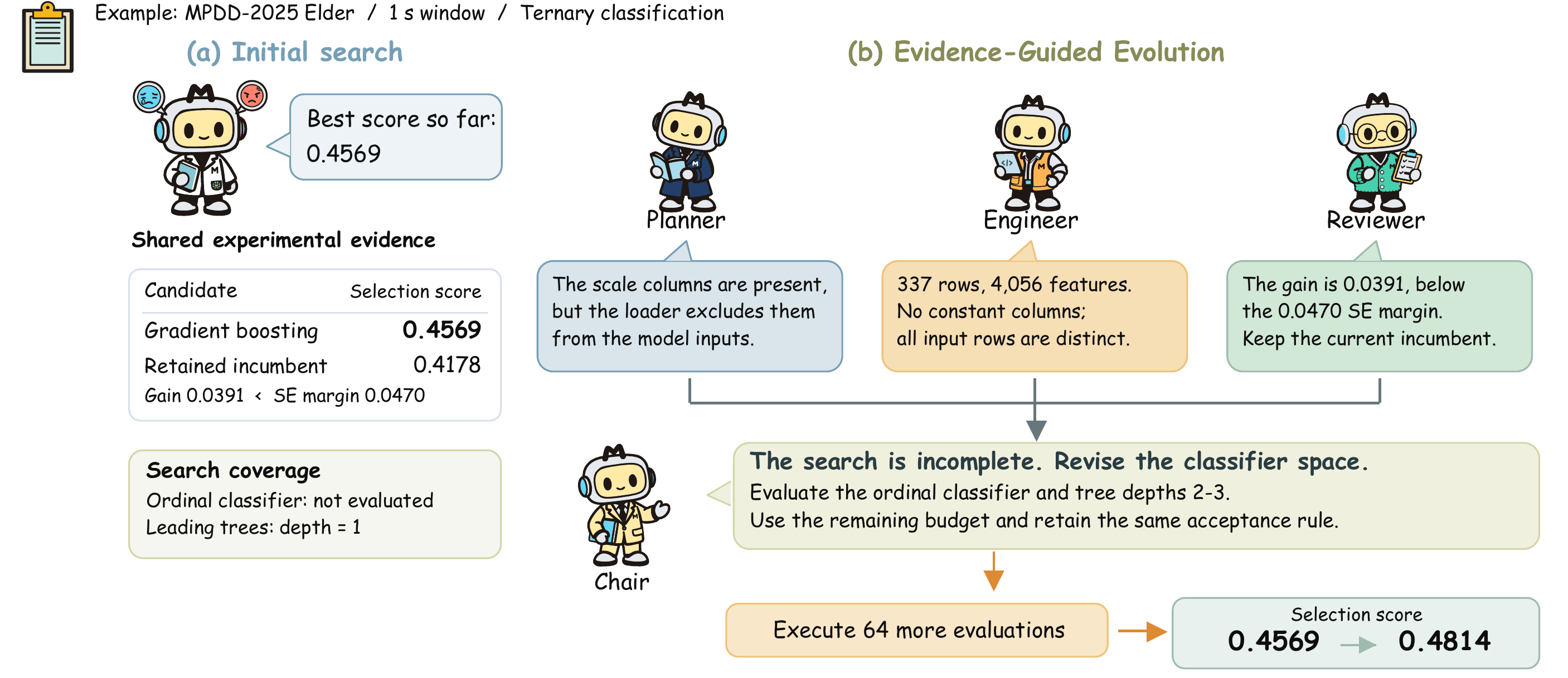}
\caption{A recorded MPDD-2025 evidence-guided evolution step. Dialogue is condensed from the review record. The final arrow tracks the best development candidate, which is distinct from the retained incumbent.}
\label{fig:revision-case}
\end{figure}

\paragraph{Depression-oriented evolution and case-level deliberation.}
\phantomsection\label{app:depressionagent_case}
\method{} and DepressionAgent~\citep{zhu2026depressionagent} use feedback for different revision targets. DepressionAgent uses specialized agents to extract, challenge, and reconsider evidence for the current participant. \method{} uses experimental results to revise a prediction pipeline and carry accepted changes into subsequent development rounds. Table~\ref{tab:depressionagent_case} contrasts these revision targets and the information retained.

\paragraph{A recorded DepressionAgent workflow.}
We examine a text-branch adaptation on the 56 E-DAIC test participants using the deployment \texttt{claude-opus-4-8} and the published prompts A2/A4/A5/A6/A12/A13. The text observer extracts self-reported states, functional changes, protective evidence, and contextual qualifications. A hypothesis agent and a critic examine this evidence before transcript arbitration. Initially negative cases receive an independent review, whose gate can request re-evaluation. Inference is zero-shot, with labels accessed only for scoring. The adaptation omits the original audio--visual branch and cross-modal arbitration.

\begin{table}[!htbp]
\caption{How depression-related evidence enters revision. DepressionAgent describes the evaluated text-branch adaptation. The \method{} column describes its pipeline-development loop, with the concrete example in Figure~\ref{fig:revision-case}.}
\label{tab:depressionagent_case}
\centering
\begingroup
\mrAppendixTableStyle
\begin{tabular}{>{\raggedright\arraybackslash}p{.18\linewidth}*{2}{>{\raggedright\arraybackslash}p{\dimexpr(.82\linewidth-6\tabcolsep)/2\relax}}}
\toprule
\rowcolor{mrAppHeader}
\textbf{Aspect} & \textbf{DepressionAgent (text branch)} & \cellcolor{mrAppOurs}\textbf{\method{}} \\
\midrule
\rowcolor{mrAppBand}
Evidence & Interview statements and supporting or countervailing interpretations & \cellcolor{mrAppOurs}Task constraints, candidate scores, prediction diagnostics, and execution checks \\
Revision target & The current participant's risk judgment & \cellcolor{mrAppOurs}Representation--fusion recipes and predictor configurations \\
\rowcolor{mrAppBand}
Depression-specific action & Re-read the transcript for overlooked risk evidence & \cellcolor{mrAppOurs}Check clinical-scale exclusion and propose an ordinal severity predictor \\
Verification & Reconsider evidence through review and re-evaluation & \cellcolor{mrAppOurs}Evaluate proposed pipelines under a shared protocol and replacement margin \\
\rowcolor{mrAppBand}
Retained consequence & A final case judgment and its reasoning record & \cellcolor{mrAppOurs}A reusable pipeline, admitted components, and experience for the next round \\
\bottomrule
\end{tabular}
\endgroup
\end{table}

\paragraph{From a revised case to an evolving predictor.}
The evaluated DepressionAgent run initially classified 14 participants as positive and 42 as negative. Review changed two negative predictions to positive, giving 16 final positive predictions. Final Macro-F1 was 0.6778 and accuracy was 0.7321, with 268 LLM calls. The summary identifies the revised predictions but does not establish whether either revision corrected an error. In this adaptation, prompts and inference procedures remain fixed across participants; case-level outcomes do not update a shared prediction pipeline.

In \method{}, source checks and the ordinal-model proposal guide revisions to a reusable predictor (Appendix~\ref{app:evidence_revision}). Subsequent proposals use the retained pipeline, admitted components, and accumulated experimental experience. In the recorded PHQ runs, accepted revisions increase retained development CCC (Appendix~\ref{app:rsi_trajectories}). These records connect pipeline revisions to severity-prediction objectives, but do not by themselves establish improved test performance.

\FloatBarrier
\section{Supplementary Results and Ablations}
\label{app:supplementary_analyses}

\FloatBarrier
\subsection{Seed Variability of the Main Results}
\label{app:seed_study}

\begin{center}\begin{minipage}{\linewidth}
\centering
\includegraphics[width=\linewidth]{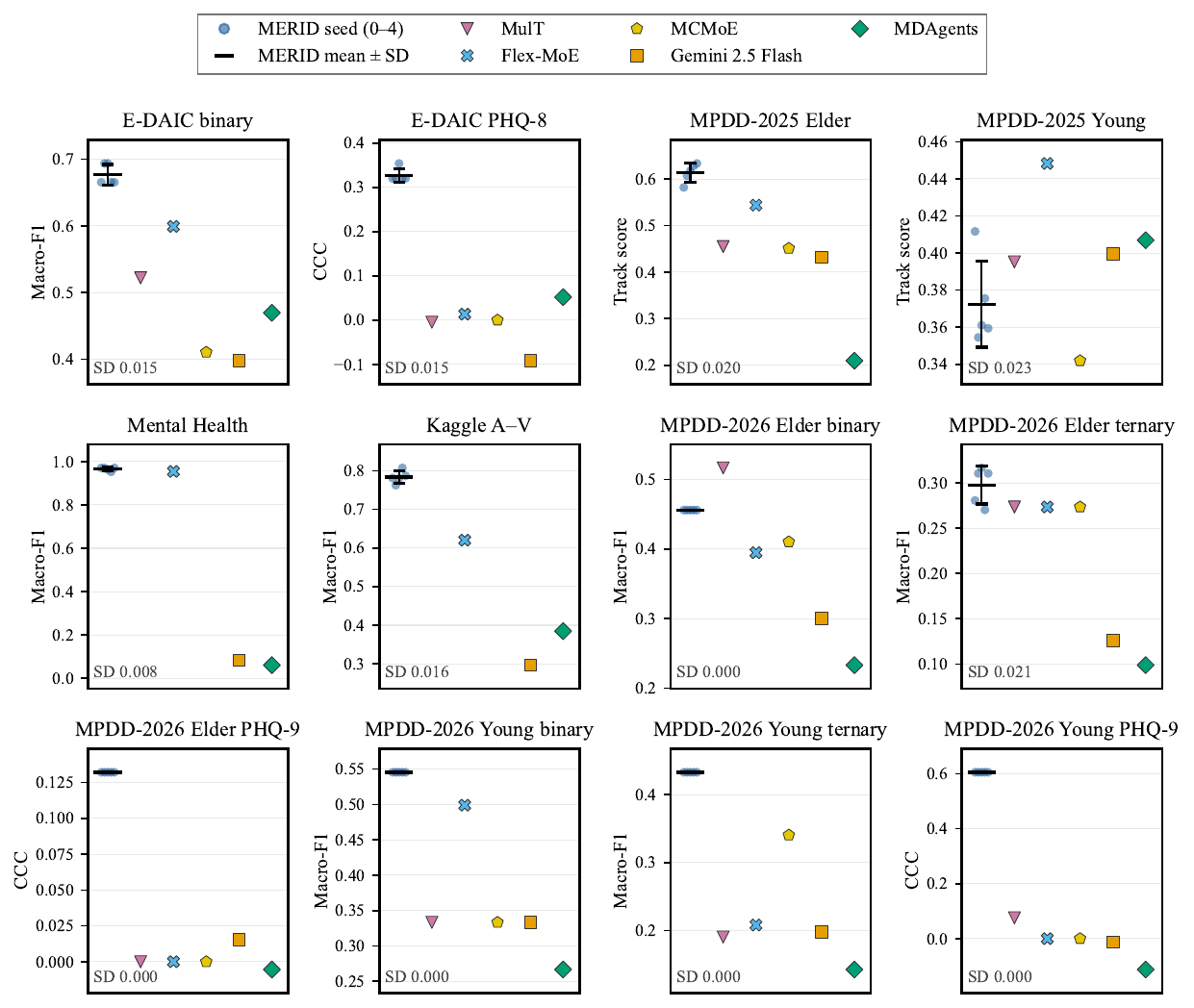}
\captionof{figure}{Test scores of \method{} under seeds 0--4 (dots) with their mean $\pm$ SD (black), against the baselines of Table~\ref{tab:main-results}. Each panel keeps its task's metric and scale and prints the seed SD. Baselines are single runs, except Flex-MoE on MPDD-2025 (mean of three runs).}
\label{fig:seed-study}
\end{minipage}\end{center}

The five-seed study uses complete runs with seeds 0--4. Table~\ref{tab:main-results} reports their means, while Table~\ref{tab:seed-study} provides individual seed scores and sample standard deviations. Figure~\ref{fig:seed-robustness} shows each run against the strongest baseline. Each run repeats the development search, the review and one test evaluation of the deployed candidates under the rule of the main results: the carried-over reference is retained unless a challenger exceeds it by one standard error, and the one-standard-error tied set of the selected candidate (at most five members) is deployed. Kaggle is reported through its nested cross-validation (Appendix~\ref{app:kaggle}). The seed sets the run's resampling, namely the development folds where cross-validation is used and the bootstrap estimates; LLM responses are not seeded. All 80 runs (16 run units, five seeds) used one code version and completed, with 1,006 LLM calls and \$59.18 of LLM usage in total. Baselines are single runs, except Flex-MoE on MPDD-2025, which averages three runs.

Figure~\ref{fig:seed-study} and Table~\ref{tab:seed-study} give every run. Standard deviations range from 0.000 to 0.023. On MPDD-2026 the incumbent rule retained the carried-over binary and PHQ-9 heads in all ten runs and the Young ternary head in all five Young runs, so these five columns deploy the same model under every seed. The Elder ternary head was replaced in every Elder run and varies from 0.2703 to 0.3167. E-DAIC binary takes two values, 0.6655 in three runs and 0.6937 in two. The largest spread is MPDD-2025 Young (0.3545--0.4116), where the five-seed mean of 0.3724 lies below Flex-MoE, MDAgents, Gemini 2.5 Flash and MulT.

\begin{table}[!htbp]
\caption{Test scores of \method{} under seeds 0--4 for every column of Table~\ref{tab:main-results}. Each seed is one complete run (development search, review and a single test evaluation). Best and second-best distinct seed scores within each row are bold and underlined, including ties. The last column reports the mean and sample standard deviation over the five runs.}
\label{tab:seed-study}
\centering
\begingroup
\mrAppendixDenseTableStyle
\resizebox{\linewidth}{!}{%
\begin{tabular}{>{\raggedright\arraybackslash}p{.25\linewidth}>{\raggedright\arraybackslash}p{.1\linewidth}*{5}{>{\centering\arraybackslash}p{\dimexpr(.45\linewidth-14\tabcolsep)/5\relax}}>{\centering\arraybackslash}p{.2\linewidth}}
\toprule
\rowcolor{mrAppHeader}
\textbf{Task} & \textbf{Metric} & \textbf{Seed 0} & \textbf{Seed 1} & \textbf{Seed 2} & \textbf{Seed 3} & \textbf{Seed 4} & \cellcolor{mrAppOurs}\textbf{Mean $\pm$ SD} \\
\midrule
\rowcolor{mrAppBand}
E-DAIC binary & Macro-F1 & \underline{0.6655} & \textbf{0.6937} & \textbf{0.6937} & \underline{0.6655} & \underline{0.6655} & \cellcolor{mrAppOurs}0.6768 $\pm$ 0.0154 \\
E-DAIC PHQ-8 & CCC & \underline{0.3201} & \underline{0.3201} & \textbf{0.3538} & \underline{0.3201} & \underline{0.3201} & \cellcolor{mrAppOurs}0.3268 $\pm$ 0.0151 \\
\rowcolor{mrAppBand}
MPDD-2025 Elder & Track score & 0.5820 & 0.6071 & 0.6177 & \underline{0.6282} & \textbf{0.6335} & \cellcolor{mrAppOurs}0.6137 $\pm$ 0.0204 \\
MPDD-2025 Young & Track score & \textbf{0.4116} & 0.3545 & 0.3610 & \underline{0.3755} & 0.3594 & \cellcolor{mrAppOurs}0.3724 $\pm$ 0.0233 \\
\rowcolor{mrAppBand}
Mental Health & Macro-F1 & \underline{0.9704} & \underline{0.9704} & 0.9630 & 0.9527 & \textbf{0.9710} & \cellcolor{mrAppOurs}0.9655 $\pm$ 0.0079 \\
Kaggle A--V & Macro-F1 & 0.7811 & 0.7624 & 0.7803 & \textbf{0.8074} & \underline{0.7869} & \cellcolor{mrAppOurs}0.7836 $\pm$ 0.0162 \\
\midrule
\rowcolor{mrAppBand}
MPDD-2026 Elder binary & Macro-F1 & \textbf{0.4559} & \textbf{0.4559} & \textbf{0.4559} & \textbf{0.4559} & \textbf{0.4559} & \cellcolor{mrAppOurs}0.4559 $\pm$ 0.0000 \\
MPDD-2026 Elder ternary & Macro-F1 & 0.2807 & \underline{0.3106} & \textbf{0.3167} & 0.2703 & \underline{0.3106} & \cellcolor{mrAppOurs}0.2978 $\pm$ 0.0208 \\
\rowcolor{mrAppBand}
MPDD-2026 Elder PHQ-9 & CCC & \textbf{0.1321} & \textbf{0.1321} & \textbf{0.1321} & \textbf{0.1321} & \textbf{0.1321} & \cellcolor{mrAppOurs}0.1321 $\pm$ 0.0000 \\
MPDD-2026 Young binary & Macro-F1 & \textbf{0.5455} & \textbf{0.5455} & \textbf{0.5455} & \textbf{0.5455} & \textbf{0.5455} & \cellcolor{mrAppOurs}0.5455 $\pm$ 0.0000 \\
\rowcolor{mrAppBand}
MPDD-2026 Young ternary & Macro-F1 & \textbf{0.4327} & \textbf{0.4327} & \textbf{0.4327} & \textbf{0.4327} & \textbf{0.4327} & \cellcolor{mrAppOurs}0.4327 $\pm$ 0.0000 \\
MPDD-2026 Young PHQ-9 & CCC & \textbf{0.6050} & \textbf{0.6050} & \textbf{0.6050} & \textbf{0.6050} & \textbf{0.6050} & \cellcolor{mrAppOurs}0.6050 $\pm$ 0.0000 \\
\bottomrule
\end{tabular}%
}
\endgroup
\end{table}

% Generated by E:/depressionAgent/figures/base_model_astra_20260925.py -- regenerate, do not edit by hand.
\FloatBarrier
\subsection{Base-model swap: GPT-6 Astra}
\label{app:base_model_astra}

We repeated the five-seed study of Table~\ref{tab:main-results} with GPT-6 Astra as the base model: the same 16 run units, seeds 0--4, code, carried-over references and deployment rule, with every LLM call of a run sent to the Azure OpenAI deployment \texttt{gpt-6-astra} through the same system prompts (Appendix~\ref{app:prompts}). That endpoint offers no web-search tool, so decisions that may search were made without it. Aborted runs were re-run under the same seed at most twice, a rule fixed before any re-run; all attempts are reported. All 80 tasks completed within that limit.

Table~\ref{tab:astra-scores} and Figure~\ref{fig:astra-seeds} compare the two base models. Astra is higher on E-DAIC binary Macro-F1 (0.7057 versus 0.6768), and lower on Mental Health Macro-F1 (0.9565 versus 0.9655), Kaggle A--V Macro-F1 (0.6070 versus 0.7836) and MPDD-2026 Elder ternary Macro-F1 (0.2523 versus 0.2978). The other 8 columns differ by less than the larger of the two seed standard deviations; the three MPDD-2026 Young columns are identical because both models' runs retained the carried-over heads. On MPDD-2026 Elder, Astra replaced the binary/PHQ-9 head in 2 of five seeds (PHQ-9 CCC 0.3015 in those seeds) and kept the carried-over head in the others, which is why its Elder PHQ-9 mean lies above Opus's with a large spread. The Kaggle gap arises before model selection. The frozen Kaggle adapter could not load in the read-only run environment, so every run derived its own. Opus vetoed none of the six source families, while Astra vetoed eGeMAPS, facial action units and facial landmarks as requiring extractors that were not installed (three vetoes per run) and searched the remaining acoustic and CNN recipes only.

Table~\ref{tab:astra-failures} separates the two models' format failures. Five of 85 Astra attempts aborted, all on MPDD-2026, whose search rejects a proposal outside its bounded action space and ends the run rather than substituting a default. Five of Astra's 135 audited MPDD-2026 proposals were invalid: three misspelled the \texttt{wav2vec} extractor and two were malformed JSON (Table~\ref{tab:astra-invalid}); none of Opus's 112 were. Conversely, the review chair's reply had to be re-asked for format 35 times across the Opus runs and never in the Astra runs; that step re-asks, so those runs continued. Which failures end a run therefore depends both on the model and on where the pipeline retries. Per completed run, Astra used fewer calls and about half the tokens, at twice the list price per token, so the two studies cost about the same: \$0.76 against \$0.74 per run at uncached list prices (Table~\ref{tab:astra-failures}). The usage of aborted attempts is not included.

\begin{figure}[!htbp]
\centering
\includegraphics[width=\linewidth]{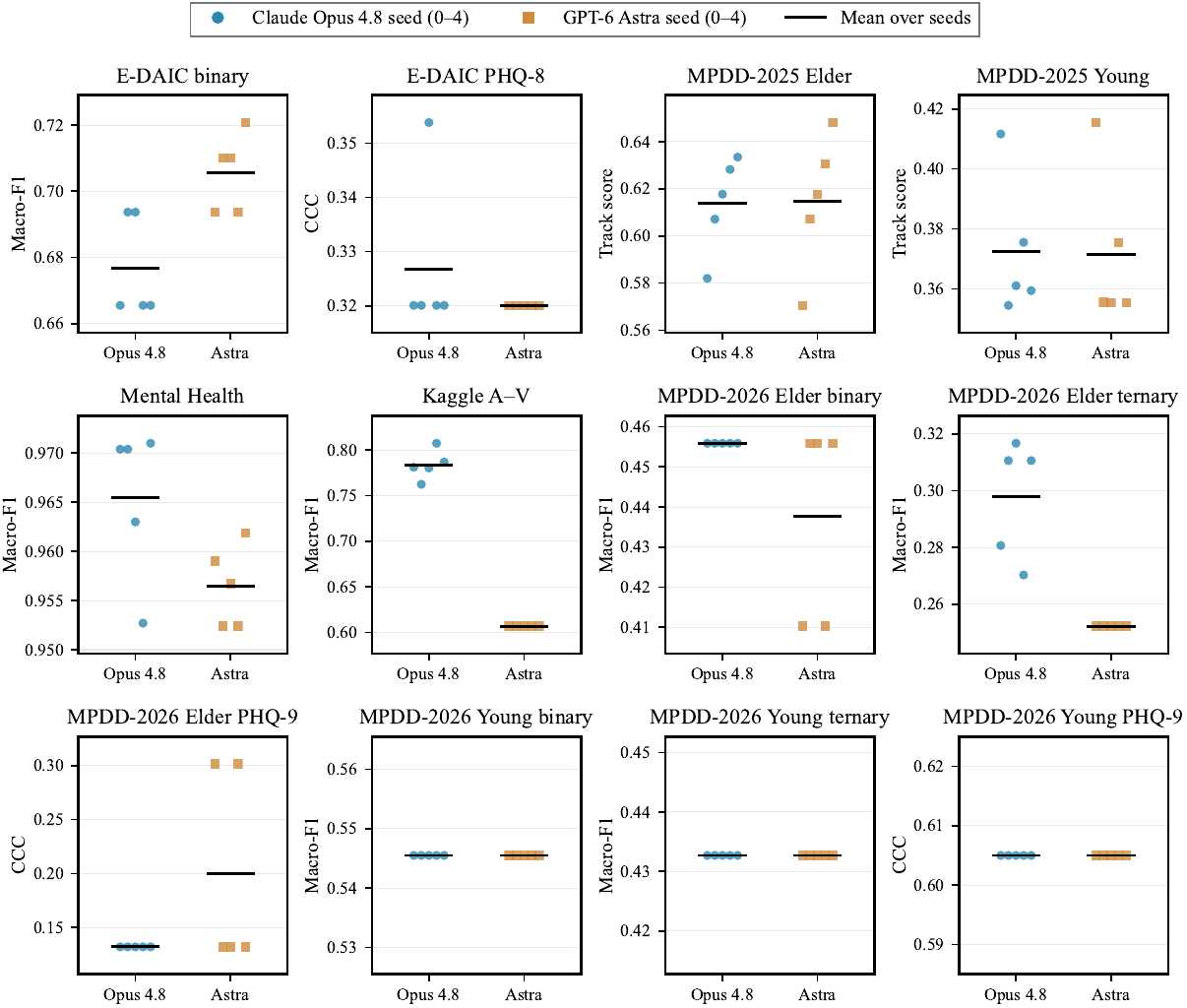}
\caption{Test scores under seeds 0--4 with Claude Opus 4.8 (blue) and GPT-6 Astra (orange) as the base model; bars are seed means. Each panel keeps its task's metric and scale.}
\label{fig:astra-seeds}
\end{figure}

\begin{table}[!htbp]
\caption{Test scores of \method{} with Claude Opus 4.8 and GPT-6 Astra as the base model: mean $\pm$ sample SD over seeds 0--4 of the same 16 run units, protocol and code. $\Delta$ is Astra minus Opus.}
\label{tab:astra-scores}
\centering
\begingroup
\mrAppendixDenseTableStyle
\begin{tabular}{>{\raggedright\arraybackslash}p{.27\linewidth}>{\raggedright\arraybackslash}p{.11\linewidth}*{2}{>{\centering\arraybackslash}p{.19\linewidth}}>{\centering\arraybackslash}p{.1\linewidth}}
\toprule
\rowcolor{mrAppHeader}
\textbf{Task} & \textbf{Metric} & \textbf{Opus 4.8} & \textbf{GPT-6 Astra} & $\boldsymbol{\Delta}$ \\
\midrule
\rowcolor{mrAppBand}
E-DAIC binary & Macro-F1 & 0.6768 $\pm$ 0.0154 & 0.7057 $\pm$ 0.0118 & +0.0289 \\
E-DAIC PHQ-8 & CCC & 0.3268 $\pm$ 0.0151 & 0.3201 $\pm$ 0.0000 & -0.0067 \\
\rowcolor{mrAppBand}
MPDD-2025 Elder & Track score & 0.6137 $\pm$ 0.0204 & 0.6148 $\pm$ 0.0291 & +0.0011 \\
MPDD-2025 Young & Track score & 0.3724 $\pm$ 0.0233 & 0.3714 $\pm$ 0.0261 & -0.0010 \\
\rowcolor{mrAppBand}
Mental Health & Macro-F1 & 0.9655 $\pm$ 0.0079 & 0.9565 $\pm$ 0.0042 & -0.0090 \\
Kaggle A--V & Macro-F1 & 0.7836 $\pm$ 0.0162 & 0.6070 $\pm$ 0.0000 & -0.1766 \\
\rowcolor{mrAppBand}
MPDD-2026 Elder binary & Macro-F1 & 0.4559 $\pm$ 0.0000 & 0.4377 $\pm$ 0.0250 & -0.0182 \\
MPDD-2026 Elder ternary & Macro-F1 & 0.2978 $\pm$ 0.0208 & 0.2523 $\pm$ 0.0000 & -0.0455 \\
\rowcolor{mrAppBand}
MPDD-2026 Elder PHQ-9 & CCC & 0.1321 $\pm$ 0.0000 & 0.1999 $\pm$ 0.0928 & +0.0678 \\
MPDD-2026 Young binary & Macro-F1 & 0.5455 $\pm$ 0.0000 & 0.5455 $\pm$ 0.0000 & +0.0000 \\
\rowcolor{mrAppBand}
MPDD-2026 Young ternary & Macro-F1 & 0.4327 $\pm$ 0.0000 & 0.4327 $\pm$ 0.0000 & +0.0000 \\
MPDD-2026 Young PHQ-9 & CCC & 0.6050 $\pm$ 0.0000 & 0.6050 $\pm$ 0.0000 & +0.0000 \\
\bottomrule
\end{tabular}
\endgroup
\end{table}

\begin{table}[!htbp]
\caption{Failure analysis of the two five-seed studies. An attempt is one run of a task; aborted attempts were re-run under the same seed, at most twice. MPDD-2026 audits every proposal: an invalid one ends the run, a duplicate is re-asked once. The chair is re-asked when its reply lacks the required format. Token counts and costs cover completed runs; costs use uncached list prices, an upper bound when part of the input was cached.}
\label{tab:astra-failures}
\centering
\begingroup
\mrAppendixDenseTableStyle
\begin{tabular}{>{\raggedright\arraybackslash}p{.5\linewidth}*{2}{>{\centering\arraybackslash}p{.2\linewidth}}}
\toprule
\rowcolor{mrAppHeader}
\textbf{Quantity} & \textbf{Opus 4.8} & \textbf{GPT-6 Astra} \\
\midrule
\rowcolor{mrAppBand}
Run attempts for 80 tasks & 80 & 85 \\
Aborted attempts & 0 & 5 \\
\rowcolor{mrAppBand}
MPDD-2026 proposals audited & 112 & 135 \\
\quad invalid (run aborts) & 0 & 5 (3.7\%) \\
\rowcolor{mrAppBand}
\quad duplicates (re-asked once) & 0 & 0 \\
Chair replies re-asked for format & 35 & 0 \\
\rowcolor{mrAppBand}
Failed reviews / parse fallbacks / truncation warnings & 0 / 0 / 0 & 0 / 0 / 0 \\
Kaggle adapter turns / failed turns / vetoed sources & 15 / 10 / 0 & 6 / 1 / 15 \\
\rowcolor{mrAppBand}
LLM calls per completed run & 12.6 & 11.0 \\
Input / output tokens per completed run (k) & 118 / 6.1 & 60 / 3.2 \\
\rowcolor{mrAppBand}
List price per 1M input / output tokens (USD) & 5 / 25 & 10 / 50 \\
LLM cost per completed run / all 80 runs (USD) & 0.74 / 59.18 & 0.76 / 60.57 \\
\bottomrule
\end{tabular}
\endgroup
\end{table}

\begin{table}[!htbp]
\caption{The invalid GPT-6 Astra proposals that aborted a run: task, seed, attempt, the proposal's position in the run, and the offending part of its final JSON line, checked against the action space.}
\label{tab:astra-invalid}
\centering
\begingroup
\mrAppendixDenseTableStyle
\begin{tabular}{>{\raggedright\arraybackslash}p{.17\linewidth}*{3}{>{\centering\arraybackslash}p{.05\linewidth}}>{\raggedright\arraybackslash}p{.16\linewidth}>{\raggedright\arraybackslash}p{.34\linewidth}}
\toprule
\rowcolor{mrAppHeader}
\textbf{Task} & \textbf{Seed} & \textbf{Att.} & \textbf{Prop.} & \textbf{Failure} & \textbf{Offending fragment} \\
\midrule
\rowcolor{mrAppBand}
MPDD-2026 Elder & 0 & 1 & 7 & out-of-space value & \texttt{"sev\_fusion": "wav2emb:densenet"} \\
MPDD-2026 Elder & 2 & 1 & 6 & out-of-space value & \texttt{"audio": "wav2span"} \\
\rowcolor{mrAppBand}
MPDD-2026 Elder & 2 & 2 & 1 & out-of-space value & \texttt{"audio": "wav2.0"} \\
MPDD-2026 Elder & 4 & 1 & 6 & malformed JSON & \texttt{gp\_stats\_pz",\allowbreak{}"model":"plsn4":"bad"\}\}} \\
\rowcolor{mrAppBand}
MPDD-2026 Young & 2 & 1 & 9 & malformed JSON & \texttt{1.0",\allowbreak{}"sev\_model":"gb0.05d1,\allowbreak{}"sev\_fusion":} \\
\bottomrule
\end{tabular}
\endgroup
\end{table}

\FloatBarrier
\subsection{Additional Ablation Analysis}
\label{app:ablation_selection}

Table~\ref{tab:ablation-full} gives the complete module and GSC ablation results. Figure~\ref{fig:ablation-analysis} in the main text summarizes these comparisons and the sensitivity scans.

\textbf{Module ablations.} Table~\ref{tab:ablation-full} reports all nine test tasks and the three GSC development tasks. Table~\ref{tab:ablation-studies} summarizes these comparisons in the main text. The full configuration is the five-seed main benchmark (Appendix~\ref{app:seed_study}, seeds 0--4), which carries the previously selected pipeline; the two removals use seeds 0--2 and start without it. The CPE removal skips initial joint exploration and tests each proposed recipe with the incumbent predictor or each proposed predictor with the incumbent recipe. It keeps review and verification and permits up to eight rounds. The EGE removal keeps exploration but disables review, the replacement margin, and the terminal gate. These are configuration comparisons with different realized search and review costs. A seed's score averages its completed runs. Reported SD is the sample SD over seed means: five for the full configuration and three for each removal. For radar axes, the 1\,s and 5\,s MPDD scores are averaged within each seed before computing the mean and SD. Radar axes use independent linear scales with their raw bounds labeled. All MPDD module results concern the Elder cohort. Fresh-adapter GSC comparisons use the official E-DAIC development split or grouped Mental Health out-of-fold predictions.

\begin{table}[!htbp]
\caption{Module ablations. Test results report mean $\pm$ sample SD over seed means: \method{} (full) is Table~\ref{tab:main-results}'s five-seed result (seeds 0--4); w/o CPE and w/o EGE use seeds 0--2 and start without an inherited pipeline. Repeated runs are averaged within each seed. MPDD-2025 uses the official composite score. GSC results use development data with one run per configuration. Best and second-best distinct means within each row are \textbf{bold} and \underline{underlined}.}
\label{tab:ablation-full}
\centering
\begingroup
\mrAppendixTableStyle
\begin{tabular}{>{\raggedright\arraybackslash}p{.32\linewidth}*{3}{>{\centering\arraybackslash}p{\dimexpr(.68\linewidth-8\tabcolsep)/3\relax}}}
\toprule
\rowcolor{mrAppHeader}
Task / test metric $\uparrow$ & w/o CPE & w/o EGE & \cellcolor{mrAppOurs}\textbf{\method{} (full)} \\
\midrule
\rowcolor{mrAppBand}
E-DAIC binary (F1) & 0.5151 $\pm$ 0.0760 & \underline{0.5162} $\pm$ 0.0345 & \cellcolor{mrAppOurs}\textbf{0.6768} $\pm$ 0.0154 \\
E-DAIC PHQ-8 (CCC) & 0.0361 $\pm$ 0.0452 & \underline{0.2997} $\pm$ 0.0196 & \cellcolor{mrAppOurs}\textbf{0.3268} $\pm$ 0.0151 \\
\rowcolor{mrAppBand}
Mental Health (F1) & \textbf{0.9672} $\pm$ 0.0149 & 0.9530 $\pm$ 0.0005 & \cellcolor{mrAppOurs}\underline{0.9655} $\pm$ 0.0079 \\
\midrule
MPDD-25 Elder 1\,s binary & 0.7231 $\pm$ 0.0846 & \textbf{0.7850} $\pm$ 0.0042 & \cellcolor{mrAppOurs}\underline{0.7593} $\pm$ 0.0349 \\
\rowcolor{mrAppBand}
MPDD-25 Elder 1\,s ternary & \underline{0.6089} $\pm$ 0.0822 & 0.5955 $\pm$ 0.0667 & \cellcolor{mrAppOurs}\textbf{0.6295} $\pm$ 0.0673 \\
MPDD-25 Elder 1\,s quinary & \underline{0.4583} $\pm$ 0.0341 & 0.4472 $\pm$ 0.0482 & \cellcolor{mrAppOurs}\textbf{0.4738} $\pm$ 0.0647 \\
\rowcolor{mrAppBand}
MPDD-25 Elder 5\,s binary & \textbf{0.8065} $\pm$ 0.0958 & \underline{0.7966} $\pm$ 0.0319 & \cellcolor{mrAppOurs}0.7330 $\pm$ 0.0264 \\
MPDD-25 Elder 5\,s ternary & 0.5590 $\pm$ 0.1041 & \underline{0.5819} $\pm$ 0.0161 & \cellcolor{mrAppOurs}\textbf{0.6849} $\pm$ 0.0327 \\
\rowcolor{mrAppBand}
MPDD-25 Elder 5\,s quinary & \underline{0.4448} $\pm$ 0.0411 & \textbf{0.4521} $\pm$ 0.0467 & \cellcolor{mrAppOurs}0.4017 $\pm$ 0.0493 \\
\midrule[1.2pt]
\rowcolor{mrAppHeader}
Task / development metric $\uparrow$ & w/o inspection & w/o repair & \cellcolor{mrAppOurs}\textbf{Full GSC} \\
\midrule
\rowcolor{mrAppBand}
E-DAIC binary (F1) & \underline{0.6190} & 0.5359 & \cellcolor{mrAppOurs}\textbf{0.6316} \\
E-DAIC PHQ-8 (CCC) & \textbf{0.2771} & \textbf{0.2771} & \cellcolor{mrAppOurs}\underline{0.2464} \\
\rowcolor{mrAppBand}
Mental Health (F1) & \underline{0.9653} & 0.9622 & \cellcolor{mrAppOurs}\textbf{0.9787} \\
\bottomrule
\end{tabular}
\endgroup
\end{table}

\paragraph{Task-dependent effects.}
The full configuration has the highest mean on five of the nine test tasks: E-DAIC binary, E-DAIC PHQ-8, both MPDD-2025 ternary tasks, and the 1\,s quinary task. Its largest observed advantage over CPE removal is on PHQ-8 CCC (0.3268 versus 0.0361). The other four tasks favor a removal configuration: Mental Health and the 5\,s binary task without CPE, and the 1\,s binary and 5\,s quinary tasks without EGE. GSC inspection and repair improve development classification F1, whereas PHQ-8 CCC is higher after either removal. These comparisons use each task's own metric; module test outcomes and GSC development checks are reported separately.

\FloatBarrier
\subsection{Modality Availability Analysis}
\label{app:modality_study}

We compare seven available-source sets: audio (A), video (V), transcript text (T), and their four combinations. Binary classification and PHQ-8 estimation each use three seeds (0--2), giving 42 runs. All runs share the checked adapter and official 163/56/56 training/development/test split. Each starts without a historical reference, permits up to 256 initial candidate evaluations, and reserves up to 64 additional evaluations for one review round. Research memory is disabled. Binary selection uses Macro-F1; PHQ-8 selection uses negative RMSE. Search retains the incumbent with a one-standard-error promotion margin and the terminal reliability gate. Each locked one-standard-error deployment is evaluated once on the test set; all 42 evaluations completed successfully. These modality runs form a separate comparison from the main benchmark.

Available modalities constrain the search; a deployed pipeline may use a nonempty subset. Table~\ref{tab:modality-study} therefore reports both available and deployed sources. The six text-only runs were repeated after an adapter check incorrectly required aggregation changes to alter TF--IDF features. This correction preceded all test evaluations and restored the shared adapter; the superseded runs are excluded.

\paragraph{Modality-dependent outcomes.}
Making audio and text available yields the highest mean binary test Macro-F1 (0.6468), compared with 0.6107 for text and 0.5336 for audio alone. Video alone scores 0.6086, close to the text-only result. For PHQ-8, text-enabled arms achieve CCC of 0.2503--0.2864, while arms without text remain below 0.12. Text alone has the highest CCC and lowest RMSE. Adding video to the available T or A+T inputs lowers mean binary test Macro-F1. For PHQ-8, A+V+T exceeds A+T in CCC (0.2851 versus 0.2507) and remains close to text alone (0.2864). The A+V+T arm has the highest binary development Macro-F1 (0.7363) but the lowest binary test Macro-F1 (0.4855). Thus, its development-set advantage does not carry over to this test set. All three seeds share the same 56 test participants; their SD measures run variability, and these mean comparisons do not establish population-level differences.

\begin{table}[!htbp]
\caption{Available modalities on E-DAIC. Values are mean $\pm$ sample SD over three seeds. Development scores describe the selected candidate; test scores describe the locked deployment. PHQ-8 development RMSE is the negated selection score. Deployed sources are the union across ensemble members within a run; semicolons separate configurations observed across seeds. Best and second-best test means within each task and metric are bold and underlined.}
\label{tab:modality-study}
\centering
\begingroup
\mrAppendixTableStyle
\begin{tabular}{>{\raggedright\arraybackslash}p{.10\linewidth}*{3}{>{\centering\arraybackslash}p{.22\linewidth}}>{\raggedright\arraybackslash}p{\dimexpr.24\linewidth-10\tabcolsep\relax}}
\toprule
\rowcolor{mrAppHeader}
\multicolumn{5}{l}{\textbf{Binary classification}} \\
\rowcolor{mrAppHeader}
\textbf{Available} & \textbf{Dev Macro-F1} & \textbf{Test Macro-F1 $\uparrow$} & \textbf{Test Accuracy $\uparrow$} & \textbf{Deployed} \\
\midrule
\rowcolor{mrAppBand}
A & 0.6348 $\pm$ 0.0000 & 0.5336 $\pm$ 0.0679 & 0.5655 $\pm$ 0.0412 & A \\
V & 0.5754 $\pm$ 0.0476 & 0.6086 $\pm$ 0.0092 & 0.6488 $\pm$ 0.0103 & V \\
\rowcolor{mrAppBand}
T & 0.6750 $\pm$ 0.0531 & \underline{0.6107} $\pm$ 0.0535 & 0.6607 $\pm$ 0.0309 & T \\
AV & 0.6348 $\pm$ 0.0000 & 0.5612 $\pm$ 0.1064 & \underline{0.6726} $\pm$ 0.0103 & A+V \\
\rowcolor{mrAppBand}
AT & 0.7348 $\pm$ 0.0000 & \textbf{0.6468} $\pm$ 0.0153 & \textbf{0.6964} $\pm$ 0.0179 & A+T \\
VT & 0.7056 $\pm$ 0.0531 & 0.5888 $\pm$ 0.0456 & 0.6548 $\pm$ 0.0206 & V+T \\
\rowcolor{mrAppBand}
AVT & 0.7363 $\pm$ 0.0000 & 0.4855 $\pm$ 0.0148 & 0.6369 $\pm$ 0.0103 & A+T; A+V+T \\
\midrule[1.2pt]
\rowcolor{mrAppHeader}
\multicolumn{5}{l}{\textbf{PHQ-8 estimation}} \\
\rowcolor{mrAppHeader}
\textbf{Available} & \textbf{Dev RMSE} & \textbf{Test RMSE $\downarrow$} & \textbf{Test CCC $\uparrow$} & \textbf{Deployed} \\
\midrule
\rowcolor{mrAppBand}
A & 5.4463 $\pm$ 0.0681 & 6.3957 $\pm$ 0.0151 & 0.1113 $\pm$ 0.0450 & A \\
V & 6.0250 $\pm$ 0.0000 & 6.6659 $\pm$ 0.0945 & 0.0111 $\pm$ 0.0167 & V \\
\rowcolor{mrAppBand}
T & 5.4154 $\pm$ 0.0000 & \textbf{5.8081} $\pm$ 0.0265 & \textbf{0.2864} $\pm$ 0.0281 & T \\
AV & 5.5456 $\pm$ 0.3156 & 6.5999 $\pm$ 0.3840 & 0.0530 $\pm$ 0.0605 & A; A+V \\
\rowcolor{mrAppBand}
AT & 5.4544 $\pm$ 0.0676 & 5.8854 $\pm$ 0.0627 & 0.2507 $\pm$ 0.0400 & A+T \\
VT & 5.4544 $\pm$ 0.0676 & 5.8934 $\pm$ 0.0996 & 0.2503 $\pm$ 0.0773 & V+T \\
\rowcolor{mrAppBand}
AVT & 5.4154 $\pm$ 0.0000 & \underline{5.8266} $\pm$ 0.0075 & \underline{0.2851} $\pm$ 0.0055 & A+V+T \\
\bottomrule
\end{tabular}
\endgroup
\end{table}

\FloatBarrier
\subsection{Hyperparameter Sensitivity Analysis}
\label{app:supplementary_sensitivity}

\paragraph{Performance under different hyperparameters.}
Figure~\ref{fig:ablation-analysis}(b--d) reports the fold, budget, and round-limit scans. Five validation folds match ten on MPDD-2025 Elder. E-DAIC binary Macro-F1 is 0.5919 with 64 candidate evaluations versus 0.5492 with 256, and the round-limit scan peaks at one revision round. Both MPDD-2025 cohorts remain unchanged across this scan.

Figure~\ref{fig:appendix-metric-sensitivity} extends the main-text Macro-F1 scans with PHQ severity estimation and MPDD task scores. E-DAIC PHQ CCC remains 0.3201 across the budget settings. The two runs allowing three revisions yield 0.3201 and 0.3059, showing that a longer revision allowance need not improve severity prediction. MPDD-2025 ternary official scores remain unchanged across the round-limit scan, while the fold scan raises Young from 0.3615 at $K=5$ to 0.4048 at $K=10$ and leaves Elder at 0.6842.

\begin{figure}[!htbp]
\centering
\includegraphics[width=\linewidth]{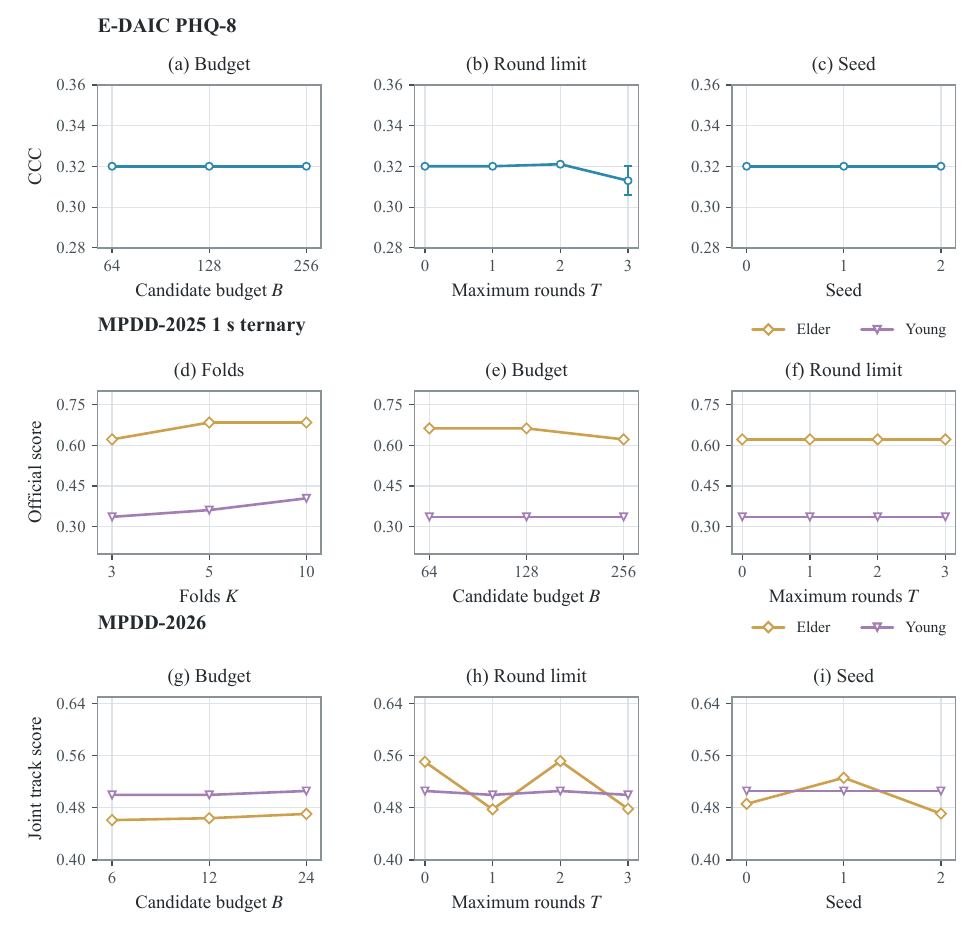}
\caption{Additional sensitivity scans from the recorded configurations. Top: E-DAIC PHQ CCC versus budget, maximum revision rounds, and seed. Middle: MPDD-2025 1\,s ternary official scores versus folds, budget, and maximum rounds. Bottom: MPDD-2026 joint track scores versus budget, maximum rounds, and seed. Points average available runs at each setting, and whiskers show repeated-run ranges where nonzero. Panels use separate metric scales. These are separate runs at different settings, rather than successive states of one trajectory.}
\label{fig:appendix-metric-sensitivity}
\end{figure}

MPDD-2026 uses fixed ten-fold inputs and selection by the maximum development track score. Its independent Elder runs at the same nominal default yield 0.4641, 0.4777, and 0.4861, while the budget scan spans 0.4612--0.4707. This observed repeat variation is larger than the budget-scan range. Using each scan's original baseline configuration, including $K=3$ in the configurable-fold scans, the baseline attains the highest test score in five of 17 scans, including four ties. These results support task-specific assessment of additional search rather than a uniform benefit from larger budgets or more rounds. The protocol below specifies the selection rules and repeated configurations.

\phantomsection\label{app:analysis-protocols}
\textbf{Parameter scans.} Corpus tasks use $K\in\{3,5,10\}$ where applicable, $B\in\{64,128,256\}$, $T\in\{0,1,2,3\}$, and seeds $\{0,1,2\}$. E-DAIC uses its designated development split, so $K$ is fixed. Mental Health defaults to $B=128$, and the other corpus tasks default to $B=256$. Their default revision count is $T=1$ and seed is 0. MPDD-2026 uses fixed ten-fold inputs and $B\in\{6,12,24\}$, with default $B=12$. Each scan changes one parameter and retains the recorded values of the others. We select $K=5$ as the default for configurable folds. Corpus scans retain the reference as a candidate and use one-standard-error selection with the terminal gate disabled. MPDD-2026 scans use the maximum development track score. Repeated jobs at a setting are averaged for plotting, with their observed range shown where nonzero. These ranges describe repeated observations rather than confidence intervals. The three classification sensitivity panels in Figure~\ref{fig:ablation-analysis}(f--h) use Macro-F1 from the same runs. Rings mark default settings, shaded bands show observed seed ranges, and thin vertical lines show repeated-run ranges. MPDD-2025 official composites and Macro-F1 are recorded separately. No averages across different metrics are taken.

\textbf{Repeated configurations.} Independent MPDD-2026 jobs at $K=10$, $B=12$, $T=1$, and seed 0 return Elder track scores of 0.4641, 0.4777, and 0.4861. Young returns 0.4999, 0.4999, and 0.5059. Each scan retains its own measured default, so the budget, round, and seed panels can differ at nominally identical configurations. Seeds index independent runs of the specified configuration.

\FloatBarrier
\subsection{Selection and Deployment Analysis}
\label{app:selection_deployment}

\FloatBarrier
\subsubsection{Selection-Rule Comparison}
\label{app:selection_rules}

An E-DAIC revision raised the best development Macro-F1 from 0.7363 to 0.7737, still below the acceptance threshold of $0.7172+0.0794=0.7966$. The procedure therefore retained the incumbent as the reference pipeline. Figure~\ref{fig:appendix-selection-feedback}(a,b) places this decision alongside a separate selection-rule comparison. Incumbent-based selection averaged 0.6843 test Macro-F1 over three runs; classic one-standard-error selection and maximum-score selection scored 0.4991 and 0.4876 in one run each. Table~\ref{tab:selection-rule} summarizes scores and run counts.

\begin{table}[!htbp]
\caption{Selection-rule sensitivity on E-DAIC binary classification. The incumbent-based rule reports the mean and range of three test runs; alternative rules were each evaluated once. Development and test scores are Macro-F1 ($\uparrow$).}
\label{tab:selection-rule}
\centering
\begingroup
\mrAppendixTableStyle
\begin{tabular}{>{\raggedright\arraybackslash}p{.46\linewidth}>{\centering\arraybackslash}p{.17\linewidth}>{\centering\arraybackslash}p{\dimexpr.37\linewidth-6\tabcolsep\relax}}
\toprule
\rowcolor{mrAppHeader}
\textbf{Selection rule} & \textbf{Development} & \textbf{Test} \\
\midrule
\rowcolor{mrAppOurs}
Incumbent $+$ one standard error & 0.7172 & $\mathbf{0.6843}\;[0.6655,0.6937]$ \\
Classic one-standard-error rule & 0.7737 & 0.4991 \\
\rowcolor{mrAppBand}
Maximum development score & 0.7737 & 0.4876 \\
\bottomrule
\end{tabular}
\endgroup
\end{table}

\FloatBarrier
\subsubsection{Deployment Configurations}
\label{app:deployment_rules}

Table~\ref{tab:deployment-rules} reports three historical deployment configurations. Reference $+$ one SE requires a challenger to exceed the carried reference by one subject-bootstrap standard error; these runs also use a rank-transfer check that can fall back to one-standard-error selection. The winner's-curse configuration adds the terminal margin $\lambda\,\mathrm{SE}\sqrt{2\ln N}$ with $\lambda=1$ for $N$ evaluated candidates. The search-selection configuration keeps the reference as an ordinary candidate, selects the simplest candidate within one SE of the best, and disables the terminal gate. Thus, ``search selection'' here means \texttt{one\_se\_of\_best}, not score argmax.

Each column comes from separate runs, so the table compares recorded configurations rather than reselecting one shared candidate pool. No configuration leads on every task: reference $+$ one SE, the winner's-curse configuration, and search selection are highest or tied on five, five, and four of the nine tasks, respectively. The Mental Health reference score of 0.9630 belongs to this historical run family; the earlier benchmark record reports a separate run at 0.9607.

\begin{table}[!htbp]
\caption{Historical deployment configurations, with one seed-0 run per cell. MPDD-2025 uses the official composite. The Mental Health winner's-curse-family record predates that gate and uses only the rank-transfer check. Best and second-best distinct values per row are bold and underlined. Each cell uses the latest logged run for its protocol, configuration, task, and seed.}
\label{tab:deployment-rules}
\centering
\begingroup
\mrAppendixTableStyle
\begin{tabular}{>{\raggedright\arraybackslash}p{.37\linewidth}*{3}{>{\centering\arraybackslash}p{\dimexpr(.63\linewidth-8\tabcolsep)/3\relax}}}
\toprule
\rowcolor{mrAppHeader}
\textbf{Task / metric} & \cellcolor{mrAppOurs}\shortstack{\textbf{Reference}\\\textbf{$+$ one SE}} & \shortstack{\textbf{Winner's-curse}\\\textbf{configuration}} & \shortstack{\textbf{Search selection}\\\textbf{(one SE)}} \\
\midrule
\rowcolor{mrAppBand}
E-DAIC binary (Macro-F1) & \cellcolor{mrAppOurs}\textbf{0.6937} & \underline{0.6814} & 0.5492 \\
E-DAIC PHQ-8 (CCC) & \cellcolor{mrAppOurs}\textbf{0.3201} & \textbf{0.3201} & \textbf{0.3201} \\
\rowcolor{mrAppBand}
Mental Health (Macro-F1) & \cellcolor{mrAppOurs}\textbf{0.9630} & \underline{0.9535} & \underline{0.9535} \\
MPDD-25 Elder 1\,s binary & \cellcolor{mrAppOurs}\textbf{0.7212} & \textbf{0.7212} & \textbf{0.7212} \\
\rowcolor{mrAppBand}
MPDD-25 Elder 1\,s ternary & \cellcolor{mrAppOurs}\textbf{0.6842} & \textbf{0.6842} & \underline{0.6220} \\
MPDD-25 Elder 1\,s quinary & \cellcolor{mrAppOurs}0.3962 & \textbf{0.5490} & \underline{0.4011} \\
\rowcolor{mrAppBand}
MPDD-25 Elder 5\,s binary & \cellcolor{mrAppOurs}\underline{0.7212} & \underline{0.7212} & \textbf{0.7917} \\
MPDD-25 Elder 5\,s ternary & \cellcolor{mrAppOurs}0.5937 & \textbf{0.7002} & \underline{0.6145} \\
\rowcolor{mrAppBand}
MPDD-25 Elder 5\,s quinary & \cellcolor{mrAppOurs}\underline{0.4268} & 0.3143 & \textbf{0.4326} \\
\bottomrule
\end{tabular}
\endgroup
\end{table}

\clearpage
\subsubsection{Terminal-Margin Sensitivity}
\label{app:gate_multiplier}

We replay the terminal winner's-curse check using the saved development gain $g$, reference standard error $\sigma$, and number of evaluated candidates $N$. A proposed promotion survives when
\begin{equation}
g>\lambda\sigma\sqrt{2\ln N},
\qquad
\lambda^*=\frac{g}{\sigma\sqrt{2\ln N}}.
\label{eq:app-gate-multiplier}
\end{equation}
The replay changes this terminal decision without conducting a new search. For each of five proposed promotions, it compares the promoted pipeline's saved single-model test replay with the reference deployment recorded when the gate reverted that promotion. The five switching points span 0.1483--0.3536, placing $\lambda=1$ beyond every switch.

Figure~\ref{fig:gate-multiplier-replay}(c,d) shows each task separately. Retaining the proposed promotion would lower the test score in three cases (MPDD-2025 Elder 1\,s ternary, 1\,s quinary, and 5\,s ternary), raise it in one (5\,s quinary), and leave it unchanged on E-DAIC binary. Three other tasks had no promotion and remain unchanged across the sweep: E-DAIC PHQ-8 and MPDD-2025 Elder 1\,s/5\,s binary. Mental Health lacks a winner's-curse promotion record and is excluded. The replay describes the empirical tradeoff of this terminal rule; its multiplier does not calibrate the uniform population-error bound in Appendix~\ref{app:rsi_population_improvement}.

\begin{figure}[!htbp]
\centering
\includegraphics[width=\linewidth]{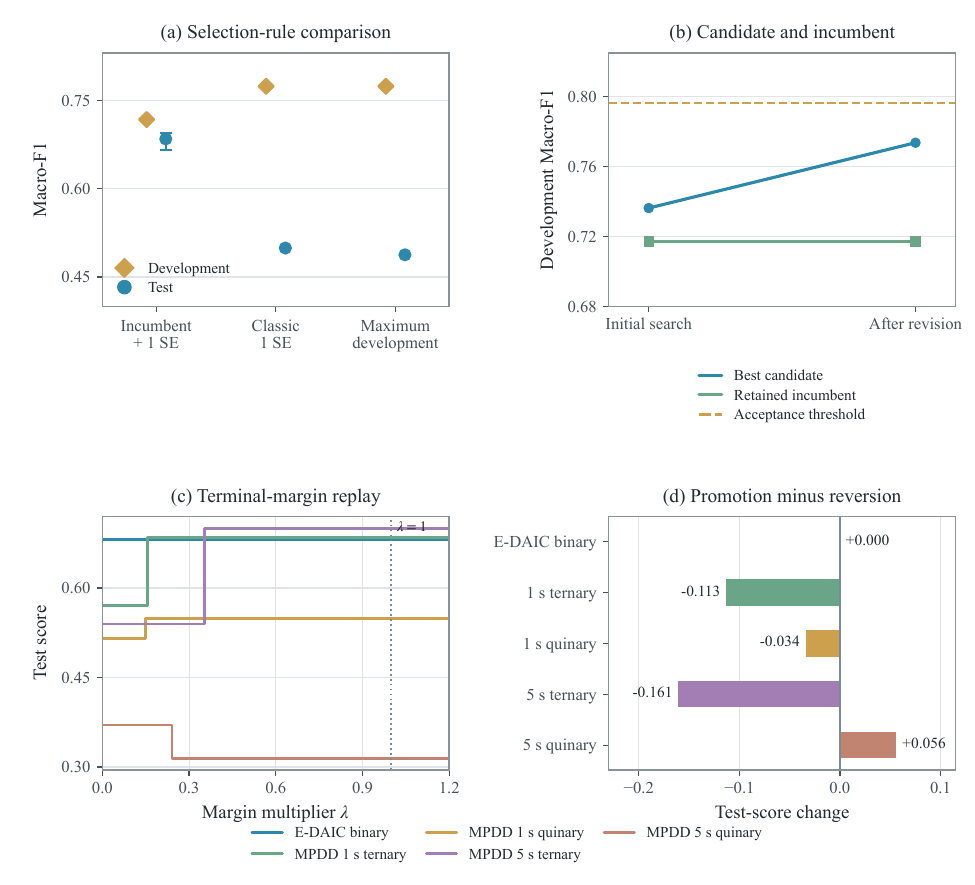}
\caption{Selection and verification analyses. (a,b) E-DAIC selection-rule and revision records; the test whisker spans three incumbent-based runs, while each alternative has one run. (c,d) A separate terminal-margin replay of five proposed promotions. E-DAIC uses Macro-F1; MPDD-2025 Elder uses its official score. Each task is shown separately. Panel (d) compares saved promoted single-model predictions with the recorded reverted deployment.}
\label{fig:appendix-selection-feedback}
\label{fig:gate-multiplier-replay}
\end{figure}

\FloatBarrier
\subsection{Search Strategy Analysis}
\label{app:search_axes}

Table~\ref{tab:search-axes} reports a separate historical search comparison. Staged search screens recipes before tuning predictors; fixed-recipe and fixed-predictor arms hold the named axis constant. The one-SE arm changes selection to the classic one-standard-error rule, while the random-space arms replace the proposed classifier or recipe space with a random space of the same size. Other arms use the reference-based deployment configuration described in Appendix~\ref{app:deployment_rules}. This search comparison is separate from the module ablations in Table~\ref{tab:ablation-full}.

Full search has the highest reported E-DAIC binary Macro-F1, at 0.6937 versus 0.6814 for the next-best arm. The random-predictor arm has higher PHQ-8 CCC, and staged search has higher Mental Health Macro-F1. Some restricted searches also exceed full search on the MPDD-2025 ternary and quinary rows. Broader search therefore does not consistently yield higher test scores in these comparisons. Realized effort also differs: E-DAIC binary full search evaluates 320 candidates, compared with 24 under a fixed recipe and 15 under a fixed predictor. These results reflect differences in both search coverage and realized cost.

\begin{table}[!htbp]
\caption{Search and selection variants. Ordinary arms retain one seed-0 run per task; random-space arms average three seeds. MPDD-2025 rows then average the 1\,s and 5\,s Elder tasks. Metrics are Macro-F1 for binary/status classification, CCC for PHQ-8, and the official composite for MPDD-2025. MPDD random recipes are inapplicable because the proposed recipe space already exhausts the 12 eligible combinations. Best and second-best distinct values are bold and underlined. For each protocol, configuration, task, and seed, the latest logged run is used before seed and window aggregation.}
\label{tab:search-axes}
\centering
\begingroup
\mrAppendixDenseTableStyle
\begin{tabular}{>{\raggedright\arraybackslash}p{.22\linewidth}*{7}{>{\centering\arraybackslash}p{\dimexpr(.78\linewidth-16\tabcolsep)/7\relax}}}
\toprule
\rowcolor{mrAppHeader}
\textbf{Task} & \cellcolor{mrAppOurs}\textbf{Full} & \textbf{Staged} & \shortstack{\textbf{Fixed}\\\textbf{recipe}} & \shortstack{\textbf{Fixed}\\\textbf{predictor}} & \textbf{1-SE} & \shortstack{\textbf{Random}\\\textbf{predictor}} & \shortstack{\textbf{Random}\\\textbf{recipe}} \\
\midrule
\rowcolor{mrAppBand}
E-DAIC binary & \cellcolor{mrAppOurs}\textbf{0.6937} & 0.4991 & \underline{0.6814} & 0.4593 & 0.4991 & 0.5564 & 0.5735 \\
E-DAIC PHQ-8 & \cellcolor{mrAppOurs}\underline{0.3201} & \underline{0.3201} & \underline{0.3201} & \underline{0.3201} & \underline{0.3201} & \textbf{0.3307} & \underline{0.3201} \\
\rowcolor{mrAppBand}
Mental Health & \cellcolor{mrAppOurs}\underline{0.9630} & \textbf{0.9711} & 0.9527 & 0.9305 & 0.9535 & 0.9527 & 0.9539 \\
MPDD-25 binary & \cellcolor{mrAppOurs}0.7212 & \underline{0.7543} & 0.7212 & 0.7212 & \textbf{0.7565} & 0.7266 & --- \\
\rowcolor{mrAppBand}
MPDD-25 ternary & \cellcolor{mrAppOurs}0.6390 & 0.6443 & \textbf{0.6922} & 0.6482 & 0.6183 & \underline{0.6619} & --- \\
MPDD-25 quinary & \cellcolor{mrAppOurs}0.4115 & 0.4066 & \underline{0.4514} & 0.4317 & 0.4169 & \textbf{0.4624} & --- \\
\bottomrule
\end{tabular}
\endgroup
\end{table}

\paragraph{A recorded recipe--predictor interaction.}
\phantomsection
\label{app:recipe_interaction}
Let $f(r,c)=\widehat{S}_{\mu}((r,c);\mathcal{D})$ denote the development score of recipe $r$ with predictor configuration $c$.
For alternatives $r'$ and $c'$, recipe--predictor interaction can be expressed as
\begin{equation}
\kappa(r,r';c,c')
=\bigl[f(r',c')-f(r,c')\bigr]-\bigl[f(r',c)-f(r,c)\bigr].
\label{eq:target-coupling}
\end{equation}
The contrast $\kappa$ measures how the effect of replacing $r$ with $r'$ changes between configurations $c$ and $c'$.
A nonzero value describes a nonseparable score interaction.
CPE evaluates candidate combinations directly; this contrast characterizes the interaction illustrated below.

Four saved E-DAIC evaluations illustrate a recipe--predictor interaction in initial search round 0 of run 2414862. Each candidate was fitted on 163 training participants and evaluated on the same 56 development participants. Both recipe families use linear SVC with $C=0.3$; within each family, only the selected feature count $k$ changes. Increasing $k$ has opposite effects across the two recipes, as shown below.

\begin{center}
\begingroup
\mrAppendixTableStyle
\begin{tabular}{>{\raggedright\arraybackslash}p{.61\linewidth}*{3}{>{\centering\arraybackslash}p{\dimexpr(.39\linewidth-8\tabcolsep)/3\relax}}}
\toprule
\rowcolor{mrAppHeader}
\textbf{Recipe} & $k=32$ & $k=128$ & \textbf{Change} \\
\midrule
\rowcolor{mrAppBand}
BoAW MFCC + BoVW OpenFace + TF-IDF (meanstd) & 0.613793 & 0.536643 & $-0.077150$ \\
eGeMAPS + TF-IDF (meanstd\_delta) & 0.613793 & 0.660606 & $+0.046813$ \\
\bottomrule
\end{tabular}
\endgroup
\end{center}

Raising $k$ from 32 to 128 lowers the first recipe's Macro-F1 by 0.077 and raises the second's by 0.047. The recipes tie at $k=32$, so a staged procedure screening at that setting cannot distinguish them by Macro-F1. If it retains the first recipe, subsequent tuning over these two $k$ values yields 0.6138; the best recorded pair instead scores 0.6606. At $k=32$, the candidates also disagree on 14 of 56 participants despite identical Macro-F1 and confusion matrices. Stored predictions expose these participant-level differences to review, beyond what aggregate scores reveal.

\clearpage
\section{RSI Experiments}
\label{app:revision_case}

\FloatBarrier
\subsection{Research-Memory Ablation}
\label{app:rsi_trajectories}

\textbf{Research memory.} Table~\ref{tab:research-memory-ablation} reports the separate comparison of memory disabled and evolving memory. Each arm uses at most three revision rounds, an evenly divided revision reserve, and the incumbent acceptance rule. There are 27 task--seed pairs and 54 completed runs. Three runs discarded after a response-parser failure or its strict-parser configuration are excluded in favor of their recorded replacements. Final development scores are identical in 23 pairs. Of these, ten have different test scores. The test comparison contains seven gains, thirteen ties, and seven losses for evolving memory. Its two-sided sign-test $p$-value over the fourteen non-tied pairs is 1.000. The E-DAIC PHQ seed-0 pair starts with different recorded incumbent scores, and actual stopping rounds can differ across arms.

\begin{table}[!htbp]
\caption{Research-memory ablation. Test scores and LLM calls are means over three seeds. $\Delta$ is evolving minus off on the indicated task metric.}
\label{tab:research-memory-ablation}
\centering
\begingroup
\mrAppendixTableStyle
\begin{tabular}{>{\raggedright\arraybackslash}p{.34\linewidth}*{3}{>{\centering\arraybackslash}p{\dimexpr(.40\linewidth-10\tabcolsep)/3\relax}}>{\centering\arraybackslash}p{.26\linewidth}}
\toprule
\rowcolor{mrAppHeader}
Task & Memory off & \cellcolor{mrAppOurs}Evolving & $\Delta$ & Calls (off $\to$ evolving) \\
\midrule
\rowcolor{mrAppBand}
E-DAIC binary (F1) & 0.4927 & \cellcolor{mrAppOurs}0.5099 & +0.0172 & 19.7 $\to$ 33.7 \\
E-DAIC PHQ-8 (CCC) & 0.3142 & \cellcolor{mrAppOurs}0.3311 & +0.0168 & 28.7 $\to$ 29.3 \\
\rowcolor{mrAppBand}
Mental Health (F1) & 0.9647 & \cellcolor{mrAppOurs}0.9582 & -0.0065 & 27.0 $\to$ 33.0 \\
MPDD-25 Elder 1\,s binary & 0.7802 & \cellcolor{mrAppOurs}0.7886 & +0.0084 & 22.0 $\to$ 18.0 \\
\rowcolor{mrAppBand}
MPDD-25 Elder 1\,s ternary & 0.6557 & \cellcolor{mrAppOurs}0.6335 & -0.0222 & 18.0 $\to$ 39.7 \\
MPDD-25 Elder 1\,s quinary & 0.4164 & \cellcolor{mrAppOurs}0.4164 & +0.0000 & 22.0 $\to$ 25.7 \\
\rowcolor{mrAppBand}
MPDD-25 Elder 5\,s binary & 0.7827 & \cellcolor{mrAppOurs}0.7827 & +0.0000 & 19.7 $\to$ 21.3 \\
MPDD-25 Elder 5\,s ternary & 0.6255 & \cellcolor{mrAppOurs}0.6330 & +0.0075 & 25.7 $\to$ 34.0 \\
\rowcolor{mrAppBand}
MPDD-25 Elder 5\,s quinary & 0.4514 & \cellcolor{mrAppOurs}0.4514 & +0.0000 & 17.3 $\to$ 30.0 \\
\bottomrule
\end{tabular}
\endgroup
\end{table}

We analyze the 54 completed runs from the research-memory comparison: nine tasks, three seeds, and memory disabled or evolving. Runs allow up to three review rounds, and use an evenly divided revision reserve. Figure~\ref{fig:appendix-rsi-trajectories} shows every retained development-score trajectory. The horizontal axis counts recorded steps that evaluated additional candidates. It starts at initialization and ends at the final available record. Original review-round identifiers and intermediate test scores are absent from the exported summaries, so these curves describe recorded development progress.

\begin{figure}[!htbp]
\centering
\includegraphics[width=\linewidth]{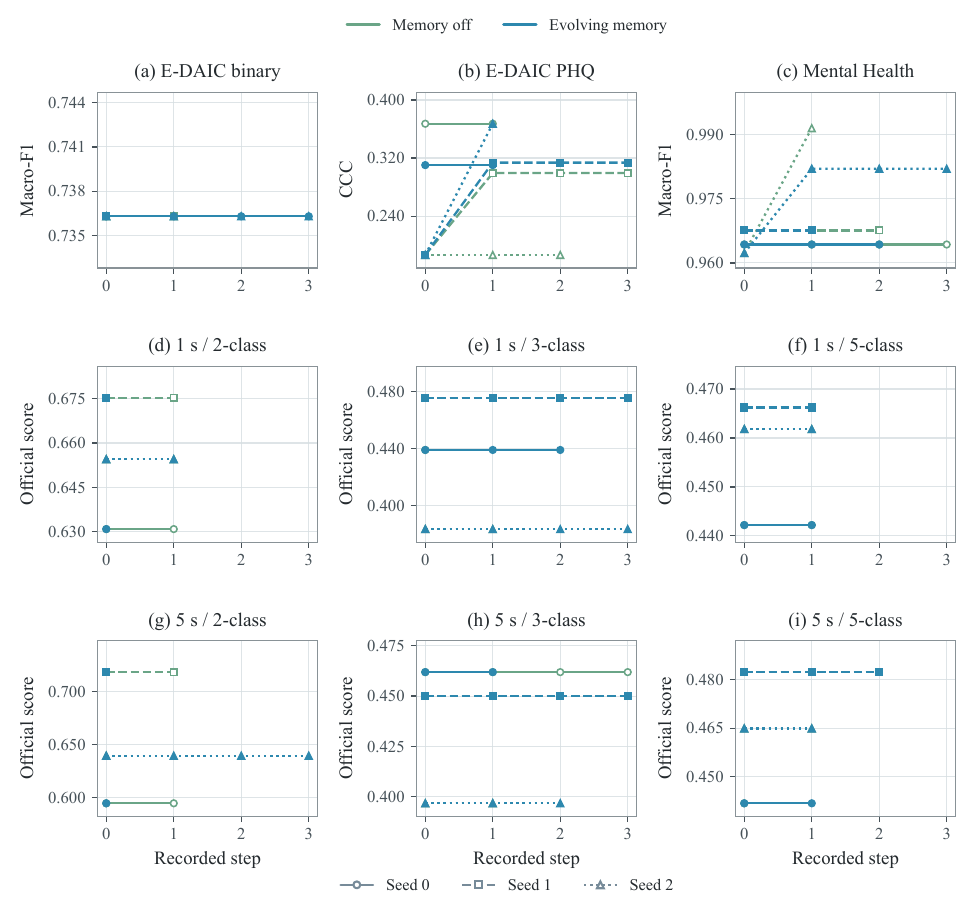}
\caption{Retained development scores across recorded revision steps in the separate research-memory experiment. Colors distinguish memory settings, and line styles and markers identify seeds. Each panel includes all six task-specific runs. Scores are Macro-F1 for E-DAIC binary and Mental Health, CCC for E-DAIC PHQ, and the official composite for MPDD-2025 Elder. Curves stop at their last record, and overlapping trajectories retain their measured values.}
\label{fig:appendix-rsi-trajectories}
\end{figure}

Of the 54 runs, 45 evaluated additional candidates after initialization, and five improved the retained development score. All five gains occurred at the first recorded revision step: three on E-DAIC PHQ severity estimation and two on Mental Health classification. No retained-score increase occurred in the E-DAIC binary or MPDD-2025 classification trajectories. Gains were thus confined to two tasks in these runs; Table~\ref{tab:research-memory-ablation} reports their final test outcomes.\paragraph{Failure modes.}
\phantomsection\label{app:failure_modes}
The saved records reveal three limits of the improvement process. We distinguish an unchanged retained predictor, ambiguity in the evaluation signal, and an unsuccessful memory update because they require different responses.

\paragraph{Additional exploration without a retained gain.}
Across both memory settings, 28 later recorded steps evaluated 586 additional candidates without increasing the retained development score (Table~\ref{tab:recorded_revision_counts}). Extra evaluations therefore did not consistently improve depression classification or severity estimation. Candidate-level scores and acceptance margins would be needed to distinguish weak proposals from useful gains rejected by verification. These counts describe observed stagnation in the retained score.

\begin{table}[!htbp]
\caption{Candidate evaluations and retained-score gains by recorded revision step. Each run contributes only to steps present in its summary. The step counts cover the 54-run research-memory experiment.}
\label{tab:recorded_revision_counts}
\centering
\begingroup
\mrAppendixTableStyle
\begin{tabular}{>{\raggedright\arraybackslash}p{.25\linewidth}*{4}{>{\centering\arraybackslash}p{\dimexpr(.75\linewidth-10\tabcolsep)/4\relax}}}
\toprule
\rowcolor{mrAppHeader}
\textbf{Memory} & \textbf{Step} & \textbf{Runs} & \shortstack{\textbf{New}\\\textbf{evaluations}} & \shortstack{\textbf{Retained-score}\\\textbf{gains}} \\
\midrule
\rowcolor{mrAppBand}
Off & 1 & 23 & 751 & 2 \\
Off & 2 & 5 & 107 & 0 \\
\rowcolor{mrAppBand}
Off & 3 & 3 & 52 & 0 \\
\midrule
Evolving & 1 & 22 & 727 & 3 \\
\rowcolor{mrAppBand}
Evolving & 2 & 12 & 270 & 0 \\
Evolving & 3 & 8 & 157 & 0 \\
\bottomrule
\end{tabular}
\endgroup
\end{table}

\paragraph{Aggregate scores conceal differences between participants.}
In the E-DAIC example in Appendix~\ref{app:recipe_interaction}, two candidates have equal development Macro-F1 and confusion matrices but disagree on 14 of 56 participants. Their aggregate scores therefore conceal differences in which participants are misclassified. The memory comparison contains 23 task--seed pairs with equal final development scores, ten of which have different test scores. These observations motivate inspecting saved predictions alongside aggregate development metrics. The separate test comparison shows mixed memory benefits: seven gains, thirteen ties, and seven losses.

\paragraph{Unsuccessful memory updates.}
The evolving-memory summaries record 59 successful updates and two failed updates. The failures occur in seed 2 of MPDD-2025 Elder 1\,s quinary and 5\,s binary classification, respectively. These runs still produced final predictions, but the summaries do not identify the update-error causes. The implementation retains the previous bank when update validation fails, as described in Appendix~\ref{app:research_memory}. This separates failure to update research experience from failure to produce a depression prediction.

\FloatBarrier
\subsection{Eight-Round RSI Analysis}
\label{app:rsi-eight-round}

\FloatBarrier
\subsubsection{Study Design and Protocol Amendment}
\label{app:rsi_study_design}

This separate study contains 27 runs: E-DAIC binary classification, E-DAIC PHQ-8 estimation, and MPDD-2025 Elder 1\,s ternary classification, each with three seeds and three arms (review with evolving memory, review without memory, and random revision). The amended protocol requests revisions through eight rounds; its timing and the original stopping condition are reported below.

\paragraph{Run budget and deployment.}
Runs use up to 256 initial candidate evaluations and a reserve of 256 evaluations for eight revisions. The reserve releases 32 evaluations per round with carry-over. The review arms differ in whether they update research memory. Random revision samples four recipe or predictor configurations without replacement, choosing the axis with equal probability while both remain available. All arms use the incumbent promotion rule and terminal reliability gate. Final deployment uses up to five of the simplest pipelines within one subject-bootstrap standard error of the selected candidate; classification aggregates their votes. The arms share budget caps, while their realized evaluation counts differ.

\paragraph{Protocol amendment and original stopping condition.}
The original protocol ended review on Chair acceptance or an unusable revision response. The review arms completed a median of one round with new candidate evaluations. Random revision completed eight such rounds on E-DAIC and five on MPDD-2025, where its drawable space was exhausted. The amendment was written after 19 of the 27 original runs had finished. Round counts, Chair verdicts, and logged development and test scores had been observed; the alignment, lexical, activation, and per-round replay endpoints had not yet been computed.

The amendment reran all 18 review-enabled configurations. While budget remained, a nonrevision verdict received one explicit request for a revision; a round that still produced no revision added no candidates, and the loop continued up to round eight. Promotion was unchanged. The nine random-revision runs were reused across conditions. Under the original stopping condition, P1 could be estimated for only one run per review arm; the P2 contrast against random revision was $-0.0122$ [$-0.0346$, $0.0003$]. Neither condition met the preregistered directional-evolution criterion.

\clearpage
\subsubsection{Measures and Primary Results}
\label{app:rsi_measures}

\paragraph{Measurement definitions.}
L1 averages signed Spearman correlations between a fitted pipeline's continuous output and nine prespecified E-DAIC markers: pitch variability, loudness mean and variability, voiced-frame fraction, AU12 and AU06 intensity, head movement, first-person pronoun rate, and absolutist-word rate. The protocol assigns negative signs to the first seven and positive signs to the two lexical markers. Pipelines are refitted on the 163 training participants, and alignment is measured on the 112 development and test participants without using their labels. The incumbent series supplies P2; the round's best new candidate is measured separately. Correlations with training labels provide a descriptive reference, averaging 0.0721 for binary status and 0.0724 for PHQ-8.

L2 is a candidate's fraction of nonempty source slots belonging to eGeMAPS, OpenFace, or TF--IDF, averaged over newly evaluated candidates in the round. Its per-run slope supplies P1. This candidate-weighted measure differs from the source-slot composition in Figure~\ref{fig:rsi-round-evolution}. L3 counts clinical-mechanism keywords relative to all clinical and engineering keyword hits in lesson versions or review minutes. Units with no hits are excluded; task words such as depression, PHQ, and severity belong to neither list. Figure~\ref{fig:rsi-source-memory-endpoints}(a--d) reports these measures.

Figure~\ref{fig:rsi-round-evolution} uses only the six E-DAIC runs per arm and averages the source-family proportions among each round's newly evaluated candidates. Warm-colored families comprise eGeMAPS prosody/voice quality, OpenFace facial actions/pose/gaze, and TF-IDF lexical content; cool-colored families comprise MFCCs, codebooks, and CNN embeddings. Round 0 denotes initialization. Runs contribute only when that round contains new candidates. The black curves sum the three warm-colored source-family shares. These source-slot proportions describe search composition.

The preregistered primary tests use per-run slopes of interpretable-source share (P1) and incumbent alignment with the prespecified markers (P2). The evolving-minus-random slope contrasts are $0.0224$ [$-0.0027$, $0.0466$] and $-0.0003$ [$-0.0031$, $0.0023$], respectively. The brackets are 95\% intervals from 10,000 bootstrap resamples of runs within each arm; each primary contrast uses six E-DAIC runs per arm. Neither primary endpoint meets the preregistered criterion for directional evolution. The plotted case is the evolving E-DAIC run with the most memory revisions and retirements under the prespecified selection rule: binary classification, seed 1. Its callouts excerpt the first, middle, and last versions of its most-revised entry, with omissions marked by ellipses.

For the secondary lexical measure, the first author reviewed AI prelabels and confirmed all 30 sampled lesson versions as engineering under the narrow definition. Agreement with the keyword rule was 30/30; Cohen's $\kappa$ is undefined because both assignments contain only one category. The reviewer already knew the keyword result, and round information remained in the text, departing from the planned blind check. The broader post-hoc categorization was excluded from this check.

\paragraph{Secondary outcomes.}
For the best newly evaluated candidate, evolving memory increases the marker-alignment slope relative to review without memory by 0.0049 [0.0003, 0.0092], whereas its contrast with random revision is 0.0045 [$-0.0020$, 0.0113]. The best-new candidate's source-share contrast against random revision is 0.0034 [$-0.0377$, 0.0418]. The clinical-keyword share in review minutes declines faster with evolving memory than without it, with slope contrast $-0.0007$ [$-0.0014$, $-0.0001$]. These secondary intervals are descriptive and unadjusted for multiplicity; they do not change the primary decision criterion.

\begin{figure}[!htbp]
\centering
\includegraphics[width=\linewidth]{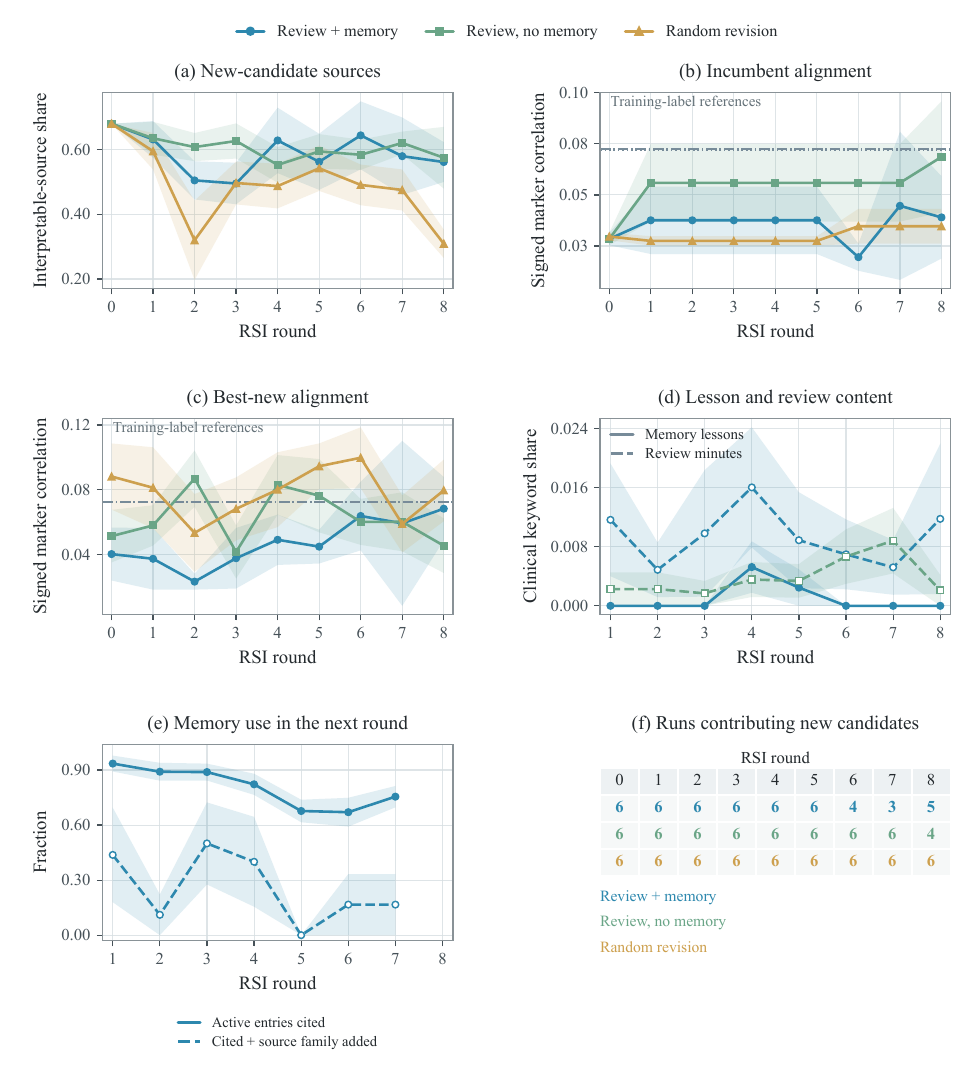}
\caption{Source and memory measures in the amended eight-round study. (a-c) Six E-DAIC runs per arm; curves average available records with $\pm$1 standard-error bands. Panel (a) is candidate-weighted L2; it differs from the source-slot proportions in the main figure. Gray lines in (b,c) are binary and PHQ-8 training-label references. (d) Lessons and review minutes from nine runs per review arm. (e) Citation rates use nine memory-enabled runs; action matching uses the six E-DAIC runs with applicable entries. (f) Run counts for panel (a), with rows following the arm legend. Counts describe available task-run records, not additional independent seeds.}
\label{fig:rsi-source-memory-endpoints}
\end{figure}

\FloatBarrier
\subsubsection{Test Trajectories and Memory Use}
\label{app:rsi_test_memory}

L4 replays each round's deployment on its task's test set, with selection based on development records. Figure~\ref{fig:rsi-test-memory-trajectories} and Table~\ref{tab:rsi-secondary-test} report task-specific trajectories and slope contrasts. E-DAIC PHQ-8 favors evolving memory over review without memory in this comparison. The binary and MPDD-2025 contrasts include zero, and all evolving-minus-random intervals include zero.

\begin{figure}[!htbp]
\centering
\includegraphics[width=\linewidth]{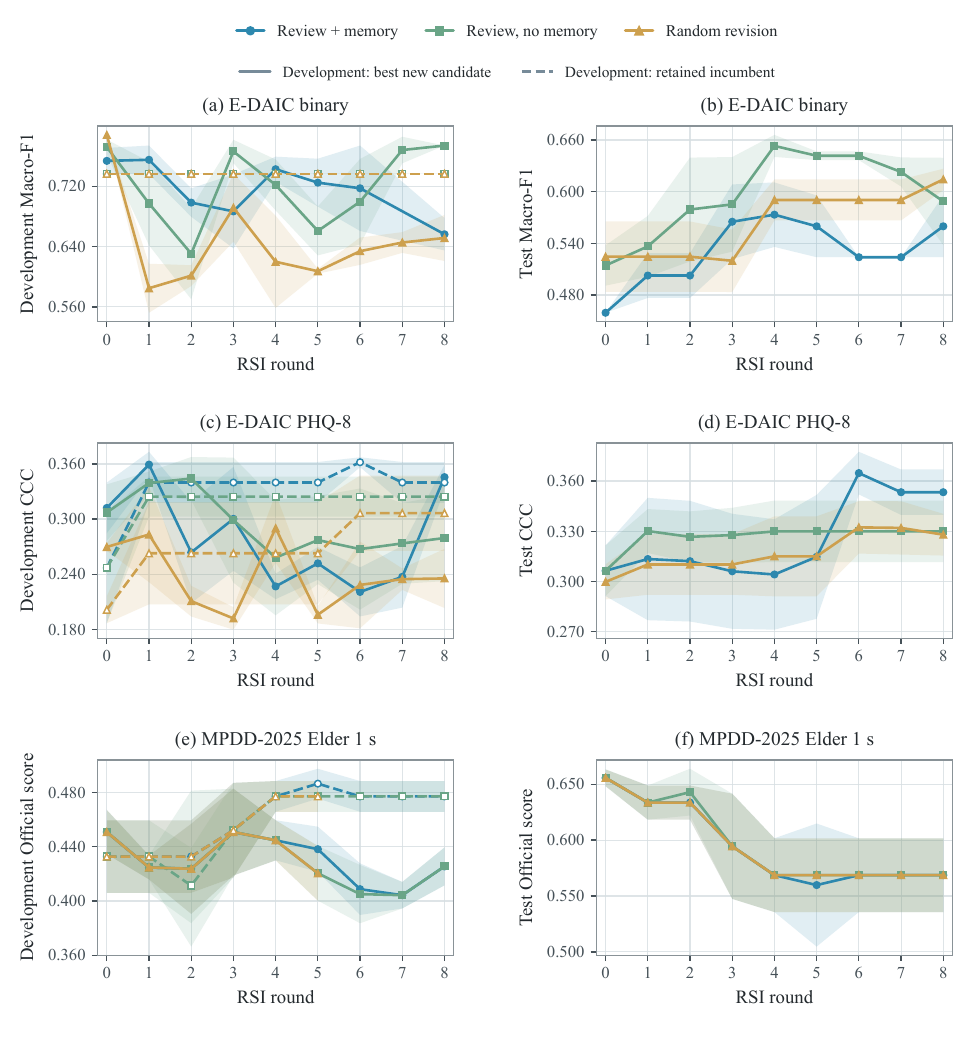}
\caption{Development and test trajectories under the amended eight-round protocol. Rows show E-DAIC binary classification, E-DAIC PHQ-8, and MPDD-2025 Elder 1 s ternary classification. Left: best newly evaluated candidate (solid) and retained incumbent (dashed), using available development records. Right: the locked deployment at each round. After a run ends, its final deployment score is carried forward through round eight; unrecorded intermediate rounds remain excluded from that round's mean. Each task and arm has three runs; intermediate test means use one to three available records. Bands show $\pm$1 standard error when at least two records are available. Each task retains its own metric scale.}
\label{fig:rsi-test-memory-trajectories}
\label{fig:rsi-exploration-effort}
\end{figure}
\FloatBarrier

An active entry is counted as used when its identifier appears in the next round's minutes or proposer reasoning. A second, E-DAIC-specific proxy checks whether the next round adds the source family named in a cited entry's proposed action. These measures record citation and action matching; they do not establish that an entry caused a revision.

Figure~\ref{fig:rsi-source-memory-endpoints}(e) shows the two memory-use measures. Table~\ref{tab:rsi-memory-use} reports their run-level counts. For the test curves, final deployment scores are carried forward after a run ends; unrecorded intermediate rounds remain excluded from that round's mean.

\begin{table}[!htbp]
\caption{Secondary L4 slope contrasts per round. Each arm has three runs per task. Brackets are unadjusted 95\% run-bootstrap intervals; task metrics are kept separate.}
\label{tab:rsi-secondary-test}
\centering
\begingroup
\mrAppendixTableStyle
\begin{tabular}{>{\raggedright\arraybackslash}p{.35\linewidth}*{2}{>{\centering\arraybackslash}p{\dimexpr(.65\linewidth-6\tabcolsep)/2\relax}}}
\toprule
\rowcolor{mrAppHeader}
\textbf{Task / test metric} & \textbf{Evolving $-$ random} & \textbf{Evolving $-$ no memory} \\
\midrule
\rowcolor{mrAppBand}
E-DAIC binary / Macro-F1 & $+0.0004$ [-0.0176, 0.0185] & $-0.0017$ [-0.0206, 0.0173] \\
E-DAIC PHQ-8 / CCC & $+0.0024$ [-0.0023, 0.0071] & $+0.0045$ [0.0013, 0.0079] \\
\rowcolor{mrAppBand}
MPDD-25 Elder 1\,s ternary / official & $+0.0076$ [-0.0093, 0.0245] & $+0.0002$ [-0.0121, 0.0125] \\
\bottomrule
\end{tabular}
\endgroup
\end{table}

\begin{table}[!htbp]
\caption{Research-memory use in the nine evolving eight-round runs. Operations count entries added, revised and retired over the run. Citation rate is the fraction of active entries whose identifier appears in the next round's minutes or proposals. Action matching counts activated entries whose next action names a source family and whose next round adds that family, over the activated entries that name one; this source-family proxy is computed for E-DAIC only.}
\label{tab:rsi-memory-use}
\centering
\begingroup
\mrAppendixTableStyle
\begin{tabular}{>{\raggedright\arraybackslash}p{.28\linewidth}*{4}{>{\centering\arraybackslash}p{\dimexpr(.72\linewidth-10\tabcolsep)/4\relax}}}
\toprule
\rowcolor{mrAppHeader}
\textbf{Task} & \textbf{Seed} & \shortstack{\textbf{Add / revise}\\\textbf{/ retire}} & \textbf{Citation rate} & \shortstack{\textbf{Action-matched}\\\textbf{use}} \\
\midrule
\rowcolor{mrAppBand}
E-DAIC binary & 0 & 14 / 12 / 2 & 0.73 & 1 / 7 \\
E-DAIC binary & 1 & 11 / 15 / 2 & 0.82 & 6 / 22 \\
\rowcolor{mrAppBand}
E-DAIC binary & 2 & 17 / 10 / 1 & 0.72 & 6 / 21 \\
E-DAIC PHQ-8 & 0 & 14 / 15 / 0 & 0.80 & 1 / 5 \\
\rowcolor{mrAppBand}
E-DAIC PHQ-8 & 1 & 14 / 12 / 1 & 0.66 & 7 / 33 \\
E-DAIC PHQ-8 & 2 & 8 / 6 / 2 & 0.81 & 6 / 14 \\
\rowcolor{mrAppBand}
MPDD-25 Elder 1\,s ternary & 0 & 10 / 8 / 3 & 0.83 & --- \\
MPDD-25 Elder 1\,s ternary & 1 & 11 / 17 / 1 & 0.76 & --- \\
\rowcolor{mrAppBand}
MPDD-25 Elder 1\,s ternary & 2 & 12 / 14 / 4 & 0.86 & --- \\
\bottomrule
\end{tabular}
\endgroup
\end{table}

\clearpage
\subsubsection{Exploration and Execution Diagnostics}
\label{app:rsi_execution_diagnostics}

Table~\ref{tab:rsi-exploration-budget} reports additional evaluation effort, while Figure~\ref{fig:rsi-exploration-effort}(a,c,e) shows retained development progress. Newly explored candidates can vary while the incumbent stays unchanged. Across the original and amended protocols, all 45 distinct runs with saved test replays reproduce their recorded final test score. Five amended E-DAIC binary runs each contain one failed review: evolving memory at seeds 0--2 and review without memory at seeds 1 and 2. Their available rounds are analyzed as recorded. The selected binary seed-1 case contains 11 memory additions, 15 revisions, and two retirements; the illustrated entry concerns promotion margins and bootstrap uncertainty.

Figure~\ref{fig:rsi-source-memory-endpoints}(f) gives the number of E-DAIC runs contributing new candidates at each round. The six runs in each arm comprise two tasks with three seeds each. Later-round means can use fewer runs, and the mean curves should be read alongside their sampling bands and contribution counts. Table~\ref{tab:rsi-diagnostics} lists review and exploration counts for all 27 amended runs.

\begin{table}[!htbp]
\caption{Review and exploration counts for every amended eight-round run. Reviews counts scheduled review rounds, including failures. Requested / declined counts explicit follow-up requests for a revision and requests that still produced none. New rounds counts revision rounds with newly evaluated candidates. Random revision exhausts its drawable MPDD-2025 space after five rounds.}
\label{tab:rsi-diagnostics}
\centering
\begingroup
\mrAppendixDenseTableStyle
\begin{tabular}{>{\raggedright\arraybackslash}p{.24\linewidth}>{\raggedright\arraybackslash}p{.12\linewidth}*{6}{>{\centering\arraybackslash}p{\dimexpr(.64\linewidth-16\tabcolsep)/6\relax}}}
\toprule
\rowcolor{mrAppHeader}
Task & Arm & Seed & Reviews & Failed & \shortstack{Requested\\/ declined} & \shortstack{New\\rounds} & Candidates \\
\midrule
\rowcolor{mrAppBand}
E-DAIC binary & Memory & 0 & 8 & 1 & 3 / 1 & 6 & 448 \\
E-DAIC binary & Memory & 1 & 8 & 1 & 1 / 0 & 7 & 512 \\
\rowcolor{mrAppBand}
E-DAIC binary & Memory & 2 & 8 & 1 & 2 / 1 & 6 & 512 \\
E-DAIC binary & No memory & 0 & 8 & 0 & 6 / 0 & 8 & 512 \\
\rowcolor{mrAppBand}
E-DAIC binary & No memory & 1 & 8 & 1 & 4 / 0 & 7 & 480 \\
E-DAIC binary & No memory & 2 & 8 & 1 & 5 / 0 & 7 & 480 \\
\rowcolor{mrAppBand}
E-DAIC binary & Random & 0 & --- & --- & --- & 8 & 512 \\
E-DAIC binary & Random & 1 & --- & --- & --- & 8 & 512 \\
\rowcolor{mrAppBand}
E-DAIC binary & Random & 2 & --- & --- & --- & 8 & 512 \\
E-DAIC PHQ-8 & Memory & 0 & 8 & 0 & 3 / 0 & 8 & 512 \\
\rowcolor{mrAppBand}
E-DAIC PHQ-8 & Memory & 1 & 8 & 0 & 4 / 1 & 7 & 512 \\
E-DAIC PHQ-8 & Memory & 2 & 8 & 0 & 4 / 0 & 8 & 512 \\
\rowcolor{mrAppBand}
E-DAIC PHQ-8 & No memory & 0 & 8 & 0 & 2 / 0 & 8 & 512 \\
E-DAIC PHQ-8 & No memory & 1 & 8 & 0 & 3 / 0 & 8 & 512 \\
\rowcolor{mrAppBand}
E-DAIC PHQ-8 & No memory & 2 & 8 & 0 & 0 / 0 & 8 & 512 \\
E-DAIC PHQ-8 & Random & 0 & --- & --- & --- & 8 & 512 \\
\rowcolor{mrAppBand}
E-DAIC PHQ-8 & Random & 1 & --- & --- & --- & 8 & 512 \\
E-DAIC PHQ-8 & Random & 2 & --- & --- & --- & 8 & 512 \\
\rowcolor{mrAppBand}
MPDD-25 Elder 1\,s ternary & Memory & 0 & 8 & 0 & 3 / 0 & 8 & 512 \\
MPDD-25 Elder 1\,s ternary & Memory & 1 & 8 & 0 & 1 / 0 & 8 & 512 \\
\rowcolor{mrAppBand}
MPDD-25 Elder 1\,s ternary & Memory & 2 & 8 & 0 & 2 / 1 & 7 & 512 \\
MPDD-25 Elder 1\,s ternary & No memory & 0 & 8 & 0 & 3 / 0 & 8 & 512 \\
\rowcolor{mrAppBand}
MPDD-25 Elder 1\,s ternary & No memory & 1 & 8 & 0 & 7 / 1 & 7 & 512 \\
MPDD-25 Elder 1\,s ternary & No memory & 2 & 8 & 0 & 5 / 0 & 8 & 512 \\
\rowcolor{mrAppBand}
MPDD-25 Elder 1\,s ternary & Random & 0 & --- & --- & --- & 5 & 416 \\
MPDD-25 Elder 1\,s ternary & Random & 1 & --- & --- & --- & 5 & 416 \\
\rowcolor{mrAppBand}
MPDD-25 Elder 1\,s ternary & Random & 2 & --- & --- & --- & 5 & 416 \\
\bottomrule
\end{tabular}
\endgroup
\end{table}

\clearpage
\section{Efficiency Analysis}
\label{app:efficiency}

\FloatBarrier
\subsection{Performance--Cost Comparison}
\label{app:baseline-cost}

Figure~\ref{fig:efficiency-analysis}(a) compares E-DAIC scores with LLM calls. The \method{} coordinate is the mean over its five seed runs (Macro-F1 0.6768, 11.4 calls per run); Gemini, MDAgents, and DepressionAgent use 56, 544, and 268 calls for the 56 test participants, respectively.

\begin{figure}[!htbp]
\centering
\includegraphics[width=\linewidth]{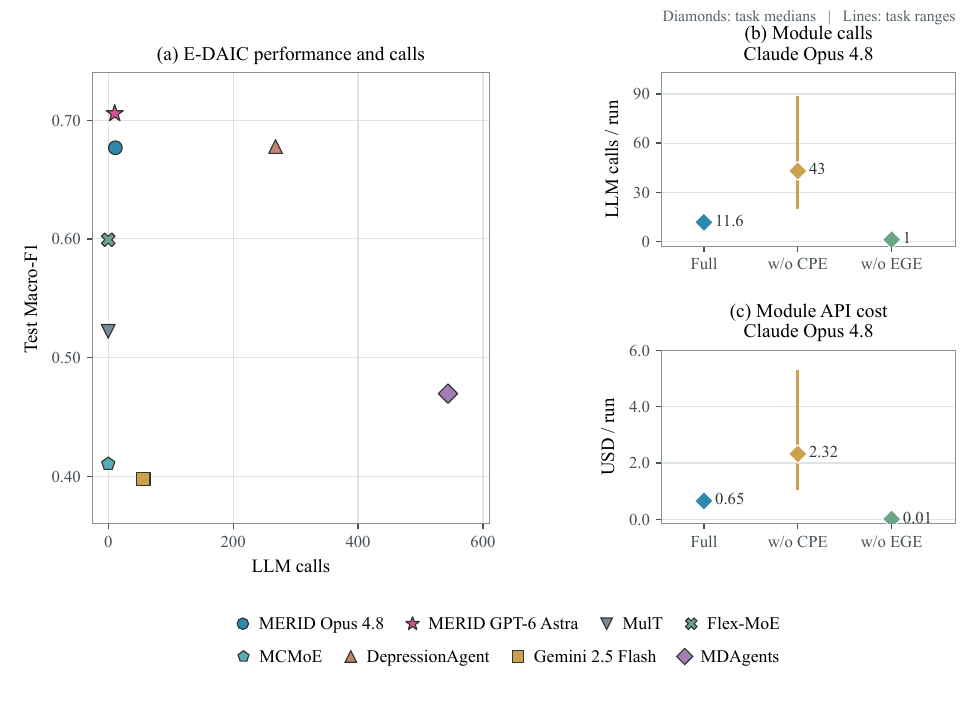}
\caption{LLM efficiency. (a) E-DAIC test Macro-F1 versus recorded calls: \method{} uses the means over its five seed runs; inference baselines cover 56 test participants. DepressionAgent uses its scored 268-call run. Non-LLM methods appear at zero calls. (b,c) Diamonds show medians of nine task-level means from the module ablations, with full \method{} from the five-seed runs of Table~\ref{tab:main-results}; vertical lines show their minimum-to-maximum ranges, not uncertainty intervals. Local fitting and feature extraction are outside this accounting.}
\label{fig:efficiency-analysis}
\label{fig:efficiency-calls}
\end{figure}

Figure~\ref{fig:score-cost}(b,c) plots \method{}'s five-seed mean scores from Table~\ref{tab:main-results} and the baselines' reported scores. MPDD-2026 averages the Elder and Young track scores, each defined as the mean of binary Macro-F1, ternary Macro-F1, and PHQ-9 CCC. E-DAIC uses binary Macro-F1 on its 56 test participants. MulT, Flex-MoE, and MCMoE appear at zero LLM cost; their local computation is outside this accounting.

\method{}'s horizontal coordinate records pipeline-development LLM usage averaged over the same five seed runs: \$0.6035 per run for E-DAIC and \$1.0201 per seed for MPDD-2026, where a seed's cost sums its Elder and Young runs. The fitted predictors require no per-participant LLM call. Inference baselines instead incur LLM costs for the plotted test cohort. Local fitting, feature extraction, and non-LLM inference costs are excluded. MPDD-2026 Gemini 2.5 Flash uses 35,975 input and 5,731 output tokens; MDAgents uses 55,143 and 4,323. The source plots apply their recorded rates of \$0.30/\$2.50 and \$5/\$30 per million input/output tokens, respectively, yielding \$0.02512 and \$0.405405.

The E-DAIC DepressionAgent score (0.6778) is from the scored 268-call run. Its \$5.140055 cost comes from a separate accounting rerun of the same model and prompts on the same 56 participants: 263 calls, 482,611 input tokens, and 109,080 output tokens, accounted at \$5/\$25 per million input/output tokens. That rerun agrees on 53 of 56 predictions and has Macro-F1 0.6725. The plotted score and cost thus come from separate runs. E-DAIC Gemini and MDAgents are omitted from the dollar-cost panels because their corresponding token counts are unavailable.

For Kaggle Video, \method{}'s development record contains 12 calls, 93,405 input tokens, and 9,137 output tokens, giving \$0.69545 at the recorded \$5/\$25 rates per million input/output tokens. The 240-clip baseline records contain 20,561 output tokens for Gemini 2.5 Flash and 69,603 for MedAgent-Pro. Their input-token totals are unavailable, so total dollar costs are not inferred from these output counts.

\FloatBarrier
\subsection{Module-Level LLM Costs}
\label{app:module_costs}

\textbf{Cost accounting.} Call counts and USD costs are averaged over the logged runs within each task and configuration: the five main-benchmark runs for full \method{} and the available logged runs for each removal. The reported medians summarize nine task-level means. For the removals, the accounting sample excludes recovered runs without cost-dump rows and therefore differs from the complete performance sample. Costs cover LLM API usage and exclude local model fitting. Across tasks, full \method{} averages 10.8--12.4 calls and \$0.60--\$0.70 per run. The corresponding ranges are 20--89 calls and \$1.06--\$5.29 without CPE, and 0--2 calls and \$0.00--\$0.16 without EGE.

Table~\ref{tab:module-costs} reports the task-level means and logged run counts; Figure~\ref{fig:efficiency-analysis}(b,c) shows their medians and ranges.

\begin{table}[!htbp]
\caption{LLM usage per run of the module ablations: mean LLM calls / USD over the logged runs of each task and configuration (Claude Opus 4.8 at \$5/\$25 per million input/output tokens). Local feature extraction and model fitting are excluded. Full \method{} uses the five runs of Table~\ref{tab:main-results}; for the removals, recovered runs without a cost record are not included, so their sample differs from the performance sample of Table~\ref{tab:ablation-full}. Runs gives the number of logged runs per configuration (full / w/o CPE / w/o EGE).}
\label{tab:module-costs}
\centering
\begingroup
\mrAppendixDenseTableStyle
\begin{tabular}{>{\raggedright\arraybackslash}p{.3\linewidth}*{4}{>{\centering\arraybackslash}p{\dimexpr(.7\linewidth-10\tabcolsep)/4\relax}}}
\toprule
\rowcolor{mrAppHeader}
\textbf{Task} & \cellcolor{mrAppOurs}\textbf{\method{} (full)} & \textbf{w/o CPE} & \textbf{w/o EGE} & \textbf{Runs} \\
\midrule
\rowcolor{mrAppBand}
E-DAIC binary & \cellcolor{mrAppOurs}11.4 / \$0.60 & 83.0 / \$4.67 & 2.0 / \$0.16 & 5 / 1 / 3 \\
E-DAIC PHQ-8 & \cellcolor{mrAppOurs}11.4 / \$0.61 & 41.0 / \$2.12 & 1.0 / \$0.06 & 5 / 1 / 3 \\
\rowcolor{mrAppBand}
Mental Health & \cellcolor{mrAppOurs}10.8 / \$0.64 & 89.0 / \$5.09 & 0.0 / \$0.00 & 5 / 1 / 3 \\
MPDD-25 Elder 1\,s binary & \cellcolor{mrAppOurs}12.2 / \$0.65 & 89.0 / \$5.29 & 1.0 / \$0.01 & 5 / 1 / 1 \\
\rowcolor{mrAppBand}
MPDD-25 Elder 1\,s ternary & \cellcolor{mrAppOurs}12.4 / \$0.70 & 20.0 / \$1.06 & 1.0 / \$0.01 & 5 / 1 / 1 \\
MPDD-25 Elder 1\,s quinary & \cellcolor{mrAppOurs}12.2 / \$0.69 & 33.0 / \$1.89 & 1.0 / \$0.01 & 5 / 1 / 1 \\
\rowcolor{mrAppBand}
MPDD-25 Elder 5\,s binary & \cellcolor{mrAppOurs}12.4 / \$0.67 & 43.0 / \$2.32 & 1.0 / \$0.01 & 5 / 1 / 1 \\
MPDD-25 Elder 5\,s ternary & \cellcolor{mrAppOurs}11.6 / \$0.62 & 21.0 / \$1.09 & 1.0 / \$0.01 & 5 / 1 / 1 \\
\rowcolor{mrAppBand}
MPDD-25 Elder 5\,s quinary & \cellcolor{mrAppOurs}11.6 / \$0.67 & 44.0 / \$2.49 & 1.0 / \$0.01 & 5 / 1 / 1 \\
\midrule
\rowcolor{mrAppBand}
Median over tasks & \cellcolor{mrAppOurs}11.6 / \$0.65 & 43.0 / \$2.32 & 1.0 / \$0.01 & --- \\
\bottomrule
\end{tabular}
\endgroup
\end{table}

\clearpage
\subsection{Search Budget and Realized Effort}
\label{app:budget_effort}

The candidate cap counts initialization and revision evaluations, as defined in Appendix~\ref{app:rsi_configuration}. Tables~\ref{tab:budget-effort} and~\ref{tab:rsi-exploration-budget} compare these caps with realized counts in the separate research-memory and eight-round studies. Table~\ref{tab:recorded_revision_counts} gives the recorded-step counts for the former study; Appendix~\ref{app:rsi_execution_diagnostics} relates the latter counts to review outcomes.

\begin{table}[!htbp]
\caption{Realized candidate evaluations in the 54-run research-memory comparison. Each arm covers nine tasks and three seeds. Means are over runs; ranges span their totals. Revision evaluations exclude initialization.}
\label{tab:budget-effort}
\centering
\begingroup
\mrAppendixTableStyle
\begin{tabular}{>{\raggedright\arraybackslash}p{.23\linewidth}*{5}{>{\centering\arraybackslash}p{\dimexpr(.77\linewidth-12\tabcolsep)/5\relax}}}
\toprule
\rowcolor{mrAppHeader}
Memory & Runs & Cap & Total mean & Total range & Revision mean \\
\midrule
\rowcolor{mrAppBand}
Memory off & 27 & 320 & 264.4 & 139--320 & 33.7 \\
Evolving memory & 27 & 320 & 272.9 & 139--320 & 42.7 \\
\bottomrule
\end{tabular}
\endgroup
\end{table}

\begin{table}[!htbp]
\centering
\caption{Exploration effort in the amended eight-round study. Each row covers three seeds. The cap includes initialization and revisions. Counts include all recorded candidate evaluations. The total range is the observed minimum--maximum across the three runs.}
\label{tab:rsi-exploration-budget}
\begingroup\mrAppendixTableStyle
\begin{tabular}{>{\raggedright\arraybackslash}p{.30\linewidth}>{\raggedright\arraybackslash}p{.14\linewidth}*{4}{>{\centering\arraybackslash}p{\dimexpr(.56\linewidth-12\tabcolsep)/4\relax}}}
\toprule
\rowcolor{mrAppHeader}
Task & Arm & Cap & Initial mean & Total mean & Total range \\
\midrule
\rowcolor{mrAppBand}
E-DAIC binary & Memory & 512 & 254.7 & 490.7 & 448--512 \\
E-DAIC binary & No memory & 512 & 256.0 & 490.7 & 480--512 \\
\rowcolor{mrAppBand}
E-DAIC binary & Random & 512 & 256.0 & 512.0 & 512--512 \\
E-DAIC PHQ-8 & Memory & 512 & 177.3 & 512.0 & 512--512 \\
\rowcolor{mrAppBand}
E-DAIC PHQ-8 & No memory & 512 & 182.0 & 512.0 & 512--512 \\
E-DAIC PHQ-8 & Random & 512 & 186.7 & 512.0 & 512--512 \\
\rowcolor{mrAppBand}
MPDD-25 Elder 1\,s ternary & Memory & 512 & 256.0 & 512.0 & 512--512 \\
MPDD-25 Elder 1\,s ternary & No memory & 512 & 256.0 & 512.0 & 512--512 \\
\rowcolor{mrAppBand}
MPDD-25 Elder 1\,s ternary & Random & 512 & 256.0 & 416.0 & 416--416 \\
\bottomrule
\end{tabular}\endgroup
\end{table}

\clearpage
\section{Complete Benchmark Results}
\label{app:full-results}

Table~\ref{tab:full-results} gives complete metrics. \method{}'s cells are means over the five seed runs behind Table~\ref{tab:main-results} (Appendix~\ref{app:seed_study}), except Kaggle Video, which is outside that study and reports its retained run. The appendix ranks summarize individual tasks within each dataset using Macro-F1, RMSE, or the official MPDD-2025 task score. Table~\ref{tab:main-results} summarizes its displayed columns, including CCC and MPDD-2025 track aggregates, over the methods reported there.

\begin{table}[p]
\caption{Complete performance metrics, continued over three parts. \method{} reports means over seeds 0--4 (Kaggle Video: retained run); baselines are unchanged. This part reports E-DAIC, MPDD-2026, Mental Health, and Kaggle. Arrows indicate preferred directions. Best and second-best distinct values among the methods shown are bold and underlined, including ties; dashes denote unavailable results. Bal.\ Acc. and W.\ F1 denote balanced accuracy and weighted F1. Avg.\ Rank uses one primary metric per task: Macro-F1 for classification, RMSE for PHQ regression, and the official score for MPDD-2025. Per-task ranks use midranks among reported methods; averages require complete task coverage and exclude track aggregates. MPDD-2026 denotes MPDD-AVG 2026. Its MulT and Flex-MoE PHQ-9 estimates are derived from ternary-class midpoints. Kaggle evaluates acted-emotion proxy labels; \method{} uses actor-disjoint nested cross-validation with review disabled.}
\label{tab:full-results}
\centering
\begingroup
\mrAppendixDenseTableStyle
\begin{tabular}{>{\raggedright\arraybackslash}m{.115\linewidth}>{\raggedright\arraybackslash}m{.06\linewidth}>{\raggedright\arraybackslash}m{.105\linewidth}>{\raggedright\arraybackslash}m{.125\linewidth}>{\centering\arraybackslash}m{.075\linewidth}>{\centering\arraybackslash}m{.07\linewidth}>{\centering\arraybackslash}m{.09\linewidth}>{\centering\arraybackslash}m{.09\linewidth}>{\centering\arraybackslash}m{.09\linewidth}>{\centering\arraybackslash}m{\dimexpr.18\linewidth-20\tabcolsep\relax}}
\toprule
\rowcolor{mrAppHeader}
\textbf{Dataset} & \textbf{Cohort} & \textbf{Task} & \textbf{Metric} & \textbf{Official} & \textbf{MulT} & \textbf{Flex-MoE} & \textbf{Gemini 2.5}\newline\textbf{Flash} & \textbf{MDAgents} & \cellcolor{mrAppOurs}\textbf{Ours} \\
\midrule
\rowcolor{mrAppBand}
\textbf{E-DAIC} & -- & Binary & Accuracy~$\uparrow$ &  & 0.5893 & \underline{0.7143} & 0.6607 & 0.6429 & \cellcolor{mrAppOurs}\textbf{0.7393} \\
 &  &  & Bal. Acc.~$\uparrow$ &  & 0.5226 & \underline{0.5958} & 0.4744 & 0.4947 & \cellcolor{mrAppOurs}\textbf{0.6702} \\
\rowcolor{mrAppBand}
 &  &  & Macro-F1~$\uparrow$ &  & 0.5221 & \underline{0.5993} & 0.3978 & 0.4697 & \cellcolor{mrAppOurs}\textbf{0.6768} \\
 &  &  & W. F1~$\uparrow$ &  & 0.5925 & \underline{0.6836} & 0.5541 & 0.5887 & \cellcolor{mrAppOurs}\textbf{0.7326} \\
\rowcolor{mrAppBand}
 &  &  & $\kappa$~$\uparrow$ &  & 0.0445 & \underline{0.2209} & -0.0683 & -0.0127 & \cellcolor{mrAppOurs}\textbf{0.3554} \\
\cmidrule(l){3-10}
 &  & PHQ-8 & RMSE~$\downarrow$ &  & 6.9902 & 7.0394 & 7.7113 & \underline{6.6201} & \cellcolor{mrAppOurs}\textbf{5.9098} \\
\rowcolor{mrAppBand}
 &  &  & MAE~$\downarrow$ &  & 5.6641 & 5.6417 & 6.2143 & \underline{5.3482} & \cellcolor{mrAppOurs}\textbf{4.6520} \\
 &  &  & CCC~$\uparrow$ &  & -0.0047 & 0.0135 & -0.0917 & \underline{0.0520} & \cellcolor{mrAppOurs}\textbf{0.3268} \\
\rowcolor{mrAppBand}
 &  &  & $R^2$~$\uparrow$ &  & -0.2022 & -0.2192 & -0.4630 & \underline{-0.0783} & \cellcolor{mrAppOurs}\textbf{0.1403} \\
\cmidrule(l){3-10}
 &  & \emph{Avg. Rank} & $\downarrow$ &  & \underline{3.00} & \underline{3.00} & 5.00 & \underline{3.00} & \cellcolor{mrAppOurs}\textbf{1.00} \\
\midrule
\rowcolor{mrAppBand}
\textbf{MPDD-2026} & Elder & Binary & Accuracy~$\uparrow$ &  & \textbf{0.6957} & \underline{0.6522} & 0.3478 & 0.3043 & \cellcolor{mrAppOurs}0.5217 \\
 &  &  & Macro-F1~$\uparrow$ &  & \textbf{0.5165} & 0.3947 & 0.3002 & 0.2333 & \cellcolor{mrAppOurs}\underline{0.4559} \\
\rowcolor{mrAppBand}
 &  &  & $\kappa$~$\uparrow$ &  & \textbf{0.1006} & -0.0824 & \underline{0.0390} & 0.0000 & \cellcolor{mrAppOurs}-0.0858 \\
\cmidrule(l){3-10}
 &  & Ternary & Accuracy~$\uparrow$ &  & \textbf{0.6957} & \textbf{0.6957} & 0.1739 & 0.1739 & \cellcolor{mrAppOurs}\underline{0.6087} \\
\rowcolor{mrAppBand}
 &  &  & Macro-F1~$\uparrow$ &  & \underline{0.2735} & \underline{0.2735} & 0.1262 & 0.0988 & \cellcolor{mrAppOurs}\textbf{0.2978} \\
 &  &  & $\kappa$~$\uparrow$ &  & 0.0000 & 0.0000 & \underline{0.0180} & 0.0000 & \cellcolor{mrAppOurs}\textbf{0.0524} \\
\cmidrule(l){3-10}
\rowcolor{mrAppBand}
 &  & PHQ-9 & RMSE~$\downarrow$ &  & \textbf{4.5095} & \textbf{4.5095} & 12.9653 & 8.2090 & \cellcolor{mrAppOurs}\underline{5.2920} \\
 &  &  & MAE~$\downarrow$ &  & \textbf{3.3475} & \textbf{3.3475} & 12.0880 & 7.6242 & \cellcolor{mrAppOurs}\underline{4.0878} \\
\rowcolor{mrAppBand}
 &  &  & CCC~$\uparrow$ &  & 0.0000 & 0.0000 & \underline{0.0154} & -0.0054 & \cellcolor{mrAppOurs}\textbf{0.1321} \\
 &  &  & $R^2$~$\uparrow$ &  & \textbf{-0.0088} & \textbf{-0.0088} & -7.3392 & -2.3431 & \cellcolor{mrAppOurs}\underline{-0.3893} \\
\cmidrule(l){2-10}
\rowcolor{mrAppBand}
 & Young & Binary & Accuracy~$\uparrow$ &  & \underline{0.5000} & \underline{0.5000} & \underline{0.5000} & 0.3636 & \cellcolor{mrAppOurs}\textbf{0.5455} \\
 &  &  & Macro-F1~$\uparrow$ &  & 0.3333 & \underline{0.4990} & 0.3333 & 0.2667 & \cellcolor{mrAppOurs}\textbf{0.5455} \\
\rowcolor{mrAppBand}
 &  &  & $\kappa$~$\uparrow$ &  & \underline{0.0000} & \underline{0.0000} & \underline{0.0000} & -0.2727 & \cellcolor{mrAppOurs}\textbf{0.0909} \\
\cmidrule(l){3-10}
 &  & Ternary & Accuracy~$\uparrow$ &  & 0.3636 & \underline{0.4545} & 0.2273 & 0.2727 & \cellcolor{mrAppOurs}\textbf{0.5909} \\
\rowcolor{mrAppBand}
 &  &  & Macro-F1~$\uparrow$ &  & 0.1905 & \underline{0.2083} & 0.1978 & 0.1429 & \cellcolor{mrAppOurs}\textbf{0.4327} \\
 &  &  & $\kappa$~$\uparrow$ &  & -0.1241 & 0.0000 & \underline{0.0508} & -0.2438 & \cellcolor{mrAppOurs}\textbf{0.2954} \\
\cmidrule(l){3-10}
\rowcolor{mrAppBand}
 &  & PHQ-9 & RMSE~$\downarrow$ &  & 5.8275 & \underline{5.6060} & 12.0845 & 7.4547 & \cellcolor{mrAppOurs}\textbf{4.2054} \\
 &  &  & MAE~$\downarrow$ &  & 4.5023 & \underline{3.8641} & 11.1895 & 6.7159 & \cellcolor{mrAppOurs}\textbf{3.4348} \\
\rowcolor{mrAppBand}
 &  &  & CCC~$\uparrow$ &  & \underline{0.0760} & 0.0000 & -0.0124 & -0.1121 & \cellcolor{mrAppOurs}\textbf{0.6050} \\
 &  &  & $R^2$~$\uparrow$ &  & -0.2912 & \underline{-0.1949} & -4.5525 & -1.1130 & \cellcolor{mrAppOurs}\textbf{0.3276} \\
\cmidrule(l){3-10}
\rowcolor{mrAppBand}
 &  & \emph{Avg. Rank} & $\downarrow$ &  & 2.58 & \underline{2.17} & 4.08 & 4.67 & \cellcolor{mrAppOurs}\textbf{1.50} \\
\midrule
\textbf{Mental Health} & -- & Status & Accuracy~$\uparrow$ &  & --- & \underline{0.9738} & 0.0887 & 0.0775 & \cellcolor{mrAppOurs}\textbf{0.9890} \\
\rowcolor{mrAppBand}
 &  &  & Bal. Acc.~$\uparrow$ &  & --- & \underline{0.9692} & 0.2912 & 0.2697 & \cellcolor{mrAppOurs}\textbf{0.9881} \\
 &  &  & Macro-F1~$\uparrow$ &  & --- & \underline{0.9548} & 0.0843 & 0.0607 & \cellcolor{mrAppOurs}\textbf{0.9655} \\
\rowcolor{mrAppBand}
 &  &  & W. F1~$\uparrow$ &  & --- & \underline{0.9739} & 0.0694 & 0.0358 & \cellcolor{mrAppOurs}\textbf{0.9894} \\
 &  &  & $\kappa$~$\uparrow$ &  & --- & \underline{0.9611} & -0.0786 & -0.0855 & \cellcolor{mrAppOurs}\textbf{0.9837} \\
\cmidrule(l){3-10}
\rowcolor{mrAppBand}
 &  & \emph{Avg. Rank} & $\downarrow$ &  & --- & \underline{2.00} & 3.00 & 4.00 & \cellcolor{mrAppOurs}\textbf{1.00} \\
\midrule
\textbf{Kaggle} & -- & Video & Accuracy~$\uparrow$ &  & --- & --- & \underline{0.4208} & --- & \cellcolor{mrAppOurs}\textbf{0.7667} \\
\rowcolor{mrAppBand}
 &  &  & Bal. Acc.~$\uparrow$ &  & --- & --- & \underline{0.5174} & --- & \cellcolor{mrAppOurs}\textbf{0.7622} \\
 &  &  & Macro-F1~$\uparrow$ &  & --- & --- & \underline{0.3236} & --- & \cellcolor{mrAppOurs}\textbf{0.7593} \\
\rowcolor{mrAppBand}
 &  &  & W. F1~$\uparrow$ &  & --- & --- & \underline{0.2723} & --- & \cellcolor{mrAppOurs}\textbf{0.7677} \\
 &  &  & $\kappa$~$\uparrow$ &  & --- & --- & \underline{0.0280} & --- & \cellcolor{mrAppOurs}\textbf{0.5189} \\
\cmidrule(l){3-10}
\rowcolor{mrAppBand}
 &  & Audio--visual & Accuracy~$\uparrow$ &  & --- & \underline{0.6292} & 0.4000 & 0.6000 & \cellcolor{mrAppOurs}\textbf{0.7875} \\
 &  &  & Bal. Acc.~$\uparrow$ &  & --- & \underline{0.6233} & 0.4965 & 0.5017 & \cellcolor{mrAppOurs}\textbf{0.7927} \\
\rowcolor{mrAppBand}
 &  &  & Macro-F1~$\uparrow$ &  & --- & \underline{0.6201} & 0.2966 & 0.3845 & \cellcolor{mrAppOurs}\textbf{0.7836} \\
 &  &  & W. F1~$\uparrow$ &  & --- & \underline{0.6319} & 0.2427 & 0.4574 & \cellcolor{mrAppOurs}\textbf{0.7894} \\
\rowcolor{mrAppBand}
 &  &  & $\kappa$~$\uparrow$ &  & --- & \underline{0.2419} & -0.0056 & 0.0041 & \cellcolor{mrAppOurs}\textbf{0.5695} \\
\cmidrule(l){3-10}
 &  & \emph{Avg. Rank} & $\downarrow$ &  & --- & --- & \underline{3.00} & --- & \cellcolor{mrAppOurs}\textbf{1.00} \\
\bottomrule
\end{tabular}
\endgroup
\end{table}

\begin{table}[p]
\ContinuedFloat
\caption[]{Complete performance metrics (continued): MPDD-2025 Elder. All seven task metrics and all three track aggregates are shown. W.\ Acc. denotes inverse-class-frequency-weighted accuracy, and W.\ F1 denotes support-weighted F1. Task Acc.\ averages Accuracy and W.\ Acc.; Task F1 averages Macro-F1 and W.\ F1; Official averages Task Acc.\ and Task F1. Track metrics average the corresponding task metrics over both windows. Flex-MoE cells report the means of three runs. Ours Track Acc.\ and Track F1 are computed from the reported task components. Ours W.\ Acc.\ was not included in the available main-run summary and is left unreported. Formatting follows the first part.}
\label{tab:full-results-mpdd2025-elder}
\centering
\begingroup
\mrAppendixDenseTableStyle
\begin{tabular}{>{\raggedright\arraybackslash}m{.115\linewidth}>{\raggedright\arraybackslash}m{.06\linewidth}>{\raggedright\arraybackslash}m{.105\linewidth}>{\raggedright\arraybackslash}m{.125\linewidth}>{\centering\arraybackslash}m{.075\linewidth}>{\centering\arraybackslash}m{.07\linewidth}>{\centering\arraybackslash}m{.09\linewidth}>{\centering\arraybackslash}m{.09\linewidth}>{\centering\arraybackslash}m{.09\linewidth}>{\centering\arraybackslash}m{\dimexpr.18\linewidth-20\tabcolsep\relax}}
\toprule
\rowcolor{mrAppHeader}
\textbf{Dataset} & \textbf{Cohort} & \textbf{Task} & \textbf{Metric} & \textbf{Official} & \textbf{MulT} & \textbf{Flex-MoE} & \textbf{Gemini 2.5}\newline\textbf{Flash} & \textbf{MDAgents} & \cellcolor{mrAppOurs}\textbf{Ours} \\
\midrule
\rowcolor{mrAppBand}
\textbf{MPDD-2025} & Elder & 1\,s Binary & Accuracy~$\uparrow$ & 0.6916 & \textbf{0.7841} & 0.7416 & 0.7357 & 0.1982 & \cellcolor{mrAppOurs}\underline{0.7789} \\
 &  &  & Macro-F1~$\uparrow$ & 0.5900 & \underline{0.6318} & 0.6266 & 0.5048 & 0.1809 & \cellcolor{mrAppOurs}\textbf{0.6964} \\
\rowcolor{mrAppBand}
 &  &  & W. Acc.~$\uparrow$ & \underline{0.6360} & 0.6332 & \textbf{0.6597} & 0.5055 & 0.5035 & \cellcolor{mrAppOurs}--- \\
 &  &  & W. F1~$\uparrow$ & 0.7222 & \underline{0.7852} & 0.7604 & 0.7238 & 0.1038 & \cellcolor{mrAppOurs}\textbf{0.7987} \\
\rowcolor{mrAppBand}
 &  &  & Task Acc.~$\uparrow$ & 0.6638 & \underline{0.7086} & 0.7006 & 0.6206 & 0.3509 & \cellcolor{mrAppOurs}\textbf{0.7712} \\
 &  &  & Task F1~$\uparrow$ & 0.6561 & \underline{0.7085} & 0.6935 & 0.6143 & 0.1424 & \cellcolor{mrAppOurs}\textbf{0.7475} \\
\rowcolor{mrAppBand}
 &  &  & Official~$\uparrow$ & 0.6600 & \underline{0.7086} & 0.6971 & 0.6174 & 0.2466 & \cellcolor{mrAppOurs}\textbf{0.7593} \\
\cmidrule(l){3-10}
 &  & 1\,s Ternary & Accuracy~$\uparrow$ & \underline{0.4802} & 0.3877 & 0.4302 & 0.4405 & 0.3524 & \cellcolor{mrAppOurs}\textbf{0.6291} \\
\rowcolor{mrAppBand}
 &  &  & Macro-F1~$\uparrow$ & \underline{0.4777} & 0.3068 & 0.4265 & 0.2617 & 0.1936 & \cellcolor{mrAppOurs}\textbf{0.6147} \\
 &  &  & W. Acc.~$\uparrow$ & \textbf{0.5183} & 0.3218 & \underline{0.4800} & 0.3375 & 0.3370 & \cellcolor{mrAppOurs}--- \\
\rowcolor{mrAppBand}
 &  &  & W. F1~$\uparrow$ & \underline{0.4676} & 0.3728 & 0.4093 & 0.3263 & 0.2090 & \cellcolor{mrAppOurs}\textbf{0.6294} \\
 &  &  & Task Acc.~$\uparrow$ & \underline{0.4993} & 0.3547 & 0.4551 & 0.3890 & 0.3447 & \cellcolor{mrAppOurs}\textbf{0.6371} \\
\rowcolor{mrAppBand}
 &  &  & Task F1~$\uparrow$ & \underline{0.4726} & 0.3398 & 0.4179 & 0.2940 & 0.2013 & \cellcolor{mrAppOurs}\textbf{0.6220} \\
 &  &  & Official~$\uparrow$ & \underline{0.4859} & 0.3472 & 0.4365 & 0.3415 & 0.2730 & \cellcolor{mrAppOurs}\textbf{0.6295} \\
\cmidrule(l){3-10}
\rowcolor{mrAppBand}
 &  & 1\,s Quinary & Accuracy~$\uparrow$ & 0.5419 & \textbf{0.7225} & \underline{0.6960} & 0.4229 & 0.1630 & \cellcolor{mrAppOurs}0.5612 \\
 &  &  & Macro-F1~$\uparrow$ & 0.1936 & 0.1678 & \underline{0.2763} & 0.1553 & 0.0606 & \cellcolor{mrAppOurs}\textbf{0.3776} \\
\rowcolor{mrAppBand}
 &  &  & W. Acc.~$\uparrow$ & 0.1702 & \underline{0.2000} & \textbf{0.2793} & 0.1976 & 0.1866 & \cellcolor{mrAppOurs}--- \\
 &  &  & W. F1~$\uparrow$ & 0.5701 & \underline{0.6061} & \textbf{0.6432} & 0.4626 & 0.0640 & \cellcolor{mrAppOurs}0.5624 \\
\rowcolor{mrAppBand}
 &  &  & Task Acc.~$\uparrow$ & 0.3560 & 0.4612 & \textbf{0.4877} & 0.3103 & 0.1748 & \cellcolor{mrAppOurs}\underline{0.4777} \\
 &  &  & Task F1~$\uparrow$ & 0.3818 & 0.3869 & \underline{0.4598} & 0.3089 & 0.0623 & \cellcolor{mrAppOurs}\textbf{0.4700} \\
\rowcolor{mrAppBand}
 &  &  & Official~$\uparrow$ & 0.3689 & 0.4241 & \underline{0.4737} & 0.3096 & 0.1186 & \cellcolor{mrAppOurs}\textbf{0.4738} \\
\cmidrule(l){3-10}
 &  & 5\,s Binary & Accuracy~$\uparrow$ & 0.7269 & 0.6652 & \textbf{0.8253} & \underline{0.7753} & 0.1894 & \cellcolor{mrAppOurs}0.7648 \\
\rowcolor{mrAppBand}
 &  &  & Macro-F1~$\uparrow$ & 0.5774 & 0.4449 & \textbf{0.7126} & 0.5032 & 0.1673 & \cellcolor{mrAppOurs}\underline{0.6677} \\
 &  &  & W. Acc.~$\uparrow$ & \underline{0.5886} & 0.4430 & \textbf{0.7695} & 0.5099 & 0.5080 & \cellcolor{mrAppOurs}--- \\
\rowcolor{mrAppBand}
 &  &  & W. F1~$\uparrow$ & 0.7402 & 0.6714 & \textbf{0.8270} & 0.7413 & 0.0794 & \cellcolor{mrAppOurs}\underline{0.7839} \\
 &  &  & Task Acc.~$\uparrow$ & 0.6577 & 0.5541 & \textbf{0.7974} & 0.6426 & 0.3487 & \cellcolor{mrAppOurs}\underline{0.7402} \\
\rowcolor{mrAppBand}
 &  &  & Task F1~$\uparrow$ & 0.6588 & 0.5581 & \textbf{0.7698} & 0.6223 & 0.1234 & \cellcolor{mrAppOurs}\underline{0.7258} \\
 &  &  & Official~$\uparrow$ & 0.6583 & 0.5561 & \textbf{0.7836} & 0.6324 & 0.2360 & \cellcolor{mrAppOurs}\underline{0.7330} \\
\cmidrule(l){3-10}
\rowcolor{mrAppBand}
 &  & 5\,s Ternary & Accuracy~$\uparrow$ & 0.4361 & 0.3392 & \underline{0.5022} & 0.4537 & 0.3480 & \cellcolor{mrAppOurs}\textbf{0.6802} \\
 &  &  & Macro-F1~$\uparrow$ & 0.4425 & 0.2063 & \underline{0.4436} & 0.2534 & 0.1824 & \cellcolor{mrAppOurs}\textbf{0.6708} \\
\rowcolor{mrAppBand}
 &  &  & W. Acc.~$\uparrow$ & \underline{0.4617} & 0.2566 & \textbf{0.4770} & 0.3361 & 0.3352 & \cellcolor{mrAppOurs}--- \\
 &  &  & W. F1~$\uparrow$ & 0.4317 & 0.2746 & \underline{0.4991} & 0.3238 & 0.1930 & \cellcolor{mrAppOurs}\textbf{0.6806} \\
\rowcolor{mrAppBand}
 &  &  & Task Acc.~$\uparrow$ & 0.4489 & 0.2979 & \underline{0.4896} & 0.3949 & 0.3416 & \cellcolor{mrAppOurs}\textbf{0.6940} \\
 &  &  & Task F1~$\uparrow$ & 0.4371 & 0.2405 & \underline{0.4713} & 0.2886 & 0.1877 & \cellcolor{mrAppOurs}\textbf{0.6757} \\
\rowcolor{mrAppBand}
 &  &  & Official~$\uparrow$ & 0.4430 & 0.2692 & \underline{0.4805} & 0.3418 & 0.2646 & \cellcolor{mrAppOurs}\textbf{0.6849} \\
\cmidrule(l){3-10}
 &  & 5\,s Quinary & Accuracy~$\uparrow$ & \underline{0.6476} & \textbf{0.7225} & 0.5830 & 0.5639 & 0.1630 & \cellcolor{mrAppOurs}0.4899 \\
\rowcolor{mrAppBand}
 &  &  & Macro-F1~$\uparrow$ & \underline{0.2343} & 0.1678 & 0.1938 & 0.1480 & 0.0615 & \cellcolor{mrAppOurs}\textbf{0.3023} \\
 &  &  & W. Acc.~$\uparrow$ & \textbf{0.2762} & 0.2000 & \underline{0.2193} & 0.1561 & 0.1866 & \cellcolor{mrAppOurs}--- \\
\rowcolor{mrAppBand}
 &  &  & W. F1~$\uparrow$ & \textbf{0.6259} & \underline{0.6061} & 0.5742 & 0.5345 & 0.0649 & \cellcolor{mrAppOurs}0.5058 \\
 &  &  & Task Acc.~$\uparrow$ & \textbf{0.4619} & \underline{0.4612} & 0.4011 & 0.3600 & 0.1748 & \cellcolor{mrAppOurs}0.3993 \\
\rowcolor{mrAppBand}
 &  &  & Task F1~$\uparrow$ & \textbf{0.4301} & 0.3869 & 0.3840 & 0.3413 & 0.0632 & \cellcolor{mrAppOurs}\underline{0.4040} \\
 &  &  & Official~$\uparrow$ & \textbf{0.4460} & \underline{0.4241} & 0.3926 & 0.3506 & 0.1190 & \cellcolor{mrAppOurs}0.4017 \\
\cmidrule(l){3-10}
\rowcolor{mrAppBand}
 &  & Track & Track Acc.~$\uparrow$ & 0.5146 & 0.4730 & \underline{0.5553} & 0.4529 & 0.2893 & \cellcolor{mrAppOurs}\textbf{0.6199} \\
 &  &  & Track F1~$\uparrow$ & 0.5061 & 0.4368 & \underline{0.5327} & 0.4116 & 0.1300 & \cellcolor{mrAppOurs}\textbf{0.6075} \\
\rowcolor{mrAppBand}
 &  &  & Track Score~$\uparrow$ & 0.5104 & 0.4549 & \underline{0.5440} & 0.4322 & 0.2096 & \cellcolor{mrAppOurs}\textbf{0.6137} \\
\bottomrule
\end{tabular}
\endgroup
\end{table}

\begin{table}[p]
\ContinuedFloat
\caption[]{Complete performance metrics (continued): MPDD-2025 Young. Metric definitions, aggregation, and formatting follow the Elder part. Flex-MoE cells report the means of three runs. Avg.\ Rank covers all ten MPDD-2025 tasks across Elder and Young, using the official task score once per task. Ours W.\ Acc.\ was not included in the available main-run summary and is left unreported.}
\label{tab:full-results-mpdd2025-young}
\centering
\begingroup
\mrAppendixDenseTableStyle
\begin{tabular}{>{\raggedright\arraybackslash}m{.115\linewidth}>{\raggedright\arraybackslash}m{.06\linewidth}>{\raggedright\arraybackslash}m{.105\linewidth}>{\raggedright\arraybackslash}m{.125\linewidth}>{\centering\arraybackslash}m{.075\linewidth}>{\centering\arraybackslash}m{.07\linewidth}>{\centering\arraybackslash}m{.09\linewidth}>{\centering\arraybackslash}m{.09\linewidth}>{\centering\arraybackslash}m{.09\linewidth}>{\centering\arraybackslash}m{\dimexpr.18\linewidth-20\tabcolsep\relax}}
\toprule
\rowcolor{mrAppHeader}
\textbf{Dataset} & \textbf{Cohort} & \textbf{Task} & \textbf{Metric} & \textbf{Official} & \textbf{MulT} & \textbf{Flex-MoE} & \textbf{Gemini 2.5}\newline\textbf{Flash} & \textbf{MDAgents} & \cellcolor{mrAppOurs}\textbf{Ours} \\
\midrule
\rowcolor{mrAppBand}
\textbf{MPDD-2025} & Young & 1\,s Binary & Accuracy~$\uparrow$ & \underline{0.5644} & 0.4924 & \textbf{0.6048} & 0.4848 & 0.5455 & \cellcolor{mrAppOurs}0.4038 \\
 &  &  & Macro-F1~$\uparrow$ & \underline{0.5446} & 0.4491 & \textbf{0.5749} & 0.3450 & 0.4762 & \cellcolor{mrAppOurs}0.4018 \\
\rowcolor{mrAppBand}
 &  &  & W. Acc.~$\uparrow$ & \underline{0.5644} & 0.4924 & \textbf{0.6048} & 0.4848 & 0.5455 & \cellcolor{mrAppOurs}--- \\
 &  &  & W. F1~$\uparrow$ & \underline{0.5446} & 0.4491 & \textbf{0.5749} & 0.3450 & 0.4762 & \cellcolor{mrAppOurs}0.4018 \\
\rowcolor{mrAppBand}
 &  &  & Task Acc.~$\uparrow$ & \underline{0.5644} & 0.4924 & \textbf{0.6048} & 0.4848 & 0.5455 & \cellcolor{mrAppOurs}0.4038 \\
 &  &  & Task F1~$\uparrow$ & \underline{0.5446} & 0.4491 & \textbf{0.5749} & 0.3450 & 0.4762 & \cellcolor{mrAppOurs}0.4018 \\
\rowcolor{mrAppBand}
 &  &  & Official~$\uparrow$ & \underline{0.5545} & 0.4708 & \textbf{0.5899} & 0.4149 & 0.5108 & \cellcolor{mrAppOurs}0.4028 \\
\cmidrule(l){3-10}
 &  & 1\,s Ternary & Accuracy~$\uparrow$ & 0.4015 & \textbf{0.5000} & 0.4760 & \underline{0.4773} & 0.4242 & \cellcolor{mrAppOurs}0.3795 \\
\rowcolor{mrAppBand}
 &  &  & Macro-F1~$\uparrow$ & 0.3415 & 0.2222 & \underline{0.3511} & 0.2304 & 0.2763 & \cellcolor{mrAppOurs}\textbf{0.3590} \\
 &  &  & W. Acc.~$\uparrow$ & 0.3432 & 0.3333 & \textbf{0.3801} & 0.3249 & \underline{0.3662} & \cellcolor{mrAppOurs}--- \\
\rowcolor{mrAppBand}
 &  &  & W. F1~$\uparrow$ & \underline{0.4004} & 0.3333 & \textbf{0.4171} & 0.3298 & 0.3404 & \cellcolor{mrAppOurs}0.3799 \\
 &  &  & Task Acc.~$\uparrow$ & 0.3724 & \underline{0.4167} & \textbf{0.4280} & 0.4011 & 0.3952 & \cellcolor{mrAppOurs}0.3610 \\
\rowcolor{mrAppBand}
 &  &  & Task F1~$\uparrow$ & \underline{0.3709} & 0.2778 & \textbf{0.3841} & 0.2801 & 0.3083 & \cellcolor{mrAppOurs}0.3694 \\
 &  &  & Official~$\uparrow$ & \underline{0.3716} & 0.3472 & \textbf{0.4061} & 0.3406 & 0.3518 & \cellcolor{mrAppOurs}0.3653 \\
\cmidrule(l){3-10}
\rowcolor{mrAppBand}
 &  & 5\,s Binary & Accuracy~$\uparrow$ & \textbf{0.5682} & 0.4242 & 0.5013 & \underline{0.5227} & 0.5038 & \cellcolor{mrAppOurs}0.3848 \\
 &  &  & Macro-F1~$\uparrow$ & \textbf{0.5678} & 0.3185 & 0.3361 & \underline{0.4888} & 0.4105 & \cellcolor{mrAppOurs}0.3832 \\
\rowcolor{mrAppBand}
 &  &  & W. Acc.~$\uparrow$ & \textbf{0.5682} & 0.4242 & 0.5013 & \underline{0.5227} & 0.5038 & \cellcolor{mrAppOurs}--- \\
 &  &  & W. F1~$\uparrow$ & \textbf{0.5678} & 0.3185 & 0.3361 & \underline{0.4888} & 0.4105 & \cellcolor{mrAppOurs}0.3832 \\
\rowcolor{mrAppBand}
 &  &  & Task Acc.~$\uparrow$ & \textbf{0.5682} & 0.4242 & 0.5013 & \underline{0.5227} & 0.5038 & \cellcolor{mrAppOurs}0.3848 \\
 &  &  & Task F1~$\uparrow$ & \textbf{0.5678} & 0.3185 & 0.3361 & \underline{0.4888} & 0.4105 & \cellcolor{mrAppOurs}0.3832 \\
\rowcolor{mrAppBand}
 &  &  & Official~$\uparrow$ & \textbf{0.5680} & 0.3714 & 0.4187 & \underline{0.5058} & 0.4572 & \cellcolor{mrAppOurs}0.3840 \\
\cmidrule(l){3-10}
 &  & 5\,s Ternary & Accuracy~$\uparrow$ & 0.3523 & \textbf{0.5265} & \underline{0.4545} & 0.4205 & 0.3826 & \cellcolor{mrAppOurs}0.3629 \\
\rowcolor{mrAppBand}
 &  &  & Macro-F1~$\uparrow$ & 0.2289 & 0.2812 & \textbf{0.3188} & 0.2602 & 0.2334 & \cellcolor{mrAppOurs}\underline{0.3176} \\
 &  &  & W. Acc.~$\uparrow$ & 0.2519 & \textbf{0.3595} & \underline{0.3578} & 0.3342 & 0.3374 & \cellcolor{mrAppOurs}--- \\
\rowcolor{mrAppBand}
 &  &  & W. F1~$\uparrow$ & 0.3147 & \textbf{0.3988} & \underline{0.3834} & 0.3321 & 0.2786 & \cellcolor{mrAppOurs}0.3377 \\
 &  &  & Task Acc.~$\uparrow$ & 0.3021 & \textbf{0.4430} & \underline{0.4062} & 0.3773 & 0.3600 & \cellcolor{mrAppOurs}0.3475 \\
\rowcolor{mrAppBand}
 &  &  & Task F1~$\uparrow$ & 0.2718 & \underline{0.3400} & \textbf{0.3511} & 0.2961 & 0.2560 & \cellcolor{mrAppOurs}0.3276 \\
 &  &  & Official~$\uparrow$ & 0.2869 & \textbf{0.3915} & \underline{0.3786} & 0.3367 & 0.3080 & \cellcolor{mrAppOurs}0.3376 \\
\cmidrule(l){3-10}
\rowcolor{mrAppBand}
 &  & Track & Track Acc.~$\uparrow$ & \underline{0.4518} & 0.4441 & \textbf{0.4851} & 0.4465 & 0.4511 & \cellcolor{mrAppOurs}0.3743 \\
 &  &  & Track F1~$\uparrow$ & \textbf{0.4388} & 0.3464 & \underline{0.4116} & 0.3525 & 0.3628 & \cellcolor{mrAppOurs}0.3705 \\
\rowcolor{mrAppBand}
 &  &  & Track Score~$\uparrow$ & \underline{0.4453} & 0.3952 & \textbf{0.4483} & 0.3995 & 0.4069 & \cellcolor{mrAppOurs}0.3724 \\
\cmidrule(l){3-10}
 &  & \emph{Avg. Rank} & $\downarrow$ & 2.80 & 3.70 & \textbf{2.30} & 4.50 & 5.10 & \cellcolor{mrAppOurs}\underline{2.60} \\
\bottomrule
\end{tabular}
\endgroup
\end{table}

\paragraph{Baseline prediction diagnostics.}
The saved MCMoE predictions assign all 56 E-DAIC test participants to the majority class, yielding accuracy 0.6964 and Macro-F1 0.4105. A cross-entropy-head check with corrected deterministic evaluation masking produces the same constant classification. PHQ-8 predictions are nearly constant, with standard deviation 0.00027 and CCC approximately zero. On MPDD-2026 Elder, ternary predictions likewise assign every participant to the majority class, yielding accuracy 0.6957 and Macro-F1 0.2735. MulT and Flex-MoE produce the same constant ternary predictions in these runs. These diagnostics characterize the evaluated implementations; they do not establish an inherent limitation of the architectures.

\paragraph{Additional raw-video comparison.}
\phantomsection\label{app:experiments:medagent-pro}
Table~\ref{tab:medagent-pro-video} compares MedAgent-Pro~\citep{wang2025medagentpro}, direct Gemini 2.5 Flash prediction, and \method{} on Kaggle video-only emotion classification. MedAgent-Pro uses Gemini 2.5 Flash to process sampled video frames.

\begin{table}[!htbp]
\caption{Kaggle raw-video emotion classification. Best and second-best scores are bold and underlined. \method{} uses actor-disjoint nested cross-validation with review disabled.}
\label{tab:medagent-pro-video}
\centering
\begingroup
\mrAppendixTableStyle
\begin{tabular}{>{\raggedright\arraybackslash}p{.48\linewidth}>{\centering\arraybackslash}p{\dimexpr.27\linewidth-4\tabcolsep\relax}}
\toprule
\rowcolor{mrAppHeader}
\textbf{Method} & \textbf{Macro-F1 $\uparrow$} \\
\midrule
\rowcolor{mrAppBand}
Gemini 2.5 Flash & 0.3236 \\
MedAgent-Pro & \underline{0.7080} \\
\rowcolor{mrAppOurs}
\method{} (Ours) & \textbf{0.7593} \\
\bottomrule
\end{tabular}
\endgroup
\end{table}

\clearpage
\section{Supplementary Experimental Setup}
\label{app:baseline_setup}

\FloatBarrier
\subsection{Datasets and Evaluation Protocols}
\label{app:datasets_protocols}

\paragraph{MPDD-AVG 2026.}
\phantomsection\label{app:mpdd2026_protocol}
MPDD\_2026 contains 175 labeled training subjects and 45 held-out test subjects. Track~1 comprises 87 training and 23 test subjects from the elderly cohort; Track~2 comprises 88 training and 22 test subjects from the young cohort. MulT followed the official per-track 90/10 stratified split, yielding 78/9 training/validation subjects for Track~1 and 79/9 for Track~2. Flex-MoE combined the two cohorts during training, yielding 157/18/45 training/validation/test subjects per task, and metrics were subsequently reported by track. Classification metrics were Accuracy, Macro-F1, and Cohen's $\kappa$; PHQ-9 metrics were RMSE, MAE, CCC, and $R^2$. MulT and Flex-MoE did not include dedicated PHQ-9 regression heads in these runs. Their submission-compatible PHQ-9 values were derived from ternary-class midpoints (3, 8, and 16) and are not reported as dedicated regression results.

\paragraph{MPDD 2025.}
\phantomsection\label{app:mpdd2025}
The release provides event-level A+V+P features computed over 1-s and 5-s windows. Track~1 contains 337 training and 227 test events and includes binary, ternary, and quinary classification; Track~2 contains 264 training and 264 test events and includes binary and ternary classification.

For each task, the official score is the mean of task Accuracy and task F1. Task Accuracy is itself the mean of ordinary Accuracy and inverse-class-frequency-weighted Accuracy, while task F1 is the mean of Macro-F1 and support-weighted F1. Each track score is the arithmetic mean of its task scores.

\paragraph{E-DAIC.}
\phantomsection\label{app:edaic}
E-DAIC contains 275 participants with the official 163/56/56 train/development/test split. The two local directories named \texttt{waic} and \texttt{waic\_woz} were verified to be duplicate copies of the same release and were therefore not treated as independent datasets. We aligned three participant-level modalities: 23-dimensional OpenSMILE~2.3.0 eGeMAPS audio frames, 49 OpenFace~2.1.0 pose/gaze/action-unit features, and participant transcripts. Audio and visual sequences were resampled to 64 time steps. Their pooled representations concatenated the mean, standard deviation, 25th percentile, median, and 75th percentile. Transcripts were represented using a 128-dimensional unigram/bigram TF--IDF vocabulary fitted only on the official training split.

\paragraph{Kaggle\_public.}
\phantomsection\label{app:kaggle}
\texttt{Kaggle\_public} denotes the media subsets obtained from the Kaggle collection \emph{Multimodal dataset for depression analysis}~\citep{s3programmerlead2025multimodal}. The audio recordings originate from the Toronto Emotional Speech Set (TESS)~\citep{pichorafuller2020tess} and retain its OAF/YAF speaker prefixes. The video and aligned audio--visual recordings originate from the Ryerson Audio-Visual Database of Emotional Speech and Song (RAVDESS)~\citep{livingstone2018ravdess} and retain its seven-field filenames and actor identifiers. The protocols below use subsets of these original corpora.

We used two leakage-controlled protocols. The audio-only protocol contained 399 samples and used two leave-group-out folds: training on YAF and testing on OAF, followed by the reverse direction. Its binary labels follow the archive's \texttt{Normal} and \texttt{Depression} directory grouping of acted speech and are treated as emotion proxies. Audio was resampled to 16~kHz and represented using 20 MFCCs, MFCC deltas, RMS statistics, zero-crossing rate, and duration. After standardization, we fitted Logistic Regression ($C=1$) and an RBF SVM ($C=10$, $\gamma=\texttt{scale}$) with seed~37.

The aligned audio--video protocol contained 240 MP4 files from 24 actors. Audio and frames were extracted from the same MP4 file. Each video was represented by ImageNet-pretrained ResNet-18 embeddings from eight uniformly sampled frames; the corresponding audio was represented by an eight-step, 40-dimensional MFCC+delta sequence. Pooled vectors concatenated the mean, standard deviation, 25th percentile, median, and 75th percentile. Six actor-disjoint folds each held out four actors (40 samples). Classical A+V Logistic Regression and RBF SVM used nested actor-grouped cross-validation for hyperparameter selection. In each fold, Flex-MoE used 16 training actors, four validation actors, and four test actors, with three seeds and the settings in Appendix~\ref{app:mpdd2026_baselines}. The filename-derived target distinguishes neutral (\texttt{01}) from calm (\texttt{02}) affect and is therefore an emotion proxy, not a clinical depression diagnosis.

\paragraph{Mental Health Multimodal.}
\phantomsection\label{app:mental_health}
The tabular dataset contains 4,000 rows and four classes: Healthy, Mild Stress, Moderate Stress, and Severe Stress. Its 21 numeric variables were grouped into questionnaire (5), behavioral (4), visual (4), audio (4), and physiological (4) modalities. We used a fixed class-stratified split of 3,200 training and 800 test rows with \texttt{random\_state=2026}. Because the dataset provides no participant identifier, this is a row-level split and is not described as participant-disjoint. The archive's media files lack a documented row-level key and were consequently not joined to the table.

\FloatBarrier
\subsection{Baseline Implementations}
\label{app:baseline_implementations}

\FloatBarrier
\subsubsection{MPDD-AVG 2026}
\label{app:mpdd2026_baselines}

\paragraph{Direct LLM.}
We used Gemini~2.5~Flash as a zero-shot direct-prediction baseline. When a modality contained a temporal tensor, we first averaged it over time. Features were then concatenated across files and truncated or zero-padded to 256 values. For each modality, the prompt contained only its availability, dimensionality, finite-value ratio, mean, standard deviation, minimum, maximum, quartiles, $\ell_2$ norm, and first 12 values. The four summaries were serialized into a single JSON case bundle. No raw audio, video frame, or feature tensor was submitted to the model. The requested output contained a binary prediction, a ternary prediction, a PHQ-9 estimate, confidence, and a short rationale. Decoding was deterministic (temperature~0), Gemini's thinking budget was disabled, and the output budget was 1,024 tokens. Responses were cached by sample and prompt, and execution was terminated after two API failures.

\paragraph{MDAgents.}
We adapted the open-source MDAgents workflow at \href{https://github.com/mitmedialab/MDAgents/commit/3adbd760ca809b4e7b0c1085d68314b6e7d91e1b}{\texttt{3adbd76}}. GPT-5.5 received the same A+V+G+P feature-summary bundle used by Direct LLM. The adaptive workflow first assigned each case to basic, intermediate, or advanced difficulty and then invoked the corresponding solo or collaborative medical-agent path. In the retained MPDD\_2026 run, 34 cases were assigned to the basic path and 11 to the intermediate path. A strict parser converted the final response to the common binary, ternary, PHQ-9, confidence, and rationale schema. This design matched modality access between the two LLM-based baselines; it did not equate their LLM backbones or inference procedures.

\paragraph{MulT.}
We used the original Multimodal Transformer implementation at \href{https://github.com/yaohungt/Multimodal-Transformer/commit/a670936824ee722c8494fd98d204977a1d663c7a}{\texttt{a670936}}. Its three streams were assigned to personality (the original text branch), audio, and video. Gait was omitted because the original architecture supports three input streams. Track~1 used MFCC audio and DenseNet visual features, whereas Track~2 used MFCC64 audio and DenseNet visual features. Each audio and video sequence was standardized within the sample and linearly resampled to 128 time steps; the 1,024-dimensional personality embedding formed the third stream. We trained four classifiers independently (two tracks $\times$ binary/ternary classification) using the repository defaults: 40 epochs, batch size 24, Adam with learning rate $10^{-3}$, five attention levels, five heads, and seed~1111.

\paragraph{Flex-MoE.}
We used Flex-MoE at \href{https://github.com/UNITES-Lab/flex-moe/commit/21688077488e58c1a634ca7122cffdb28cae3bc9}{\texttt{2168807}} together with FastMoE (commit \texttt{55af4f9}). All A+V+G+P modalities were included. Temporal features were mean-pooled within each file and concatenated across files, and unavailable modalities were replaced by learned missing-modality embeddings. The model used hidden size 128, 16 patches, 16 experts, top-4 routing, one router, one fusion layer, one prediction layer, four attention heads, and dropout 0.5. We minimized cross-entropy plus $0.01$ times the gate-balancing loss using Adam with learning rate $10^{-3}$, batch size 16, five warm-up epochs, and 50 total epochs. Binary and ternary models were trained with seeds 0, 1, and 2. The best checkpoint for each seed was selected by validation accuracy, and test predictions were combined by majority vote.

\FloatBarrier
\subsubsection{Official MPDD 2025 Reproduction}
\label{app:mpdd2025_baseline}

We reproduced all ten tasks from the official \href{https://github.com/hacilab/MPDD}{MPDD repository}.  We retained the official architecture, feature combinations, batch sizes, learning rates, epoch counts, and checkpoint selection criterion (validation weighted F1). One run was performed for each task using the repository seed and \texttt{PYTHONHASHSEED=3407}. Table~\ref{tab:mpdd2025_setup} lists the task-specific settings.

\begin{table}[!htbp]
\caption{Task-specific settings for the official MPDD 2025 reproduction.}
\label{tab:mpdd2025_setup}
\centering
\begingroup
\mrAppendixTableStyle
\begin{tabular}{>{\raggedright\arraybackslash}p{.07\linewidth}>{\raggedright\arraybackslash}p{.14\linewidth}>{\raggedright\arraybackslash}p{.13\linewidth}>{\raggedright\arraybackslash}p{.115\linewidth}>{\centering\arraybackslash}p{.07\linewidth}>{\centering\arraybackslash}p{.065\linewidth}>{\centering\arraybackslash}p{.22\linewidth}>{\centering\arraybackslash}p{\dimexpr.19\linewidth-16\tabcolsep\relax}}
\toprule
\rowcolor{mrAppHeader}
    \textbf{Track} & \textbf{Window/task} & \textbf{Audio} & \textbf{Video} & \shortstack{\textbf{Max.}\\\textbf{length}} & \textbf{Batch} & \textbf{Learning rate} & \textbf{Epochs} \\
    \midrule
\rowcolor{mrAppBand}
    Elder & 1-s binary  & MFCC      & OpenFace & 26 & 1  & $2\times10^{-5}$          & 200 \\
    Elder & 1-s ternary & OpenSMILE & ResNet   & 26 & 2  & $4.5835899\times10^{-6}$  & 400 \\
\rowcolor{mrAppBand}
    Elder & 1-s quinary & OpenSMILE & DenseNet & 26 & 1  & $1.6570181\times10^{-5}$  & 400 \\
    Elder & 5-s binary  & OpenSMILE & ResNet   & 5  & 2  & $1.8\times10^{-5}$        & 200 \\
\rowcolor{mrAppBand}
    Elder & 5-s ternary & wav2vec   & OpenFace & 5  & 16 & $1.4797182\times10^{-4}$  & 400 \\
    Elder & 5-s quinary & MFCC      & ResNet   & 5  & 16 & $9.8\times10^{-5}$        & 200 \\
    \midrule
\rowcolor{mrAppBand}
    Young & 1-s binary  & wav2vec   & OpenFace & 25 & 16 & $6\times10^{-5}$          & 500 \\
    Young & 1-s ternary & MFCC      & DenseNet & 25 & 8  & $2.6\times10^{-4}$        & 500 \\
\rowcolor{mrAppBand}
    Young & 5-s binary  & OpenSMILE & ResNet   & 5  & 24 & $5\times10^{-5}$          & 500 \\
    Young & 5-s ternary & MFCC      & DenseNet & 5  & 8  & $4\times10^{-4}$          & 500 \\
    \bottomrule
\end{tabular}
\endgroup
\end{table}

\paragraph{E-DAIC baselines.}
The baselines comprised Logistic Regression and an RBF SVM for PHQ-8 binary classification; Ridge and an RBF SVR for PHQ-8 score regression; A+V+T MulT; A+V+T Flex-MoE; zero-shot Direct LLM; and the original MDAgents workflow. Classical hyperparameters were selected on the official development set using Macro-F1 for classification and MAE for regression. The selected classical models were then refitted on training plus development data and evaluated once on the test set. MulT used 40 epochs, batch size 24, Adam with learning rate $10^{-3}$, five attention levels, five heads, gradient clipping at 0.8, and seed~1111; checkpoints were selected by development Macro-F1 or MAE. Flex-MoE used the architecture and optimizer described in Appendix~\ref{app:mpdd2026_baselines}, with three seeds and probability averaging for classification or prediction averaging for regression.

\paragraph{Mental Health baselines.}
Logistic Regression and an RBF SVM used standardized early-fusion features, with five-fold stratified model selection on the 3,200-row training partition. Flex-MoE further split this pool into 2,800 training and 400 validation rows, retained the 800-row test set, and averaged class probabilities across three seeds. Direct LLM and the original MDAgents workflow were evaluated zero-shot on the fixed test partition using compact summaries of all five modality groups.

\paragraph{Raw-video MedAgent-Pro.}
For the raw-video baseline, we used MedAgent-Pro at \href{https://github.com/jinlab-imvr/MedAgent-Pro/commit/cea2721795bf748516b98fb959f581b502b1d186}{\texttt{cea2721}}. Six uniformly sampled frames were tiled into a contact-sheet JPEG. A task-level Planner generated four visual indicators. For each case, the VLM Analyzer evaluated each indicator, and the Summary Module reduced its output to Yes/No evidence. The Pro-Decider then produced normalized indicator weights and a decision threshold. Gemini~2.5~Flash was used with deterministic decoding. MedAgent-Pro received sampled video frames; in the aligned A+V experiment, Direct LLM and MDAgents received feature summaries. Comparisons across these input formats reflect the complete evaluated systems and do not isolate the effect of the agent workflow.

\paragraph{MCMoE.}
We include MCMoE~\citep{xu2026mcmoe} with its published ordinal score head. The evaluated runner requires sequence inputs, which are incompatible with the Mental Health tabular interface. Its Kaggle audio--visual runner also lacks a video feature stream. These two configurations are marked as \emph{n/a} in Table~\ref{tab:main-results}.

\paragraph{Unreported configurations.}
Dashes indicate configurations without reported results. The current MulT interface requires modality-specific feature sequences. No evaluated adaptation is available for Mental Health's tabular covariates or the single visual stream in the Kaggle video-only task. Aligned Kaggle audio--visual results also require a two-modality sequence interface that was not implemented in the evaluated runner. No video-only Flex-MoE result is available, and raw-video MDAgents requires a visual-input implementation. The official MPDD-2025 baseline and the MFCC-based Logistic Regression and RBF SVM audio baselines are detailed in Appendices~\ref{app:mpdd2025_baseline} and~\ref{app:kaggle}, respectively.

\FloatBarrier
\subsection{Search Spaces and Run Configurations}
\label{app:shared_controls}

\label{app:run_details}
\paragraph{Run configurations.}
Each run specifies the eligible input sources, task and selection metric, candidate evaluation budget $B$, and enabled review components. Candidate selection uses a designated development split or cross-validation within training data. Subject identifiers define validation groups where available, while Mental Health uses stratified row-level splits as described in Appendix~\ref{app:mental_health}. All recipe--predictor candidates within a task share the checked dataset adapter, validation splits, and selection metric. Review uses candidate rankings, prediction diagnostics, and execution logs. Appendix~\ref{app:rsi_configuration} details initialization and revision budgets, selection settings, and saved run records.

For the reported supplementary classical baselines, the classification search space was Logistic Regression $C\in\{0.01,0.1,1,10\}$ and RBF SVM $C\in\{0.1,1,10\}$ with $\gamma\in\{\texttt{scale},0.01,0.1\}$. Regression used Ridge with $\alpha\in\{0.1,1,10,100\}$ and RBF SVR with $C\in\{0.1,1,10\}$, $\gamma\in\{\texttt{scale},0.01\}$, and $\epsilon\in\{0.1,1\}$. Model selection was confined to the training/development data specified for each benchmark.

All API baselines used zero-shot prompts and deterministic decoding (temperature~0). Prompts used in the later extension experiments removed sample identifiers before submission. Except where specified in Appendix~\ref{app:baseline_implementations}, Direct LLM used three workers, one bounded retry, and at most 256 output tokens. The MPDD-2026 Direct LLM run instead used the 1,024-token limit specified in Appendix~\ref{app:mpdd2026_baselines}. Original-workflow MDAgents retained its basic/intermediate/advanced adaptive routing, with at most 256 tokens per API call and one retry. It used 2 workers for E-DAIC, 12 for Kaggle\_public, and 16 for Mental Health Multimodal. API responses were cached by a hash of the model, prompt, and sample to prevent duplicate billing during reruns.

\FloatBarrier
\subsection{Reproducibility and Compute}
\label{app:compute}

All repository revisions were pinned as reported above. GPU-based neural experiments used PyTorch~2.5.1 and one NVIDIA V100-SXM2 GPU with 32~GB memory. MPDD\_2026 Flex-MoE and the supplementary deep-learning jobs requested eight CPU cores and 64~GB host memory; each MPDD 2025 array task requested four CPU cores and 24~GB host memory. The retained MPDD\_2026 MulT prediction run used CPU, and API baselines did not use a GPU. Seeds were 1111 for MulT, 0/1/2 for Flex-MoE, 3407 for MPDD 2025, 2026 for supplementary classical splits and searches, and 37 for the Kaggle audio experiment.

\clearpage
\section{Theoretical Analysis}
\label{sec:mrtheory-appendix}

In \method{}'s RSI loop, each retained pipeline becomes the reference for subsequent modifications. An unreliable replacement can therefore affect later verification, while a conservative acceptance margin can postpone useful changes. We analyze EGE's inheritance rule to establish how it preserves measured progress and how much performance can be lost by rejecting small gains. The analysis allows proposals to depend on accumulated experience, the expanding candidate space, and revised research memory. We proceed in two stages. Section~\ref{app:rsi_stored_score_guarantees} derives the effect of one verification decision and propagates it through the comparison sequence. Section~\ref{app:rsi_population_improvement} combines that result with a uniform evaluation-error condition to bound population performance.

\FloatBarrier
\subsection{Sequential Inheritance Process}
\label{app:rsi_inheritance_process}

Fix corpus $\mathcal D$, development protocol $\Pi$, and task metric $\mu$, with larger scores preferred. Let $\Theta$ denote the class of complete pipelines that the run can evaluate. For a pipeline $\theta\in\Theta$, write $\widehat S(\theta)=\widehat S_\mu(\theta;\mathcal D)\in\mathbb R$ for its finite development score under the fixed protocol $\Pi$ in~\eqref{eq:joint-evaluation}. Scores are computed once and stored. We consider additive revisions that preserve the eligibility of evaluated pipelines. The incumbent changes only through the verification rule. A source exclusion that invalidates candidates starts a new analysis segment from the reselected valid incumbent.

\begin{mrdefinition}[Sequential inheritance]
\label{def:mrtheory-incumbent}
Let $J_0$ be the first valid reference pipeline. Index valid candidate comparisons by $n=1,2,\ldots$, including comparisons during initialization. Let $\theta_n$ be the candidate at comparison $n$, $\tau_n\geq0$ its acceptance margin, and $J_n$ the pipeline retained after that comparison. The verifier applies
\begin{equation}
J_n=\begin{cases}
\theta_n, & \widehat S(\theta_n)>\widehat S(J_{n-1})+\tau_n,\\
J_{n-1}, & \text{otherwise}.
\end{cases}
\label{eq:mrtheory-rule}
\end{equation}
Failed or invalid evaluations leave the incumbent unchanged and are omitted from this comparison index. Let $\mathcal G_n=\{J_0,\theta_1,\ldots,\theta_n\}$ be the evaluated set, with $\mathcal G_0=\{J_0\}$. Let $\overline\tau_n=\max_{1\leq k\leq n}\tau_k$ be the largest applied margin, with $\overline\tau_0=0$. Let $M_n$ count incumbent replacements through comparison $n$, with $M_0=0$.
\end{mrdefinition}

At the start of review round $t$, let $n_t$ count completed valid comparisons, including initialization. Then the incumbent in the main method is $I_t=J_{n_t}$. This indexing connects candidate-level verification to successive RSI states. The next candidate and margin can depend on all previous outcomes. In particular, updating revision history $\mathcal L_t$ or research memory $\mathcal K_t$ can change the proposal sequence without changing the inheritance rule.

\FloatBarrier
\subsection{Guarantees for Retained Development Scores}
\label{app:rsi_stored_score_guarantees}

\begin{mrproposition}[Retention under adaptive proposals]
\label{prop:mrtheory-margin}
For every sequence of valid comparisons governed by~\eqref{eq:mrtheory-rule}, the following properties hold.
\begin{enumerate}
\item[(i)] Stored incumbent scores satisfy $\widehat S(J_n)\geq\widehat S(J_{n-1})$ for every $n\geq1$.
\item[(ii)] For every $n\geq0$, the gap to the best evaluated pipeline satisfies
\begin{equation}
\boxed{0\leq\max_{\theta\in\mathcal G_n}\widehat S(\theta)-\widehat S(J_n)\leq\overline\tau_n.}
\label{eq:mrtheory-pf-hatbest}
\end{equation}
\item[(iii)] Suppose all stored scores lie in a common interval of width $W>0$. If every margin through comparison $n$ is at least $\tau_{\min}>0$, then $M_n<W/\tau_{\min}$.
\end{enumerate}
\end{mrproposition}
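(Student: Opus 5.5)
All three parts follow directly from the replacement rule~\eqref{eq:mrtheory-rule}, argued comparison by comparison. Nothing about how $\theta_n$ or $\tau_n$ is generated is used, so adaptivity of proposals is harmless.

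For (i), I split on the two branches of the rule. If $J_n=\theta_n$, then $\widehat S(J_n)>\widehat S(J_{n-1})+\tau_n\geq\widehat S(J_{n-1})$ because $\tau_n\geq0$. Otherwise $J_n=J_{n-1}$ and equality holds.

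For (ii), I will prove the two inequalities separately.

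\emph{Lower bound.} I first record that $J_n\in\mathcal G_n$ for every $n$. This follows by induction, since $J_n$ is either $\theta_n\in\mathcal G_n$ or $J_{n-1}\in\mathcal G_{n-1}\subseteq\mathcal G_n$. The maximum over $\mathcal G_n$ is therefore at least $\widehat S(J_n)$.

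\emph{Upper bound.} I will prove by induction on $n$ that $\widehat S(\theta)\leq\widehat S(J_n)+\overline\tau_n$ for all $\theta\in\mathcal G_n$. The base case $n=0$ is trivial, since $\mathcal G_0=\{J_0\}$ and $\overline\tau_0=0$.

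For the inductive step, first take $\theta\in\mathcal G_{n-1}$. By the hypothesis and (i), $\widehat S(\theta)\leq\widehat S(J_{n-1})+\overline\tau_{n-1}\leq\widehat S(J_n)+\overline\tau_n$, using that $\overline\tau$ is nondecreasing.

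It remains to handle the new candidate $\theta_n$:
\begin{itemize}
\item If it was accepted, then $J_n=\theta_n$ and the bound is immediate.
\item If it was rejected, the rule gives $\widehat S(\theta_n)\leq\widehat S(J_{n-1})+\tau_n=\widehat S(J_n)+\tau_n\leq\widehat S(J_n)+\overline\tau_n$.
\end{itemize}
Monotonicity of the incumbent from (i) is the key ingredient that lets earlier bounds transfer to the current incumbent.

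For (iii), each replacement at comparison $k$ raises the stored incumbent score by more than $\tau_k\geq\tau_{\min}$, and non-replacements leave it unchanged. Telescoping over comparisons $1,\dots,n$ gives $\widehat S(J_n)-\widehat S(J_0)>M_n\tau_{\min}$ whenever $M_n\geq1$. Both scores lie in an interval of width $W$, so the left side is at most $W$, hence $M_n<W/\tau_{\min}$. The case $M_n=0$ holds trivially since $W>0$.

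The only delicate points are bookkeeping rather than real obstacles:
\begin{itemize}
\item The strict inequality in (iii) relies on the strictness of the acceptance rule.
\item Invalid evaluations must be excluded from the index, as Definition~\ref{def:mrtheory-incumbent} does.
\item The margin $\tau_n$ is applied against $J_{n-1}$, which I take as given in the definition.
\end{itemize}
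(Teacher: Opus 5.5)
Your proof is correct and follows the paper's argument in substance. It uses the same accept/reject case split for (i), an induction on the gap bound $\max_{\theta\in\mathcal G_n}\widehat S(\theta)-\widehat S(J_n)\leq\overline\tau_n$ for (ii), and the same telescoping of strict acceptance gains (including the trivial $M_n=0$ case) for (iii). The only difference is bookkeeping in (ii): you carry the invariant elementwise and use the monotonicity from (i) to move the bounds on members of $\mathcal G_{n-1}$ forward, whereas the paper tracks the scalar gap $d_n$ through the exact update $d_n=\max\{d_{n-1},g_n\}-[\widehat S(J_n)-\widehat S(J_{n-1})]$ to get $d_n\leq\max\{d_{n-1},\tau_n\}$, and the two routes are equivalent.
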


\begin{proof}
\textbf{Step 1: monotonicity of the retained score.} Fix a comparison $n\geq1$. If the candidate is accepted,~\eqref{eq:mrtheory-rule} gives
\begin{equation*}
\widehat S(J_n)=\widehat S(\theta_n)
>\widehat S(J_{n-1})+\tau_n
\geq\widehat S(J_{n-1}),
\end{equation*}
where the last inequality uses $\tau_n\geq0$. If the candidate is rejected, the rule gives $J_n=J_{n-1}$, so $\widehat S(J_n)-\widehat S(J_{n-1})=0$. Thus each comparison leaves the retained score unchanged or increases it. Applying this conclusion at comparisons $1,\ldots,n$ proves part (i):
\begin{equation}
\widehat S(J_0)\leq\widehat S(J_1)\leq\cdots\leq\widehat S(J_n).
\label{eq:mrtheory-pf-monotone}
\end{equation}

\textbf{Step 2: recurrence for the score gap.} To prove part (ii), define $d_n$ as the difference between the best evaluated score and the retained score after comparison $n$:
\begin{equation}
d_n=\max_{\theta\in\mathcal G_n}\widehat S(\theta)-\widehat S(J_n).
\label{eq:mrtheory-gap-definition}
\end{equation}
The retained pipeline belongs to $\mathcal G_n$, so $d_n\geq0$. At comparison $n\geq1$, let $g_n=\widehat S(\theta_n)-\widehat S(J_{n-1})$ denote the candidate's measured gain. Rearranging these two definitions gives $\max_{\theta\in\mathcal G_{n-1}}\widehat S(\theta)=\widehat S(J_{n-1})+d_{n-1}$ and $\widehat S(\theta_n)=\widehat S(J_{n-1})+g_n$. Since $\mathcal G_n=\mathcal G_{n-1}\cup\{\theta_n\}$, the best evaluated score updates as
\begin{align*}
\max_{\theta\in\mathcal G_n}\widehat S(\theta)
&=\max\!\left\{\max_{\theta\in\mathcal G_{n-1}}\widehat S(\theta),\widehat S(\theta_n)\right\}\\
&=\max\!\left\{\widehat S(J_{n-1})+d_{n-1},\widehat S(J_{n-1})+g_n\right\}\\
&=\widehat S(J_{n-1})+\max\{d_{n-1},g_n\}.
\end{align*}
The second equality substitutes the previous gap and the candidate gain. The third extracts the common incumbent score from both arguments of the maximum. Substituting this expression into~\eqref{eq:mrtheory-gap-definition} gives a common update for both verification outcomes:
\begin{equation}
\begin{aligned}
d_n
&=\widehat S(J_{n-1})+\max\{d_{n-1},g_n\}-\widehat S(J_n)\\
&=\max\{d_{n-1},g_n\}-\bigl[\widehat S(J_n)-\widehat S(J_{n-1})\bigr].
\end{aligned}
\label{eq:mrtheory-gap-update}
\end{equation}
If the candidate is accepted, $J_n=\theta_n$ and $g_n>\tau_n\geq0$. The bracket in~\eqref{eq:mrtheory-gap-update} therefore equals $g_n$, yielding
\begin{align*}
d_n
&=\max\{d_{n-1},g_n\}-g_n\\
&=\max\{d_{n-1}-g_n,g_n-g_n\}\\
&=\max\{d_{n-1}-g_n,0\}\\
&\leq\max\{d_{n-1}-\tau_n,0\} \tag{acceptance condition}\\
&\leq\max\{d_{n-1},0\}=d_{n-1}.
\end{align*}
Subtracting $g_n$ from a maximum subtracts it from both arguments. The first inequality uses $g_n>\tau_n$ and monotonicity of the maximum. The second uses $\tau_n\geq0$, and the final equality uses $d_{n-1}\geq0$. If the candidate is instead rejected, $J_n=J_{n-1}$ and $g_n\leq\tau_n$. The bracket in~\eqref{eq:mrtheory-gap-update} is now zero, so
\begin{align*}
d_n
&=\max\{d_{n-1},g_n\}-0\\
&=\max\{d_{n-1},g_n\}\\
&\leq\max\{d_{n-1},\tau_n\}.
\end{align*}
The last inequality substitutes $g_n\leq\tau_n$ into the second argument of the maximum. In the accepted case, we also have $d_n\leq d_{n-1}\leq\max\{d_{n-1},\tau_n\}$. Both outcomes therefore give
\begin{equation}
d_n\leq\max\{d_{n-1},\tau_n\}.
\label{eq:mrtheory-gap-recursion}
\end{equation}

\textbf{Step 3: propagation through the comparison sequence.} We apply~\eqref{eq:mrtheory-gap-recursion} inductively. For the base case, $\mathcal G_0=\{J_0\}$ gives
\begin{equation*}
d_0=\max_{\theta\in\{J_0\}}\widehat S(\theta)-\widehat S(J_0)
=\widehat S(J_0)-\widehat S(J_0)=0=\overline\tau_0.
\end{equation*}
Suppose $d_{n-1}\leq\overline\tau_{n-1}$ for some $n\geq1$. The largest applied margin satisfies $\overline\tau_n=\max\{\overline\tau_{n-1},\tau_n\}$. Using the recurrence and then the induction hypothesis yields
\begin{align*}
0\leq d_n
&\leq\max\{d_{n-1},\tau_n\}\\
&\leq\max\{\overline\tau_{n-1},\tau_n\} \tag{induction hypothesis}\\
&=\overline\tau_n.
\end{align*}
This proves $0\leq d_n\leq\overline\tau_n$ for every $n\geq0$. Substituting the definition of $d_n$ gives part (ii):
\begin{equation*}
0\leq\max_{\theta\in\mathcal G_n}\widehat S(\theta)-\widehat S(J_n)\leq\overline\tau_n.
\end{equation*}

\textbf{Step 4: number of replacements.} To establish part (iii), we sum the gains over accepted replacements. The claim holds when $M_n=0$ because $W/\tau_{\min}>0$. For $M_n\geq1$, let $k_1<\cdots<k_{M_n}$ be the comparison indices at which replacements occur. Denote the successive retained pipelines by $J^{(0)}=J_0$ and $J^{(m)}=J_{k_m}$ for $m=1,\ldots,M_n$. Rejections between these indices leave the pipeline unchanged, so $J_{k_m-1}=J^{(m-1)}$. Acceptance at $k_m$ gives $J^{(m)}=\theta_{k_m}$. Substituting both identities into the score difference yields
\begin{align*}
\widehat S(J^{(m)})-\widehat S(J^{(m-1)})
&=\widehat S(\theta_{k_m})-\widehat S(J_{k_m-1})\\
&>\tau_{k_m}\geq\tau_{\min}.
\end{align*}
Since $J^{(M_n)}=J_n$, summing over all replacements gives
\begin{align*}
\widehat S(J_n)-\widehat S(J_0)
&=\widehat S(J^{(M_n)})-\widehat S(J^{(0)})\\
&=\sum_{m=1}^{M_n}\bigl[\widehat S(J^{(m)})-\widehat S(J^{(m-1)})\bigr] \tag{telescoping}\\
&>\sum_{m=1}^{M_n}\tau_{k_m} \tag{acceptance rule}\\
&\geq\sum_{m=1}^{M_n}\tau_{\min}\\
&=M_n\tau_{\min}.
\end{align*}
The telescoping sum cancels each intermediate retained score. The strict inequality sums the acceptance conditions, and the next inequality uses $\tau_{k_m}\geq\tau_{\min}$. Write the common score interval as $[a,a+W]$, where $a\in\mathbb R$ is its lower endpoint. Then $\widehat S(J_n)\leq a+W$ and $\widehat S(J_0)\geq a$, so
\begin{equation*}
M_n\tau_{\min}<\widehat S(J_n)-\widehat S(J_0)
\leq(a+W)-a=W.
\end{equation*}
Dividing by the positive quantity $\tau_{\min}$ gives $M_n<W/\tau_{\min}$, proving part (iii).
\end{proof}

\FloatBarrier
\subsection{Population Performance under Evaluation Error}
\label{app:rsi_population_improvement}

Proposition~\ref{prop:mrtheory-margin} establishes properties of stored development scores. Relating inherited changes to population performance additionally requires control of evaluation error. Let $S(\theta)\in\mathbb R$ denote the population score of pipeline $\theta$ under the same task metric and evaluation target.

\begin{mrassumption}[Uniform evaluation accuracy]
\label{ass:mrtheory-score}
For an error tolerance $\eta\geq0$, assume the event
\begin{equation}
\mathcal U_\eta=\Bigl\{\sup_{\theta\in\Theta}\bigl|\widehat S(\theta)-S(\theta)\bigr|\leq\eta\Bigr\}
\label{eq:mrtheory-uniform}
\end{equation}
holds. The bound covers every pipeline the run can evaluate, including pipelines proposed after observing previous results.
\end{mrassumption}

\begin{mrproposition}[Population consequences of inheritance]
\label{prop:mrtheory-population}
On $\mathcal U_\eta$, the following properties hold simultaneously for all comparisons.
\begin{enumerate}
\item[(i)] For every $n\geq0$,
\begin{equation}
S(J_n)\geq\max_{\theta\in\mathcal G_n}S(\theta)-2\eta-\overline\tau_n.
\label{eq:mrtheory-population-gap}
\end{equation}
\item[(ii)] Every accepted candidate $\theta_n$ satisfies
\begin{equation}
S(\theta_n)-S(J_{n-1})>\tau_n-2\eta.
\label{eq:mrtheory-population-gain}
\end{equation}
\end{enumerate}
Consequently, if $\tau_n\geq2\eta$ at every accepted comparison, every replacement strictly improves the population score. The retained population scores are then nondecreasing.
\end{mrproposition}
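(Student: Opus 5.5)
The plan is to derive both parts directly from the stored-score guarantees in Proposition~\ref{prop:mrtheory-margin}, using Assumption~\ref{ass:mrtheory-score} to pass between $\widehat S$ and $S$. The key point is that Assumption~\ref{ass:mrtheory-score} holds uniformly over $\Theta$. On $\mathcal U_\eta$ it therefore applies at once to every pipeline in every evaluated set $\mathcal G_n$, including adaptively proposed candidates. Consequently no union bound over comparisons or dependence on the proposal mechanism is needed, and the conclusions hold simultaneously for all $n$.

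For part (i), fix $n\geq0$ and let $\theta^\star\in\arg\max_{\theta\in\mathcal G_n}S(\theta)$; the set is finite, so the maximizer exists. Three inequalities chain together:
\begin{equation*}
S(J_n)\geq\widehat S(J_n)-\eta,\qquad
\widehat S(J_n)\geq\max_{\theta\in\mathcal G_n}\widehat S(\theta)-\overline\tau_n\geq\widehat S(\theta^\star)-\overline\tau_n,\qquad
\widehat S(\theta^\star)\geq S(\theta^\star)-\eta.
\end{equation*}
The first and third follow from $\mathcal U_\eta$, since $J_n,\theta^\star\in\mathcal G_n\subseteq\Theta$. The middle one is part (ii) of Proposition~\ref{prop:mrtheory-margin}. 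Combining them gives \eqref{eq:mrtheory-population-gap}.

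For part (ii), suppose $\theta_n$ is accepted. The rule \eqref{eq:mrtheory-rule} then gives $\widehat S(\theta_n)-\widehat S(J_{n-1})>\tau_n$. Write
\begin{equation*}
S(\theta_n)-S(J_{n-1})=\bigl[\widehat S(\theta_n)-\widehat S(J_{n-1})\bigr]+\bigl[S(\theta_n)-\widehat S(\theta_n)\bigr]-\bigl[S(J_{n-1})-\widehat S(J_{n-1})\bigr].
\end{equation*}
The first bracket exceeds $\tau_n$, and on $\mathcal U_\eta$ the second and third error terms are each bounded by $\eta$ in absolute value. This yields \eqref{eq:mrtheory-population-gain}.

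For the consequence, if $\tau_n\geq2\eta$ at every accepted comparison, then part (ii) gives $S(J_n)-S(J_{n-1})=S(\theta_n)-S(J_{n-1})>0$ at each replacement, since $J_n=\theta_n$. At each rejection $J_n=J_{n-1}$, so the population score is unchanged. Hence $S(J_n)$ is nondecreasing in $n$, with strict increase at replacements.

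There is no substantive obstacle here; the argument is bookkeeping on top of the earlier proposition. The one point that needs care is justifying that the error bound applies to adaptively chosen $\theta_n$. That is exactly why Assumption~\ref{ass:mrtheory-score} is stated as a supremum over $\Theta$ and not pointwise. I would also note explicitly that the strict inequality in (ii) is inherited from the strict acceptance rule.
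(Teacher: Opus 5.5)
Your proof is correct and matches the paper's route essentially step for step. In part (i) you choose a population maximizer in $\mathcal G_n$ and bound its gap to $J_n$ by two $\eta$ error terms plus the stored-score gap from Proposition~\ref{prop:mrtheory-margin}(ii); in part (ii) you combine the two one-sided error bounds with the strict acceptance inequality, and your closing remark that the supremum over $\Theta$ handles adaptively chosen candidates is the same point the paper makes after its proof.
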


\begin{proof}
\textbf{Step 1: population gap of the retained pipeline.} Fix a realization on which $\mathcal U_\eta$ holds. Assumption~\ref{ass:mrtheory-score} gives $|S(\theta)-\widehat S(\theta)|\leq\eta$ for every $\theta\in\Theta$. Writing the absolute-value bound as $-\eta\leq S(\theta)-\widehat S(\theta)\leq\eta$ and adding $\widehat S(\theta)$ to each term yields
\begin{equation}
\widehat S(\theta)-\eta\leq S(\theta)\leq\widehat S(\theta)+\eta.
\label{eq:mrtheory-error-interval}
\end{equation}
To bound the retained pipeline's population gap, fix $n\geq0$ and choose $\theta_n^\star\in\operatorname*{arg\,max}_{\theta\in\mathcal G_n}S(\theta)$. Such a maximizer exists because $\mathcal G_n$ is finite and nonempty. The error interval gives $S(\theta_n^\star)-\widehat S(\theta_n^\star)\leq\eta$ and $\widehat S(J_n)-S(J_n)\leq\eta$. Adding and subtracting the two stored scores makes both error terms explicit:
\begin{align*}
\max_{\theta\in\mathcal G_n}S(\theta)-S(J_n)
&=S(\theta_n^\star)-S(J_n)\\
&=\bigl[S(\theta_n^\star)-\widehat S(\theta_n^\star)\bigr]\\
&\quad+\bigl[\widehat S(\theta_n^\star)-\widehat S(J_n)\bigr]\\
&\quad+\bigl[\widehat S(J_n)-S(J_n)\bigr]\\
&\leq\eta+\bigl[\widehat S(\theta_n^\star)-\widehat S(J_n)\bigr]+\eta \tag{uniform error}\\
&=2\eta+\widehat S(\theta_n^\star)-\widehat S(J_n)\\
&\leq2\eta+\max_{\theta\in\mathcal G_n}\widehat S(\theta)-\widehat S(J_n)\\
&=2\eta+d_n\\
&\leq2\eta+\overline\tau_n. \tag{retained-score bound}
\end{align*}
The first inequality bounds the two error terms by $\eta$ each. The second uses $\theta_n^\star\in\mathcal G_n$. The next equality is the gap definition~\eqref{eq:mrtheory-gap-definition}, and the final inequality applies Proposition~\ref{prop:mrtheory-margin}(ii). Moving $S(J_n)$ to the right and $2\eta+\overline\tau_n$ to the left gives part (i):
\begin{equation*}
S(J_n)\geq\max_{\theta\in\mathcal G_n}S(\theta)-2\eta-\overline\tau_n.
\end{equation*}

\textbf{Step 2: population gain of an accepted replacement.} To obtain part (ii), suppose $\theta_n$ is accepted at comparison $n$. The lower endpoint of~\eqref{eq:mrtheory-error-interval} bounds the candidate score. For the previous incumbent, negating the upper endpoint reverses its inequality. Thus
\begin{equation*}
S(\theta_n)\geq\widehat S(\theta_n)-\eta,
\qquad
-S(J_{n-1})\geq-\widehat S(J_{n-1})-\eta.
\end{equation*}
Adding these bounds and then applying the strict acceptance condition gives
\begin{align*}
S(\theta_n)-S(J_{n-1})
&\geq\bigl[\widehat S(\theta_n)-\eta\bigr]+\bigl[-\widehat S(J_{n-1})-\eta\bigr]\\
&=\widehat S(\theta_n)-\widehat S(J_{n-1})-2\eta\\
&>\bigl[\widehat S(J_{n-1})+\tau_n\bigr]-\widehat S(J_{n-1})-2\eta\\
&=\tau_n-2\eta.
\end{align*}
The strict inequality substitutes $\widehat S(\theta_n)>\widehat S(J_{n-1})+\tau_n$. The final equality cancels the previous incumbent's stored score and proves part (ii). If $\tau_n\geq2\eta$, acceptance also gives $J_n=\theta_n$, so
\begin{equation*}
S(J_n)-S(J_{n-1})
=S(\theta_n)-S(J_{n-1})
>\tau_n-2\eta\geq2\eta-2\eta=0.
\end{equation*}
Each accepted replacement therefore strictly improves the population score. Rejection gives $J_n=J_{n-1}$ and hence $S(J_n)-S(J_{n-1})=0$. Applying these two cases along the comparison sequence proves that the retained population scores are nondecreasing under the stated margin condition.
\end{proof}

The event $\mathcal U_\eta$ controls every pipeline in $\Theta$ at once. On that event, the proof applies to every adaptively chosen candidate and every comparison index. If $\mathcal U_\eta$ holds with probability at least $1-\delta$ for $\delta\in(0,1)$, both population conclusions hold simultaneously with at least that probability. At review boundary $t$, substituting $n=n_t$ and $I_t=J_{n_t}$ transfers the conclusions to the inherited pipeline. No additional probability bound is needed for each round.

\FloatBarrier
\subsection{Implications and Scope}
\label{app:rsi_theory_implications}

Substituting $I_t=J_{n_t}$ applies the results to the pipeline inherited by each RSI successor. A rejected candidate's score can exceed the incumbent's by at most the applied margin at rejection. Later inherited scores can only increase, so repeated rejections do not accumulate a separate loss at every round. The largest margin enters the bound once, even when experience and memory change later proposals. With zero margins, the verifier retains a maximum-score pipeline among those evaluated and preserves the incumbent on ties. The classic one-standard-error rule instead selects a simpler candidate within one standard error of the best score. It can select a lower-scoring pipeline and is a different selection procedure from~\eqref{eq:mrtheory-rule} with $\tau_n=0$.

\paragraph{Margin choice and scope.} EGE sets the margin using the incumbent's subject-bootstrap score variation, as defined in~\eqref{eq:method-retention}. The one-standard-error setting in Table~\ref{tab:selection-rule} uses 1,000 resamples of whole subjects and their saved predictions. This estimates subject-sampling variation on the selection set. It does not establish the uniform error bound $\eta$ or the condition $\tau_n\geq2\eta$. Monotonicity and the stored-score gap bound apply to finite real scores, including negative RMSE. The replacement-count bound additionally requires a finite score interval. These results characterize verification and pipeline inheritance along the realized search. The effect of research-memory updates on the quality of future proposals is an empirical property of the improver.

\clearpage
\section{Implementation Details}
\label{app:method_details}

\FloatBarrier
\subsection{Adapter Construction and Execution Checks}
\label{app:method_execution}

\textbf{Adapter construction.} GSC's inspection report records available sources, file schemas, subject--label links, feature shapes, and value statistics. The input specification separates predictive observations from measurements used to define the target. The first checked adapter output fixes ordered subject IDs, row IDs, and targets for each split. Subsequent recipe outputs preserve this ordering while allowing feature widths to change. Execution checks detect incompatible shapes, nonfinite values, and record mismatches. Recipe probes compare configurations expected to yield different feature widths. Failed checks guide adapter repair. Passing code is retained and checked before reuse.

\FloatBarrier
\subsection{Review and Revision Workflow}
\label{app:rsi_schedule}

\phantomsection
\label{sec:method:audit}
\textbf{Experience records and role packets.} Each attempted recipe--predictor pair records its configuration, predictions, score, runtime, and any execution error. Candidate fingerprints identify attempted pairs so additive revisions skip repeated evaluations. Shared review context contains the incumbent, measured ranking, prediction diagnostics, search coverage, run constraints, and remaining budget. Planner packets expose the current recipe and predictor spaces. Engineer packets contain adapter information, feature widths, and loading errors. Reviewer packets contain evaluated candidates and prediction diagnostics. Roles conduct bounded read-only checks. The Chair receives their reports together with the authoritative ranking, diagnostics, and search coverage. Active research memory is supplied directly to the roles and Chair when enabled. A programmatic check rejects packets containing test fields.

\textbf{Revision targets and history reuse.} Executable targets comprise recipe-space additions, predictor-space additions, and supported source exclusions. Recipe additions use operations supported by the checked adapter. A verified exclusion removes the source and invalidates dependent scored candidates, including an affected incumbent. The system then reselects among valid candidates. LLM weights, role definitions, the adapter implementation, labels, and evaluation protocol remain fixed. Each round records the Chair's request, previous incumbent, execution status, selected candidate, replacement outcome, and evaluation count. Execution failures also enter the round record. All review roles receive earlier records. The Chair receives a history summary and is instructed to avoid repeating unsuccessful requests. The next proposer receives the request, current validated candidate, run constraints, and active memory when enabled. Full reports and tool checks are stored separately as review minutes.

The development loop separates proposing an experiment from evaluating its outcome and retaining its consequences. Initialization checks the adapter and evaluates the initial recipe--predictor combinations. The resulting incumbent $I_0$ and experimental feedback $\mathcal{E}_0$ provide the first review context. Revision history $\mathcal{L}_0$ and research memory $\mathcal{K}_0$ start empty. Table~\ref{tab:rsi_execution} gives the order within each subsequent round. The candidate space $\mathcal{Q}_t$ persists across rounds. Valid additions remain available even when the incumbent is unchanged.

\begin{table}[!htbp]
\caption{Execution order of a memory-enabled improvement round. Learning also records nonpromoting and failed revisions.}
\label{tab:rsi_execution}
\centering
\begingroup
\mrAppendixTableStyle
\begin{tabular}{>{\raggedright\arraybackslash}p{0.16\linewidth}>{\raggedright\arraybackslash}p{\dimexpr0.84\linewidth-4\tabcolsep\relax}}
\toprule
\rowcolor{mrAppHeader}
\textbf{Stage} & \textbf{Operation and retained consequence} \\
\midrule
\rowcolor{mrAppBand}
\textbf{Review} & Roles analyze current feedback, prior outcomes, and active memory. The Chair issues a revision request or ends review. \\
\textbf{Construct} & The proposer turns a supported request into recipe or predictor additions. Source exclusions remove dependent candidates before reselection. \\
\rowcolor{mrAppBand}
\textbf{Evaluate} & CPE evaluates untried pairs under the fixed protocol and remaining candidate budget. Each trial records predictions, its score, or an execution error. \\
\textbf{Verify} & The configured selection rule determines incumbent replacement. The round record retains the request and measured outcome. \\
\rowcolor{mrAppBand}
\textbf{Learn} & Distillation proposes conditional lessons. Reconciliation adds, revises, or retires memory entries using the recorded evidence. \\
\textbf{Re-enter} & The next round receives the expanded space, retained pipeline, revision history, and updated memory. The loop ends on Chair acceptance or a budget limit. \\
\bottomrule
\end{tabular}
\endgroup
\end{table}

\textbf{Replacement and learning decisions.} Pipeline replacement uses the measured candidate score and the configured acceptance rule. Memory revision uses the completed round record and supporting experimental evidence. An unsuccessful candidate can therefore contribute a lesson without replacing the incumbent. Terminal review outcomes also pass through memory processing before the run ends. The next round reads only active memory entries. Retired entries remain in the audit record.

\FloatBarrier
\subsection{Research-Memory Implementation}
\label{app:research_memory}

The memory implementation follows the distillation and reconciliation cycle of RSIAgent~\citep{zhu2026rsiagent}. It maintains a structured bank within one dataset, task, and run. The bank records its scope, active entries, evidence, and update history. Each lesson identifies its applicable conditions and a proposed next experiment. Table~\ref{tab:research_memory_schema} summarizes the stored fields.

\begin{table}[!htbp]
\caption{Research-memory fields. Numeric evidence remains attached to the underlying execution records.}
\label{tab:research_memory_schema}
\centering
\begingroup
\mrAppendixTableStyle
\begin{tabular}{>{\raggedright\arraybackslash}p{0.25\linewidth}>{\raggedright\arraybackslash}p{\dimexpr0.75\linewidth-4\tabcolsep\relax}}
\toprule
\rowcolor{mrAppHeader}
\textbf{Field} & \textbf{Meaning} \\
\midrule
\rowcolor{mrAppBand}
\texttt{id} & Stable identity used to revise or retire an existing entry. \\
\texttt{condition} & Data and pipeline conditions under which the lesson applies. \\
\rowcolor{mrAppBand}
\texttt{lesson} & An interpretation of the recorded experiment, marked as model synthesis. \\
\texttt{next\_action} & A proposed experiment for extending or checking the interpretation. \\
\rowcolor{mrAppBand}
\texttt{evidence\_ids} & References to measured candidate records, round outcomes, or diagnostics. \\
\texttt{status}, \texttt{versions} & Active or retired status, with the previous contents retained after each update. \\
\bottomrule
\end{tabular}
\endgroup
\end{table}

\textbf{Evidence selection.} The learning packet includes the previous and current incumbents when their records remain available. It also includes up to eight highest-scoring and four lowest-scoring valid candidates from the current round. Up to four execution failures supply failure evidence. Duplicate candidate identifiers collapse to a single record. The packet includes the round outcome, run constraints, and prediction diagnostics. Candidate evidence contains configurations, validity, scores, feature widths, and errors. Memory processing reuses these records without additional candidate evaluations.

\textbf{Distillation.} The distiller receives active memory and the current evidence packet. It returns at most four lesson drafts in a structured response. Each draft contains a condition, interpretation, next action, and one to six evidence identifiers. At least one identifier must refer to the current packet. The prompt distinguishes execution failure, valid nonpromotion, and incumbent replacement. It also requests a follow-up comparison for unresolved hypotheses. Each text field is limited to 600 characters. The requested output limit is 2,200 tokens.

\textbf{Reconciliation.} A second LLM call compares the drafts with active memory. It returns at most eight operations with an output limit of 1,600 tokens. An addition creates a new entry from a draft. A revision replaces an entry's contents and accumulates its evidence references. A retirement removes an entry from active context while preserving its versions. Revision and retirement operations must identify an existing active entry. Each target can appear only once in an update. An empty operation list leaves memory unchanged.

\textbf{Validation and persistence.} Programmatic checks validate response fields, evidence identifiers, target identities, and the active-memory limit. The default bank admits at most 16 active entries. A malformed operation rejects the complete update and retains the previous bank. Failed updates receive an explicit status. Successful updates are written atomically before becoming available to the next round. Each active entry exposes its latest three evidence references in the prompt. The persistent record retains the full reference history. Current execution evidence and run constraints take precedence over a conflicting memory interpretation.

\FloatBarrier
\subsection{Budget Accounting and Run Records}
\label{app:rsi_configuration}

\textbf{Candidate budget.} In review-enabled runs, the command-line initialization budget is $B_{\mathrm{init}}$. The reserve fraction $\rho$ adds evaluations for subsequent revisions. The total candidate cap is
\begin{equation}
B=B_{\mathrm{init}}+\left\lceil\rho B_{\mathrm{init}}\right\rceil.
\label{eq:app-rsi-budget}
\end{equation}
Here $\rho$ defaults to $0.25$ in the common development runner. Initialization can use at most $B_{\mathrm{init}}$ evaluations before review begins. Later rounds share the remaining total budget. With review disabled, the initial search can use the full cap $B$. The ledger counts attempted candidate evaluations, including failed evaluations. Distillation and reconciliation add at most two LLM calls per review iteration. Their API usage is recorded separately from candidate evaluations and model fits.

\textbf{Selection configuration.} The \texttt{incumbent} selection mode implements the sequential replacement rule in~\eqref{eq:method-retention}. Its uncertainty multiplier is configured through \texttt{promotion\_se}. The bootstrap estimate uses 1,000 subject resamples and retains all records of each sampled subject. The implementation also supports \texttt{one\_se\_of\_best} selection and an optional terminal reliability gate. Run configurations identify these choices separately. Final ensemble aggregation is an additional evaluation setting. Comparisons must specify both the in-loop selection rule and the final prediction rule.

\textbf{Memory configuration and artifacts.} Setting \texttt{research\_memory=evolving} enables both learning stages. Setting \texttt{research\_memory=off} retains the history-only loop. The default remains \texttt{off} for existing configurations. The common \texttt{run\_corpus} and MPDD-2025 \texttt{run\_method} development paths support the extension. The separate MPDD-2026 \texttt{run\_agent} path has not integrated memory learning. Each enabled run writes \texttt{research\_memory.json} with entry versions, evidence, operations, and update status. Round records identify the memory version used for the decision. Review minutes and the execution trajectory retain the associated reports and API usage.

\textbf{Experimental configurations.} Research-memory learning is disabled in the five-seed main benchmark and module ablations. The evolving-memory arms of the 54-run comparison and eight-round RSI study enable distillation and reconciliation, as detailed in Appendices~\ref{app:rsi_trajectories} and~\ref{app:rsi-eight-round}.

\clearpage
\subsection{System prompts}
\label{app:prompts}

This section lists the prompts behind every LLM call of the reported runs, read verbatim from the source of the code version used for the five-seed main results (Appendix~\ref{app:seed_study}); both base models receive the same text. User messages carry run-specific evidence, such as the dataset inspection, candidate scores and earlier decisions, serialized as JSON. The MPDD-2026 head proposer carries its instructions in the user message, so its template is shown with angle-bracketed placeholders for the run data. Braced fields in the corpus recipe prompt (\texttt{\{aggs\}}, \texttt{\{menu\}}) are filled with the corpus's aggregations and extractor menu, and the two \texttt{\%s} fields of the investigation clause with the list of read-only checks and the check budget. Prompts of configurations not reported in this paper (a single-agent reviewer, an operation agent, a modality prior and legacy scripts) are omitted.

\Needspace{10\baselineskip}
\paragraph{Interface construction.}

\begin{casebox}{interface}{Benchmark metric (corpora without a fixed selection metric)\mrpromptsource{merid/run\_method.py EVAL\_METRIC\_SYSTEM}}{lst:prompt-merid-run-method-py-eval-metric-system-benchmark-metric-corpora-without-a-fixed-selection-metric}
You are handed a clinical multimodal dataset. Determine WHAT THE PUBLISHED LITERATURE ON THIS CORPUS REPORTS as its headline evaluation metric. This is a question about the benchmark's convention — not about your preference, and not about what would make a model look good.
It matters because the metric a search OPTIMISES must be the metric the work is JUDGED by. Optimising an accuracy-flavoured score on a corpus with heavy class imbalance selects models that predict the base rate, which then lose on the balanced metric the field reports.
YOU MUST SEARCH before answering. The convention is a published fact about a specific benchmark, and recall of it is not reliable: asked about one corpus four times this step returned two different metrics, and the majority-dominated one then collapsed a 5-class task onto the majority label while still looking fine on the metric it had been chosen by. Find the paper or challenge page that defines the corpus and read what its results table reports. Begin with `SEARCHED: yes` only if the search actually returned something you could read. If it returned nothing usable - no results, a quota or tool limit, a fetch you were not allowed to make - then begin with `SEARCHED: no` and answer from recall. Both are acceptable answers; what is not acceptable is reporting a search you could not complete as though it had confirmed something, because an unverified answer is kept for the current run only and asked again next time, and that distinction is what the first word tells us. Answer on the FIRST line exactly: `METRIC: <name> - <why the field reports this one>` where <name> is one of the OPTIONS listed in the prompt. You may not define a metric of your own; note that `official_composite` means the challenge score ((acc+bal_acc)/2 + (macro_f1+weighted_f1)/2)/2 that some of these corpora publish as their headline column - name it when that IS what the corpus reports, and do not name it merely because it looks comprehensive. name one of the options given. Note on the regression options: they are written so that HIGHER IS BETTER, so an error metric appears negated (neg_rmse). Naming it still means the field reports RMSE - the sign is only the search convention.
\end{casebox}
\begin{casebox}{interface}{Feature-recipe space, corpus entry\mrpromptsource{merid/run\_corpus.py RECIPE\_SYSTEM}}{lst:prompt-merid-run-corpus-py-recipe-system-feature-recipe-space-corpus-entry}
You DESIGN the candidate feature-recipe SPACE for a SMALL-SAMPLE (p>>n) clinical multimodal task. Reason from what the inspection shows ACTUALLY EXISTS on this disk, then output ONLY a JSON list of candidate recipes.
A recipe is a flat dict. Each key is a SOURCE you will ask the loader for; its value is the choice for that source, or null to drop it. One reserved key: `agg`, one of {aggs}, saying how a variable-length series becomes one fixed-length vector.
{menu}WHERE THE NAMES COME FROM. If the corpus ships extracted features, the source names are the extractor directories, verbatim - do not invent names and do not carry names over from another corpus. If it ships RAW RECORDINGS, there are no directories to read, so you name the representations yourself (for example an acoustic key whose values are different MFCC or log-mel settings); the loader will be told to implement exactly the names you use here, so keep them descriptive and stable.
WHAT A GOOD SPACE COVERS. The point is to span the plausible range, not to bet on one end of it. Include each source ALONE (that is how the search learns which one carries the signal, and on the corpora seen so far the answer has differed between tasks on the SAME data), the combinations worth trying, both a high-dimensional and a low-dimensional option where the corpus offers both - at small n a several-thousand-column source often contributes variance rather than signal, and a twenty-column one can be too coarse - and at least one temporal-dynamics (meanstd_delta) candidate. SOURCES THAT DEFINE THE LABEL RATHER THAN PREDICT IT: some candidates are not evidence about the outcome, they ARE the instrument the outcome was assigned with - a questionnaire the diagnosis was made from, a score the classes were thresholded out of. Include one and the search returns a near-perfect number that measures the label's own definition rather than any ability to detect it. Say which of your sources you believe sit UPSTREAM of the label and why, and make sure the space still contains recipes that answer the question WITHOUT them, so both readings are available. Where the sources are COLUMN GROUPS of one table, cover every column between them and include one recipe with all groups on: a group left out of the space can never be found, and on one corpus the omitted group was the one carrying the signal. Include 8-14 DISTINCT recipes; a leak-free search picks the winner, so propose a good diverse SPACE rather than one guess. Output ONLY the JSON list.
\end{casebox}
\begin{casebox}{interface}{Feature-recipe space, MPDD-2025 entry\mrpromptsource{merid/run\_method.py RECIPE\_SPACE\_SYSTEM}}{lst:prompt-merid-run-method-py-recipe-space-system-feature-recipe-space-mpdd-2025-entry}
You DESIGN the candidate feature-recipe SPACE for a SMALL-SAMPLE (p>>n) multimodal depression task. Reason from the available extractors + their content stats and the grounding, then output ONLY a JSON list of candidate recipes. Each recipe is a dict: {"audio": "wav2vec"|"opensmile"|"mfccs"|null, "video": "densenet"|"resnet"|"openface"|null, "agg": "meanstd"|"meanstd_minmax"|"meanstd_p25p75"|"meanstd_delta", "ind_emb": bool, "meta": bool}. meta = structured personality/context (big5+family+disease, the 'P'); ind_emb = the individualEmbedding vector; null audio/video DROPS that modality; 'meanstd_delta' adds frame-to-frame dynamics. COVER modality-combination ablations (audio-only, video-only, personality-only, A+V, A+V+P) AND the extractor/aggregation choices worth testing — including at least one TEMPORAL-DYNAMICS ('meanstd_delta') candidate, since plain mean/std discards how the signal evolves. Include 8-14 DISTINCT recipes. An OOF search picks the winner — propose a good, diverse SPACE, not one guess. Output ONLY the JSON list.
\end{casebox}
\begin{casebox}{interface}{Classifier space\mrpromptsource{merid/run\_method.py CLF\_SPACE\_SYSTEM}}{lst:prompt-merid-run-method-py-clf-space-system-classifier-space}
You DESIGN the candidate CLASSIFIER space for a SMALL-SAMPLE (p>>n) multimodal depression task. Reason from the data regime and the grounding, then output ONLY a JSON list of candidate scikit-learn configs. Each config: {"model": "svc_linear"|"svc_rbf"|"logreg"|"gb"|"rf"|"et", "k": 32|64|128|256 (SelectKBest features), "C": 0.03..1.0 (svc/logreg), "lr": 0.03..0.1 (gb), "depth": 1..3 (gb)}. Span model FAMILIES and regularisation strengths appropriate for p>>n at n~100-350. Include 8-14 DISTINCT configs. A robust repeated-CV OOF search picks the winner — propose a good, diverse SPACE, not one guess. Output ONLY the JSON list.
\end{casebox}
\begin{casebox}{interface}{Regressor space\mrpromptsource{merid/run\_method.py REG\_SPACE\_SYSTEM}}{lst:prompt-merid-run-method-py-reg-space-system-regressor-space}
You DESIGN the candidate REGRESSOR space for a SMALL-SAMPLE (p>>n) multimodal depression-severity task: predict a continuous clinical score, not a class. Reason from the data regime and the grounding, then output ONLY a JSON list of candidate scikit-learn configs. Each config: {"model": "ridge"|"svr_linear"|"svr_rbf"|"gbr"|"rfr"|"etr", "k": 32|64|128|256 (SelectKBest by f_regression), "alpha": 1..300 (ridge), "C": 0.03..3.0 (svr), "eps": 0.5..3.0 (svr), "lr": 0.03..0.1 (gbr), "depth": 1..3 (gbr)}. THE REGIME IS THE WHOLE PROBLEM HERE. At n~150-350 with p in the hundreds or thousands, an under-regularised fit interpolates the training scores and returns a NEGATIVE out-of-sample R2 — worse than predicting the training mean for every subject. The published baselines on this task sit exactly there (R2 from -0.08 to -0.46, CCC within +/-0.05 of zero), so a space that leans toward strong shrinkage is not timidity, it is the only region where anything beats a constant. Span shrinkage strengths across at least two orders of magnitude, and include at least one very heavily regularised candidate. Include 8-14 DISTINCT configs. Output ONLY the JSON list.
\end{casebox}
\begin{casebox}{interface}{Model-code generation decision\mrpromptsource{merid/run\_method.py CODEGEN\_SYSTEM}}{lst:prompt-merid-run-method-py-codegen-system-model-code-generation-decision}
You decide whether the STANDARD scikit-learn families (linear/RBF SVM, logistic regression, gradient boosting, random forest, extra trees — each behind VarianceThreshold + StandardScaler + SelectKBest) are enough for the observed regime, or whether a method they cannot express would suit it better.
Answer on the FIRST line: `CODEGEN: no — <why the standard families already cover this regime>` or `CODEGEN: yes — <the specific property of THIS regime they cannot express>`.
Only if yes: output ONE ```python``` block defining `build()` that returns a fresh, UNFITTED scikit-learn-compatible estimator (needs .fit(X, y) and .predict(X)). Compose it from sklearn/numpy only — Pipeline, feature selection, decomposition (PLS/PCA), calibration, custom BaseEstimator, ensembling. It must be self-contained, take no arguments, and be cheap enough for repeated CV. Do NOT reach for a deep network: n is a few hundred.
Whatever you produce will be scored by the SAME leak-free repeated per-subject OOF as every standard candidate and only used if it actually wins — so propose it because the regime calls for it, not to look sophisticated. Saying `no` is a perfectly good answer.
\end{casebox}
\begin{casebox}{interface}{Data adapter derivation and repair\mrpromptsource{merid/adapter\_agent.py ADAPTER\_SYSTEM}}{lst:prompt-merid-adapter-agent-py-adapter-system-data-adapter-derivation-and-repair}
Follow the task-specific corpus schema and recipe contract supplied in the task. The MPDD layout and source names below are examples for MPDD tasks only; never impose them on another corpus. Implement a named pretrained encoder only using that actual encoder or its verified precomputed features; reject unsupported recipes instead of returning a proxy under the original encoder's name. You are an ML data engineer for a SMALL-SAMPLE (n~100-350) p>>n multimodal depression task. Write ONE Python function `load(recipe=None)` that reads the dataset into a fixed dict interface. CRITICAL — you do NOT choose the feature recipe. An OUTER loop calls load() repeatedly with different `recipe` dicts and picks the best by leak-free per-subject OOF. Implement the given recipe FAITHFULLY and DETERMINISTICALLY (same recipe -> byte-identical features every call). No 'vibes' choices, no picking extractors yourself. The `recipe` dict keys:   audio: 'wav2vec' | 'opensmile' | 'mfccs' | None -> read that Audio/<ext> dir, or DROP audio if None;   video: 'resnet' | 'densenet' | 'openface' | None -> read that Visual/<ext> dir, or DROP video if None;     (at least one of audio/video/ind_emb/meta is always present; skip a modality whose recipe value is None);   agg: per-segment frame aggregate (apply the SAME agg to audio and video), one of:        'meanstd' -> concat[mean,std];        'meanstd_minmax' -> concat[mean,std,min,max];        'meanstd_p25p75' -> concat[mean,std,25th percentile,75th percentile];        'meanstd_delta' -> concat[mean,std,mean(|frame-to-frame diff|)] (captures TEMPORAL DYNAMICS;          for a 1-frame segment the delta block is zeros);   ind_emb: bool -> if True, append the per-subject individualEmbedding vector to each segment;   meta: bool -> if True, append a STRUCTURED per-subject metadata vector parsed from     personalized_train.json (train) / personalized_test.json (test) — each a dict keyed by subject id     whose entry has big5_traits, family_factors, disease. Build meta = concat[ big5_traits values in the     fixed order (Extraversion,Agreeableness,Openness,Neuroticism,Conscientiousness),     family_factors['Financial_Stress'], family_factors['Family_Members'], float(disease) ] = 8 dims. This     is the personality/context signal (the 'P' in A+V+G+P). Missing subject -> zeros; same 8-dim layout for train & test. If recipe is None, default to {'audio':'wav2vec','video':'densenet','agg':'meanstd','ind_emb':True,'meta':False} (the simplest reliable recipe — used only for the smoke test; the outer loop then calls the full grid). CASE-INSENSITIVE dir match: the extractor subdir may be cased differently than the recipe value (e.g. recipe video 'resnet' can be the directory 'ResNet'); resolve the actual subdir by listing the parent and matching the recipe name case-insensitively, so EVERY recipe value loads a non-empty array. Build train_X/test_X = concat[ audio_agg (if audio not None), video_agg (if video not None), (individualEmbedding if ind_emb), (meta 8-dim if meta) ], 2D [n,p], SAME p for train and test. Use ONLY numpy, os, glob, json. `load(recipe)` MUST return a dict with keys: train_X (float [n,p]), one int label vector per task the SPEC asks for (e.g. train_binary, and train_ternary only where that task exists), test_X (float [m,p]), test_ids (list [m]), and ALWAYS train_subject (subject id per train segment) for grouped CV. If the test set has labels, ALSO return test_binary/test_ternary aligned to test_ids. Include train_quinary/test_quinary (5-class) ONLY if a quinary/penta label exists; include train_phq/test_phq ONLY if PHQ exists. Output ONLY one ```python``` block.
\end{casebox}

\Needspace{10\baselineskip}
\paragraph{Search.}
\begin{casebox}{search}{MPDD-2025 method selection (advisory)\mrpromptsource{merid/registry.py select\_method}}{lst:prompt-merid-registry-py-select-method-mpdd-2025-method-selection-advisory}
You are an AutoML research agent. Select one method from the executable catalog for this dataset profile. The invoking runner has explicitly declared which implementations it can execute. Reason from the observed regime and return the chosen method NAME alone on the FINAL line. Choose only an available catalog name.
\end{casebox}
\begin{casebox}{search}{MPDD-2026 head proposer: system prompt\mrpromptsource{merid/mpdd2026\_joint\_search.py HeadProposer.propose}}{lst:prompt-merid-mpdd2026-joint-search-py-headproposer-propose-mpdd-2026-head-proposer-system-prompt}
You propose the next unseen MPDD2026 experiment.
\end{casebox}
\begin{casebox}{search}{MPDD-2026 head proposer: user-message template\mrpromptsource{merid/mpdd2026\_joint\_search.py HeadProposer.propose}}{lst:prompt-merid-mpdd2026-joint-search-py-headproposer-propose-mpdd-2026-head-proposer-user-message-template}
Propose the NEXT NEW component experiment to evaluate, not the current winner. Previously evaluated components are already cached: never return a choice in HISTORY. The objective is the joint track=(best binary Macro-F1 + raw PHQ CCC + best hierarchical ternary Macro-F1)/3. Binary/PHQ component_score sums two metrics while hierarchical component_score is one metric; their raw scores are not directly comparable across families. Seek expected improvement to the cached joint score by exploring an unseen, affordable configuration. Both families can improve the result; do not keep proposing the family with the larger current sum. binary_rule selects where the binary cut on the predicted PHQ score comes from: "oof_tuned" fits it on the development out-of-fold, "label_cut" uses the cut this corpus's binary label is defined by. Both are seeded and already measured; propose a value for it deliberately rather than copying the seed. In gp_stats_pz, audio/video extractor aliases do not add audio/video feature blocks; however extractor-dependent pair availability/counts can still differ. Do not assume feature equivalence from mode names alone. The fixed metric contract is <metric contract>. Candidate families are binary_phq with parameters <binary_phq parameter grid>, or hierarchical_ternary with parameters <hierarchical_ternary parameter grid>. [only when axes are held fixed: The remaining axes are HELD FIXED for this run and must be returned exactly as given: <held-fixed axes>. A proposal that moves one is rejected. ]Use the shared fixed-fold development protocol. Output the complete JSON envelope {"head_family":...,"parameters":...} on the final nonempty line.
PROFILE:
<profile JSON>
HISTORY:
<history JSON>
\end{casebox}

\Needspace{10\baselineskip}
\paragraph{Review.}
\begin{casebox}{review}{Planner\mrpromptsource{merid/retrospective.py ROLE\_SYSTEM['planner']}}{lst:prompt-merid-retrospective-py-role-system-planner-planner}
You are the planner in a post-run review of an automated ML pipeline, and the decisions under review are your own: which metric this corpus is judged by, which feature sources exist and how they were grouped, which model families were worth trying, whether a custom method was needed. You are shown those decisions and the reasoning you gave for them.
Report, in at most 8 short lines: what you assumed, and which of those assumptions the run could have invalidated. Be specific about anything you flagged at the time and want checked — a source you suspected of sitting upstream of the label, a modality you expected to matter. Do not state a fact about the data from memory when you can check it.
THEN FINISH WITH ONE LINE BEGINNING 'NEXT:'. Name the one direction this search has not tried that you would spend the remaining budget on, drawn from the allowed sources and how they are grouped into recipes - a source combination absent from the candidates that scored well, a grouping you did not think of when you wrote the space. The review exists to make the NEXT round of candidates better, not only to catch mistakes, and a review that can only report that nothing is wrong has done half its job. You are shown what the search already covered: which model families were scored and how each did, which allowed families were never reached, which sources appear in the candidates that scored well, and whether the last round moved the incumbent. Be concrete - what to add - and give the number in your evidence that makes you expect it to beat the incumbent. The addition is scored on the same venue under the same promotion rule, so being wrong costs evaluations and retains the incumbent; the thing to avoid is not a wrong guess but an ungrounded one. If nothing in the evidence supports a direction, write 'NEXT: none' and say why. Never propose reading held-out data, changing the metric, the labels, the partition, or restoring an excluded source.
SUPPLIED SELECTION CONTRACT GOVERNS THIS REVIEW. Follow the supplied selection/revision policy and immutable source, task and grouping constraints. When it specifies strict improvement with ties retaining the incumbent, use that policy. Apply a one-standard-error rule only when the supplied contract explicitly specifies it AND computed standard-error evidence is supplied. If no selection rule or uncertainty computation is supplied, say it is unavailable; do not assume one was used. Do not invent standard errors, confidence intervals, statistical ties or significance from aggregate score differences.
REPRESENTATION SEMANTICS MATTER. Temporal sequences and already aggregated/tabular vectors are different input contracts. Temporal aggregation settings may intentionally have no effect on raw tabular vectors; do not demand synthetic temporal variation. Equal widths do not imply equal arrays. Measured byte equivalence establishes equality on the inspected train/dev arrays only; it does not identify a loader error, its cause or equivalence on future inputs. Check the declared source representation and operation semantics before alleging a violation. Distinguish an intentional equivalent candidate from a verified ignored operation and from an unresolved hypothesis. Respect exclusions as experimental conditions.
\end{casebox}
\begin{casebox}{review}{Engineer\mrpromptsource{merid/retrospective.py ROLE\_SYSTEM['engineer']}}{lst:prompt-merid-retrospective-py-role-system-engineer-engineer}
You are the data engineer in a post-run review of an automated ML pipeline. You are shown what happened when the plan met the actual files: how many attempts the loader needed and what each failed on, which sources you refused and why, and the feature width every recipe returned.
Report, in at most 8 short lines: which parts of the plan turned out to be unimplementable, and which require investigation against the declared representation contract: constant columns, a source that loads but produces nothing that varies, or an operation whose measured output contradicts its documented semantics. Establish whether inputs are temporal sequences or already aggregated per-patient/tabular vectors before judging an aggregation setting. Identical widths are not evidence of identical values or a broken loader. A suspicion you could have checked and did not is worth less than one you checked.
THEN FINISH WITH ONE LINE BEGINNING 'NEXT:'. Name the one direction this search has not tried that you would spend the remaining budget on, drawn from the representations and operations the loaded data actually supports - an aggregation or transform that is implementable on these arrays and was never scored. Do not propose one the adapter cannot execute. The review exists to make the NEXT round of candidates better, not only to catch mistakes, and a review that can only report that nothing is wrong has done half its job. You are shown what the search already covered: which model families were scored and how each did, which allowed families were never reached, which sources appear in the candidates that scored well, and whether the last round moved the incumbent. Be concrete - what to add - and give the number in your evidence that makes you expect it to beat the incumbent. The addition is scored on the same venue under the same promotion rule, so being wrong costs evaluations and retains the incumbent; the thing to avoid is not a wrong guess but an ungrounded one. If nothing in the evidence supports a direction, write 'NEXT: none' and say why. Never propose reading held-out data, changing the metric, the labels, the partition, or restoring an excluded source.
SUPPLIED SELECTION CONTRACT GOVERNS THIS REVIEW. Follow the supplied selection/revision policy and immutable source, task and grouping constraints. When it specifies strict improvement with ties retaining the incumbent, use that policy. Apply a one-standard-error rule only when the supplied contract explicitly specifies it AND computed standard-error evidence is supplied. If no selection rule or uncertainty computation is supplied, say it is unavailable; do not assume one was used. Do not invent standard errors, confidence intervals, statistical ties or significance from aggregate score differences.
REPRESENTATION SEMANTICS MATTER. Temporal sequences and already aggregated/tabular vectors are different input contracts. Temporal aggregation settings may intentionally have no effect on raw tabular vectors; do not demand synthetic temporal variation. Equal widths do not imply equal arrays. Measured byte equivalence establishes equality on the inspected train/dev arrays only; it does not identify a loader error, its cause or equivalence on future inputs. Check the declared source representation and operation semantics before alleging a violation. Distinguish an intentional equivalent candidate from a verified ignored operation and from an unresolved hypothesis. Respect exclusions as experimental conditions.
\end{casebox}
\begin{casebox}{review}{Reviewer\mrpromptsource{merid/retrospective.py ROLE\_SYSTEM['reviewer']}}{lst:prompt-merid-retrospective-py-role-system-reviewer-reviewer}
You are the reviewer in a post-run review of an automated ML pipeline. You are shown the search results measured on the SELECTION data only — the test set has not been read.
Report, in at most 8 short lines, what the numbers say about the SEARCH rather than about the corpus: the measured score differences and the winner's difference from the supplied training-fitted trivial reference on the same venue. Discuss statistical uncertainty only when its computation and results are supplied. Investigate whether a near-perfect score suggests the winning source is not evidence about the label but the instrument the label was defined with, and whether the winner's predictions collapse onto one class. A winner that rests on a verified target-defining source is disqualified, not a winner: report the best candidate WITHOUT it as the finding. Name the specific numbers you rely on.
THEN FINISH WITH ONE LINE BEGINNING 'NEXT:'. Name the one direction this search has not tried that you would spend the remaining budget on, drawn from the model families and hyper-parameter ranges in the allowed list - a family in that list that was never scored, or a range the top candidates sit at the edge of, which is the usual sign a search stopped at its own boundary. The review exists to make the NEXT round of candidates better, not only to catch mistakes, and a review that can only report that nothing is wrong has done half its job. You are shown what the search already covered: which model families were scored and how each did, which allowed families were never reached, which sources appear in the candidates that scored well, and whether the last round moved the incumbent. Be concrete - what to add - and give the number in your evidence that makes you expect it to beat the incumbent. The addition is scored on the same venue under the same promotion rule, so being wrong costs evaluations and retains the incumbent; the thing to avoid is not a wrong guess but an ungrounded one. If nothing in the evidence supports a direction, write 'NEXT: none' and say why. Never propose reading held-out data, changing the metric, the labels, the partition, or restoring an excluded source.
SUPPLIED SELECTION CONTRACT GOVERNS THIS REVIEW. Follow the supplied selection/revision policy and immutable source, task and grouping constraints. When it specifies strict improvement with ties retaining the incumbent, use that policy. Apply a one-standard-error rule only when the supplied contract explicitly specifies it AND computed standard-error evidence is supplied. If no selection rule or uncertainty computation is supplied, say it is unavailable; do not assume one was used. Do not invent standard errors, confidence intervals, statistical ties or significance from aggregate score differences.
REPRESENTATION SEMANTICS MATTER. Temporal sequences and already aggregated/tabular vectors are different input contracts. Temporal aggregation settings may intentionally have no effect on raw tabular vectors; do not demand synthetic temporal variation. Equal widths do not imply equal arrays. Measured byte equivalence establishes equality on the inspected train/dev arrays only; it does not identify a loader error, its cause or equivalence on future inputs. Check the declared source representation and operation semantics before alleging a violation. Distinguish an intentional equivalent candidate from a verified ignored operation and from an unresolved hypothesis. Respect exclusions as experimental conditions.
\end{casebox}
\begin{casebox}{review}{Chair\mrpromptsource{merid/retrospective.py CHAIR\_SYSTEM}}{lst:prompt-merid-retrospective-py-chair-system-chair}
You chair a post-run review of an automated ML pipeline. Three participants have reported, each from evidence the others did not see: the planner (what was decided and why), the engineer (what happened when that plan met the files, including any checks it ran), the reviewer (what the finished search looks like on the selection data). The test set has NOT been read and you are not shown it.
Your job is to investigate contradictions between the accounts against the supplied run contract. A contradiction can expose a mistaken assumption, but does not itself identify an implementation error. A suspected target-defining source requires checking provenance and task semantics; its score alone does not establish leakage or its absence.
WHAT TARGET-DEFINING MEANS, EXACTLY. A source is target-defining when the label was COMPUTED from it: the questionnaire or rating scale whose total or items the label thresholds (PHQ-8/9, HAM-D, GAD-7 when it defines the label), a clinician diagnosis the label copies, a column derived from the label. What a participant said, sounded like, looked like or how their brain or body signalled - interview transcripts, speech, video, EEG, wearables - is BEHAVIOURAL EVIDENCE, even when clinicians consult it and even when it predicts well. It is upstream of the diagnosis the way symptoms are; it is not the instrument. On E-DAIC the transcript is evidence and PHQ-8 is the label; on MODMA the PHQ-9 column is the label and the recordings are evidence. Calling evidence target-defining removes the modality the study exists to test, so name a source only when you can say which instrument, item or formula produced the label from it.
WHERE THE CALLER SUPPLIES THE LOADER'S FIELD LIST, IT SETTLES WHAT A SOURCE CONTRIBUTES. The constraints name every literal field the executed loader reads. A record may also contain scale totals or the label itself; that is not evidence the loader read them, and it is not grounds to exclude. If you object to a source, name the field in that list you object to. If the field you would object to is absent from the list, the objection does not apply to this run and you must say so rather than exclude on the possibility.
WHERE THE CALLER SUPPLIES A MEASURED LABEL PROVENANCE, IT OUTRANKS IMPRESSION. That line reports how much of the label a threshold rule on a single column, or on a simple total, reproduces on training rows. If it names a column and cut points, the instrument and the formula are established and exclusion is mandatory. If it reports that no column or total reproduces the label, you may NOT write that the label is thresholded from that source - to exclude anyway you must name a mechanism the measurement does not cover (a subset of items, a nonlinear scoring rule, a documented derivation someone actually read) and say which participant established it.
ONCE VERIFIED, A TARGET-DEFINING SOURCE IS DISQUALIFYING, NOT CONFIRMING. If the provenance or task semantics establish that a source is the instrument, items or criterion the label was derived from, the verdict is REVISE: exclude_source naming that source - whatever it scores, and however wide its margin. Such a source is not a strong signal about the label; it is the label, and a pipeline that wins with it has measured nothing. A verdict of ACCEPT that describes the winning source as target-defining is a contradiction; do not write one. This holds whether or not the source sits in the current winner: while it remains in the space it can enter any candidate, so exclude it as long as it is present - 'it never entered the winner' is not a reason to keep it.
If earlier rounds are shown, say what changed since and whether the revision worked; do not repeat a revision that has already been tried and did not help.
Answer on the FIRST line, exactly one of:
  ACCEPT — <why the decisions stand>
  REVISE: <one of: exclude_source | eval_metric | recipe_space | adapter | clf_space> — <what is wrong>
exclude_source is the SMALL step: name the source and it is removed from the frozen space while the loader is kept - no re-proposal, no new adapter. Prefer it whenever the problem is one source. recipe_space re-proposes the whole space from scratch and forces a new loader; use it only when the space as a whole is wrong. Write the exclusion as: REVISE: exclude_source — <source name>: <why>
Revisions trigger measured comparisons. TWO KINDS OF REVISION ARE PERMITTED AND THEY ARE HELD TO DIFFERENT STANDARDS.
  A DEFECT revision - exclude_source, adapter, eval_metric - requires verified evidence: an established target-defining source, a verified operation-contract violation. A close score difference, equal feature widths or byte-identical arrays alone is not a verified implementation failure, and a suspicion is not a defect.
  AN OPPORTUNITY revision - clf_space or recipe_space - does NOT require a defect. Nothing being wrong and the search being incomplete are different findings, and one run can be both. Order one when the participants' NEXT lines name a concrete unexplored direction and you can point to the number that makes it plausible: a model family in the allowed list that was never scored, a hyper-parameter range the top candidates sit at the edge of, a source combination absent from the candidates that scored well. It is additive and safe to be wrong about - the new candidates are scored on the same venue under the same promotion rule, so a poor direction costs evaluations and retains the incumbent; it cannot lower what this run reports. Prefer clf_space, which the existing loader scores directly. Use recipe_space only when the direction is a source combination the current recipes do not contain, and keep it to sources the loader already reads. Do NOT order one when the remaining evaluation budget shown in the constraints leaves no room, when no participant could name a grounded direction, or when an earlier round already tried it.
A run where nothing is broken and budget remains is the ORDINARY case for an opportunity revision, not a reason to ACCEPT. Reserve ACCEPT for a search whose remaining directions are exhausted, unaffordable, already tried, or named by nobody - and say which of those it was. Apply the caller's actual acceptance policy; do not introduce a different tie rule, metric or threshold.
DIAGNOSTIC HYPOTHESES, NOT ESTABLISHED CAUSES. A source scoring below a trivial predictor can reflect sampling noise, class imbalance, model mismatch, distribution changes or an extraction error. Check these possibilities; the score alone does not identify a cause. Before discussing an unused modality, check whether it is deliberately excluded, unavailable, unmatched to patients, or outside prediction-time evidence. Its absence does not establish a space or loader defect. Do not ask to restore deliberately excluded sources. After that line, give the note to pass to whichever step is revised: what it should do differently, in its own terms. For an opportunity revision that note IS the next candidate: name the family, range or source combination to add and the observed number behind it, so the step receiving it proposes that rather than re-deriving a space from the same profile. Keep the whole answer under 220 words. Your LAST line must be exactly     TARGET-DEFINING SOURCES IN SPACE: <comma-separated source names, or none>
listing every source still in the space that any participant established as the instrument, items or criterion the label was derived from. Whether such a source won, lost, was capped or never ran makes no difference to that line: it is about what is in the space, and a named source is removed.
SUPPLIED SELECTION CONTRACT GOVERNS THIS REVIEW. Follow the supplied selection/revision policy and immutable source, task and grouping constraints. When it specifies strict improvement with ties retaining the incumbent, use that policy. Apply a one-standard-error rule only when the supplied contract explicitly specifies it AND computed standard-error evidence is supplied. If no selection rule or uncertainty computation is supplied, say it is unavailable; do not assume one was used. Do not invent standard errors, confidence intervals, statistical ties or significance from aggregate score differences.
REPRESENTATION SEMANTICS MATTER. Temporal sequences and already aggregated/tabular vectors are different input contracts. Temporal aggregation settings may intentionally have no effect on raw tabular vectors; do not demand synthetic temporal variation. Equal widths do not imply equal arrays. Measured byte equivalence establishes equality on the inspected train/dev arrays only; it does not identify a loader error, its cause or equivalence on future inputs. Check the declared source representation and operation semantics before alleging a violation. Distinguish an intentional equivalent candidate from a verified ignored operation and from an unresolved hypothesis. Respect exclusions as experimental conditions.
\end{casebox}
\begin{casebox}{review}{Revision request after a non-revision verdict (eight-round RSI protocol)\mrpromptsource{merid/retrospective.py REVISION\_REQUIRED}}{lst:prompt-merid-retrospective-py-revision-required-revision-request-after-a-non-revision-verdict-eight-round-rsi-protocol}
THIS IS A FIXED-HORIZON SELF-IMPROVEMENT RUN AND REVISION BUDGET REMAINS: %s.
In this run ACCEPT does not end the search and is not an available verdict while budget remains: every round tries the most promising revision the evidence supports. Choose the single best opportunity revision - or exclude_source, if a verified target-defining source is still in the space - and state it in the required shape: the FIRST line exactly 'REVISE: <clf_space|recipe_space|exclude_source> — <what to add or remove, and the observed number behind it>'; then the note; the LAST line 'TARGET-DEFINING SOURCES IN SPACE: <names, or none>'. The incumbent is replaced only by a measured strict improvement on the same venue, so a direction that does not help costs evaluations and cannot lower what this run reports. Do not repeat a revision an earlier round already tried.
\end{casebox}

\Needspace{10\baselineskip}
\paragraph{Research memory.}
\begin{casebox}{memory}{Distill\mrpromptsource{merid/research\_memory.py DISTILL\_SYSTEM}}{lst:prompt-merid-research-memory-py-distill-system-distill}
You distill research experience after a development experiment.
Return only JSON: {"drafts": [{"id": "d1", "condition": "...", "lesson": "...",
"next_action": "...", "evidence_ids": ["r1:outcome"]}]}.
Produce at most 4 concise drafts, or an empty list when no useful lesson is supported.
Use only supplied evidence IDs, including at least one current-round ID per draft.
State the applicable recipe/model/data conditions. Separate observed outcomes from hypotheses.
Treat failed execution, a valid nonpromoting candidate, and incumbent replacement as different outcomes.
A rejected candidate can suggest another experiment. It does not prove its component is useless.
Aggregate scores do not establish causal effects or clinical validity. A promotion is a development
selection outcome, not independent final performance. Suggest a discriminating next experiment.
The fixed run constraints and executed numeric records take precedence over prior lessons or reports.
Do not include participant data, invent measurements, or change labels, splits, or metric.
Each text field must have at most 600 characters. Evidence IDs must be copied exactly.
\end{casebox}
\begin{casebox}{memory}{Reconcile\mrpromptsource{merid/research\_memory.py RECONCILE\_SYSTEM}}{lst:prompt-merid-research-memory-py-reconcile-system-reconcile}
You reconcile a persistent research memory with newly distilled experience.
Return only JSON: {"operations": [{"action": "add", "draft": "d1"},
{"action": "revise", "target": "m0001", "draft": "d2"},
{"action": "retire", "target": "m0002", "draft": "d3"}]}.
Use at most 8 operations. Return an empty list for a no-op. These are schema examples, not requests.
Add a distinct useful lesson. Revise an existing lesson to qualify its conditions, integrate new
evidence, or correct a contradiction. Retire a superseded or inapplicable lesson with draft evidence.
Prefer revising a related lesson over accumulating duplicates. Review existing entries for conflicts.
Use only supplied draft IDs and active memory IDs. Do not rewrite drafts or numeric evidence.
Each target may occur only once. Keep the active entry count within the supplied capacity.
Memory lessons are conditional model syntheses, subordinate to executed evidence and run constraints.
\end{casebox}

\Needspace{10\baselineskip}
\paragraph{Appended clauses.}
\begin{casebox}{clause}{Investigation clause, appended to a review role when read-only checks are available\mrpromptsource{merid/retrospective.py \_INVESTIGATE}}{lst:prompt-merid-retrospective-py-investigate-investigation-clause-appended-to-a-review-role-when-read-only-checks-are-available}
BEFORE REPORTING YOU MAY INVESTIGATE. Emit a single line of the form
    CHECK: <tool>(<argument>)
and you will be shown the result, then you may check again or report. Available tools:
%s
You have %d checks. Use them to settle a suspicion you would otherwise have to hedge about — a width that looks wrong, a column you are not sure exists, a number you are recalling rather than reading. When you are ready, answer with your report and no CHECK line. Every tool reads the selection data only; none of them can reach the test set, so do not ask.
\end{casebox}
\begin{casebox}{clause}{Web-search clause, appended to the interface-construction prompts when search is available\mrpromptsource{merid/retrieval.py OPTIONAL\_SEARCH\_CLAUSE}}{lst:prompt-merid-retrieval-py-optional-search-clause-web-search-clause-appended-to-the-interface-construction-prompts-when-search-is-available}
You may search the web (web_search / web_fetch), but SEARCHING IS OPTIONAL AND USUALLY UNNECESSARY. The observed statistics in front of you often settle the question outright — when they do, DECIDE FROM THE DATA and do not search. Search only where the observation genuinely leaves you uncertain: something you cannot read off the numbers (e.g. how practitioners handle an extractor whose values span this range, or whether a method family suits this n-vs-p regime). Never search to confirm what you can already see, and never let a search override what the data plainly shows — a popular recommendation does not beat a measured statistic from THIS dataset. Begin your reply with one line: `SEARCHED: yes — <what you needed and could not observe>` or `SEARCHED: no — <what in the observation settled it>`. If you searched, cite the URLs.
\end{casebox}

}

\end{document}